\renewcommand*{\backrefalt}[4]{%
    \ifcase #1 \footnotesize{(Not cited.)}%
    \or        \footnotesize{(Cited on page~#2)}%
    \else      \footnotesize{(Cited on pages~#2)}%
    \fi}
\newcommand\ev[1]{\ensuremath{\left \langle #1 \right \rangle}}
\newcommand{\sqn}[1]{{\left\lVert#1\right\rVert}^2}
\newcommand{\norm}[1]{\left\| #1 \right\|}
\newcommand{\br}[1]{\left( #1 \right)}
\newcommand{\cD}{\mathcal{D}}
\newcommand{\cE}{\mathcal{E}}
\newcommand{\sumin}{\sum_{i=1}^n}
\newcommand{\avein}{\frac{1}{n}\sum_{i=1}^n}
\newcommand{\EE}{\mathbb{E}}
\newcommand{\BB}{\mathbb{B}}
\newcommand{\RR}{\mathbb{R}}
\newcommand{\cC}{{\cal C}}
\newcommand{\cX}{{\cal X}}
\newcommand{\del}[1]{}
\newcommand{\eqdef}{\stackrel{\text{def}}{=}}
\newcommand{\R}{\mathbb{R}}
\newcommand{\xglb}{x_\mathrm{global}}
\newcommand{\xavg}{x^\mathrm{avg}}
\newenvironment{customthm}[1]
  {\innercustomthm}
  {\endinnercustomthm}
\newenvironment{customprop}[1]
  {\innercustomprop}
  {\endinnercustomprop}
\DeclareMathOperator*{\argmin}{argmin}
\newtheorem{lemma}{Lemma}
\newtheorem{theorem}{Theorem}
\newtheorem{proposition}{Proposition}
\newtheorem{corollary}{Corollary}
\theoremstyle{plain}
\theoremstyle{definition}
\newcommand\numberlessfootnote[1]{%
  \begingroup
  \renewcommand\thefootnote{}\footnote{#1}%
  \addtocounter{footnote}{-1}%
  \endgroup
}
\title{\textbf{FLIX: A Simple and Communication-Efficient Alternative to Local Methods in Federated Learning} }
\author{Elnur Gasanov\\KAUST \and Ahmed Khaled\\Princeton University \and Samuel Horv\'{a}th\\KAUST \and Peter Richt\'{a}rik\\KAUST}
\date{}
\begin{document}

\maketitle

\begin{abstract}
  Federated Learning (FL) is an increasingly popular machine learning paradigm in which multiple nodes try to collaboratively learn under privacy, communication and multiple heterogeneity constraints. A persistent problem in federated learning is that it is not clear what the optimization objective should be: the standard average risk minimization of supervised learning is inadequate in handling several major constraints specific to federated learning, such as communication adaptivity and personalization control. We identify several key desiderata in frameworks for federated learning and introduce a new framework, FLIX, that takes into account the unique challenges brought by federated learning. FLIX has a standard finite-sum form, which enables practitioners to tap into the immense wealth of existing (potentially non-local) methods for distributed optimization. Through a smart initialization that does not require any communication, FLIX does not require the use of local steps but is still provably capable of performing dissimilarity regularization on par with local methods. We give several algorithms for solving the FLIX formulation efficiently under communication constraints. Finally, we corroborate our theoretical results with extensive experimentation. \numberlessfootnote{* To appear in AISTATS 2022. An earlier version of this paper appeared in the NeurIPS 2021 Meta-Learn workshop with the algorithm name FedMix, which we have changed to FLIX to distinguish from a different algorithm developed in concurrent work.}
\end{abstract}

\section{INTRODUCTION}

There is a wealth of data today that is decentralized among many clients and which can not be centralized for privacy reasons. Federated Learning (FL) aims to enable machine learning in this decentralized setting while respecting data privacy. Application domains of federated learning include healthcare, learning language models for virtual keyboards, and speech recognition~\citep{kairouz19_advan_open_probl_feder_learn}. The promise of federated learning is that by participating in a distributed training process, clients can learn better machine learning models than they can using only their own data. The main cost in using federated learning over local training lies in the network bandwidth used for the distributed training process. Hence, federated learning must be flexible enough to provide a benefit to users without a prohibitive communication cost. The standard formulation of FL is to cast it as an optimization problem of the form
\begin{equation}\label{eq:ERM}\tag{ERM} \min \limits_{x\in \R^d}  \left[ f(x) \eqdef \frac{1}{n}\sum \limits_{i=1}^n f_i(x)\right],\end{equation}
where $f_i$ is the loss function on client $i$. Thus, the goal of classical FL is for the $n$ clients to collaboratively learn a single model, $x^* = \arg\min f$, to be deployed on all clients. Recent development shows that using a single model for all clients can be severely detrimental to individual performance on many clients~\citep{yu20_salvag_feder_learn_by_local_adapt}, defeating the purpose of joining distributed training. Furthermore, \eqref{eq:ERM} offers no clear tunable knobs that can accommodate constraints on the network bandwidth. Motivated by these concerns, the chief question of our paper is:
\begin{quote}
 \em  Can we find a formulation for federated learning that is flexible enough to accommodate the needs of federated learning, yet also solvable using standard methods?
\end{quote}

\subsection{Key properties of the FLIX framework}
\label{sec:desiderata}

Our main contribution is \emph{FLIX} (Federated Learning mIXtures), a novel and flexible formulation for federated learning: define $\alpha_i > 0$ to be the \emph{personalization parameter for node $i$}, and let $x_i \eqdef \min_{x \in \R^d} f_i (x)$ be the local solution to the $i$-th objective-- note that $x_i$ can be found by solely running a local optimizer, and hence computing it requires no communication at all. If $f_i$ is nonconvex, then $x_i$ can be merely a stationary point, i.e. a point such that $\nabla f_i (x_i) = 0$. The FLIX problem is
\begin{equation}
  \label{eq:FLIX-problem}
  \tag{FLIX}
  \min \limits_{x \in \R^d} \tilde{f} (x) \eqdef \frac{1}{n} \sum \limits_{i=1}^{n} f_i (\alpha_i x + \br{1 - \alpha_i} x_i).
\end{equation}

Once we find a solution $x_\ast$ of sufficient quality for~\eqref{eq:FLIX-problem}, we deploy $T_i (x_\ast) = \alpha_i x_\ast + \br{1 - \alpha_i} x_i$ on node $i$ as its final personalized model. We now enumerate some of the key properties of \eqref{eq:FLIX-problem}:

\begin{itemize}[leftmargin=0.15in,itemsep=0.01in,topsep=0pt]
\item \textbf{Efficiently solvable as a finite-sum problem.} Formulation~\eqref{eq:FLIX-problem} preserves the standard finite-sum formulation of empirical risk minimization. Moreover, it preserves problem structure: we show (in Section~\ref{sec:algorithms-for-FLIX}) that when the $f_i$ are smooth (and/or convex), $\tilde{f}$ is also smooth (resp. convex). Hence, we can leverage standard empirical risk minimization method for federated learning under communication and/or personalization constraints. In Section~\ref{sec:algorithms-for-FLIX} we instantiate several such methods and show they enjoy fast convergence guarantees under standard assumptions on the functions $f_1, \ldots, f_n$.
\item \textbf{Adaptive to communication constraints.} Communication efficiency is an important concern federated learning, as often bandwidth is valuable and limited \citep{Konecny2016, li19_feder_learn, kairouz19_advan_open_probl_feder_learn}. The FLIX formulation is adaptive to communication constraints: observe that computing $x_i$, a precondition to solving \eqref{eq:FLIX-problem}, requires no communication at all, and can be done purely locally on node $i$. If $\alpha_i = 0$, then no communication at all is needed to compute the personalized model $T_i (x_\ast)$. By varying $\alpha_i$ between $0$ and $1$, we can control the amount of communication needed to compute $T_i (x_\ast)$. We show (in Section~\ref{sec:algorithms-for-FLIX}) that given a communication budget of $R$ steps, we can find parameters $\alpha_i$ that allows us to solve \eqref{eq:FLIX-problem} in no more than $R$ communication steps.
\item \textbf{Adaptivity to personalization.} Our end-goal in federated learning is to generalize well \emph{on each client}: this means that the solution deployed on node $i$ should be tailored to its local data distribution, which may differ from the data distributions on other nodes. Hence, we want to use the data from other nodes to train the model deployed on node $i$ only if that data is useful on its own distribution. Indeed, a naive application of empirical risk minimization can have terrible effects in federated learning situations where the pure local model (i.e. $x_i$) does better than average empirical risk minimizer \citep{yu20_salvag_feder_learn_by_local_adapt}. In FLIX, varying $\alpha_i$ enables us to amplify or reduce the effect of other objectives on the solution deployed on node $i$. In situations where the data on all of the nodes is sufficiently heterogeneous, we set $\alpha_i$ to be small and the effect of other data on node $i$ will be neglibile. On the other hand, when the data on the different nodes is related we may set $\alpha_i$ to be closer to $1$. We observe a benefit to varying $\alpha$ in this manner in practice: Figure~\ref{fig:alphai-matters} shows the effect of varying the $\alpha_i$ on real data (see Section~\ref{sec:experiments} for the details and for other experiments).
\end{itemize}

\begin{figure}[h]
\centering
\begin{subfigure}{0.45\linewidth}
\centering
\includegraphics[width=\linewidth]{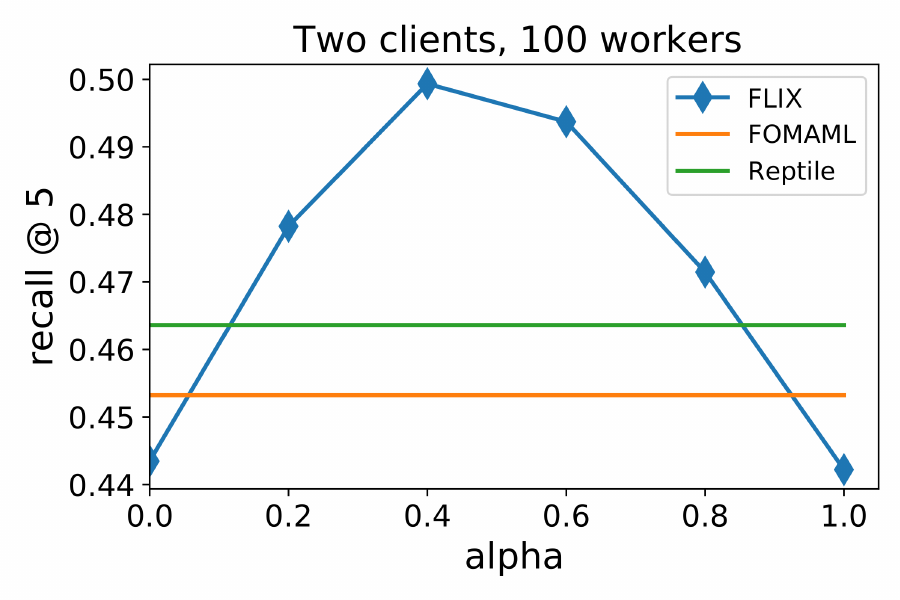}
\caption{100 workers created out of two clients' data}
\label{subfig:100w_2c}
\end{subfigure}
\begin{subfigure}{0.45\linewidth}
\centering
\includegraphics[width=\linewidth]{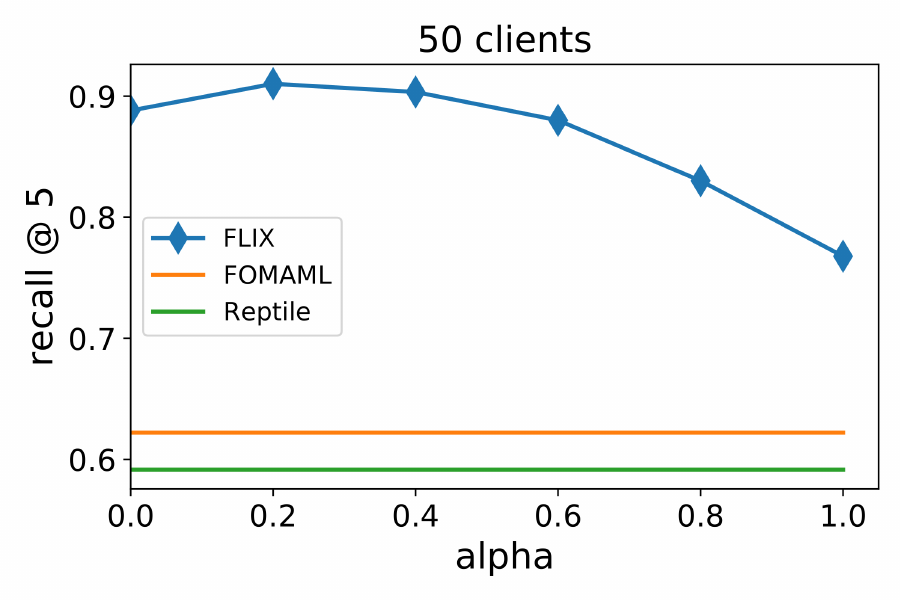}
\caption{50 workers with distinct data distributions}
\label{subfug:50w_50c}
\end{subfigure}
\vspace{.1in}
\caption{Test accuracy of FLIX model for different personalization parameter values, FOMAML and Reptile. $\alpha_i = \alpha$ is set to the value indicated on horizontal axis. FOMAML and Reptile are independent from the personalization parameter $\alpha$. Plots correspond to different data splittings.}
\label{fig:alphai-matters}
\end{figure}

FLIX fills a gap that is unsatisfied by existing methods. To the best of our knowledge, there is no other method for federated learning that is efficiently solvable via standard algorithms and also adaptive to communication and personalization constraints, and indeed both constraints are important in practice~\citep{li2020industryfriendly}. We believe the key properties we enumerate can also serve as natural desiderata in the development of new formulations and methods for federated learning.

\subsection{Related work}

Personalization has garnered significant recent interest in federated learning as personalized models often perform well in practice compared to non-personalized models~\citep{jiang19_improv_feder_learn_person_via,yu20_salvag_feder_learn_by_local_adapt}. FLIX is a \emph{model mixture} method: the personalized solution is a mixture of a global model and a local model. In recent work, \citet{expl_mixture_Deng2020} and \citet{mansour2020approaches} propose model mixture methods and prove their statistical benefits, while \citet{zec2021specialized} introduce a similar formulation based on the mixture of experts framework. Unfortunately, we show in the supplementary material that, from the perspective of optimization and without additional data, the formulations in all three works are trivially minimized at the local minimizers $x_1, \ldots, x_n$. An alternative to model mixing is mixing in function space, where we optimize a mixture of objectives rather than a model mixture. This mixture is often constructed to control model variance: examples of this approach can be found in~\citep{FL2020mixture, dinh2021personalized,huang2021personalized, dinh2021fedu}. In FLIX, we take the model mixture approach as it allows us to use pretraining to better solve the problem while still regularizing model variance (see Section~\ref{sec:motivation-1}). A parallel line of work applies meta-learning methods like MAML to federated learning~\citep{jiang19_improv_feder_learn_person_via,fallah2020personalized}: in Section~\ref{sec:motivation-2} we motivate FLIX by taking MAML as our starting point. \citet{chen2021theorem} discuss the statistical limits of personalization and show that either solving empirical risk minimization or local training is optimal, depending on certain problem parameters; However, as of yet there is no single optimal adaptive algorithm (from the statistical perspective). There are several other techniques in federated learning that can be combined with our approach for better results, such as clustering~\citep{sattler19_clust_feder_learn} or robust optimization~\citep{reisizadeh20_robus_feder_learn}.

\section{THE FLIX FORMULATION}\label{sec:FLIX_formulation}
In this section we reintroduce and motivate the FLIX formulation in detail. We define the FLIX objective as
\begin{equation}
  \label{eq:FLIX-objective}
\begin{split}
\tilde{f}(x; \alpha_1, \ldots, \alpha_n, x_1, \ldots, x_n) &\eqdef \frac{1}{n} \sum \limits_{i=1}^{n} f_i (\alpha_i x + \br{1-\alpha_i} x_i),
\end{split}
\end{equation}
where $\alpha_i \in (0, 1)$ is the personalization coefficient for node $i$ and $x_i$ is the minimizer of $f_i$, for all $i = 1, 2, \ldots, n$. We will use $\tilde{f}(x)$ to refer to the objective in \eqref{eq:FLIX-objective} when the $\alpha_i$ and $x_i$ are clear from the context. The FLIX problem is then
\begin{equation}
  \label{eq:FLIX-2}
  \min_{x \in \R^d} \left [ \tilde{f}(x) = \frac{1}{n} \sum \limits_{i=1}^{n} f_i (\alpha_i x + \br{1-\alpha_i} x_i) \right ].
\end{equation}
Let $\alpha = [\alpha_1, \ldots, \alpha_n]$ be the vector of the personalization coefficients. If $x_\ast = x_\ast (\alpha)$ is a solution of \eqref{eq:FLIX-2}, we call $T_{i} (x; \alpha_i, x_i) = \alpha_i x_\ast + \br{1 - \alpha_i} x_i$ the \emph{deployed solution on node $i$}. Like with $\tilde{f}$, we will refer to the deployed solution on node $i$ as $T_i (x)$ when $x_i$ and $\alpha_i$ are clear from the context.

\subsection{Motivation 1: from Local GD to FLIX}
\label{sec:motivation-1}
The most popular algorithm for solving federated learning problems is the Federated Averaging algorithm~\citep{kairouz19_advan_open_probl_feder_learn}, also known as Local (Stochastic) Gradient Descent (Local GD/SGD). Local GD alternates steps of local computation on each node with steps of communication and aggregation. More concretely, the Local GD update is:
\begin{equation*}
  x_{t+1}^{i} = \begin{cases}
    x_{t}^i - \gamma \nabla f_{i} (x_t^i) & \text{ if } t \mod H \neq 0 \\
    \frac{1}{n} \sum_{i=1}^{n} \left [x_{t}^i - \gamma \nabla f_{i} (x_t^i) \right ] & \text{ if } t \mod H = 0
  \end{cases},
\end{equation*}

where $H$ is the number of local steps. Early papers on federated learning (such as e.g. \citep{Konecny2016}) motivated local methods as communication-efficient ways of solving~\eqref{eq:ERM}, but subsequent theoretical development reveals that local methods are, in fact, quite bad solvers for~\eqref{eq:ERM} whenever there is significant statistical heterogeneity among the clients \citep{woodworth20_minib_vs_local_sgd_heter_distr_learn}. Moreover, \citet{pathak20_fedsp} show that for the linear least-squares problem, Local GD converges to a different point than the minimizer of \eqref{eq:ERM}. More generally, the fixed points of Local GD can be very different from the minimizer of \eqref{eq:ERM} whenever $H > 1$ \citep{malinovsky20_from_local_sgd_to_local}. \citet{FL2020mixture} show that a mild variant of Local GD can be interpreted as SGD applied on the $nd$-dimensional regularized objective $f_\lambda$ defined by
\begin{equation*}
  f_{\lambda} (y_1, y_2, \ldots, y_n) \eqdef  \left[ \frac{1}{n} \sum \limits_{i=1}^{n} f_i (y_i) + \frac{\lambda}{2n} \sum \limits_{i=1}^{n} \sqn{y_i - \bar{y}}\right],
\end{equation*}
where $\bar{y} = \frac{1}{n} \sum_{i=1}^{n} y_{i}$ is the counterpart of $x$ in \eqref{eq:ERM}, and where $\lambda$ is a regularization parameter determined according to the number of local steps. Objective $f_\lambda$ is the summation of two terms: the first asks that each node $i$ finds a solution $y_i$ that minimizes its local objective well, while the regularizer $\psi (y_1, \ldots, y_n) = \frac{1}{2n} \sum_{i=1}^{n} \sqn{y_i - \bar{y}}$ forces the solutions $y_1, y_2, \ldots, y_n$ to be close to their average $\bar{y}$. Hence, Local GD incentivizes finding \emph{personalized} solutions $y_1, y_2, \ldots, y_n$ that have small population variance. \citet{FL2020mixture} note that as the $\lambda$ parameter varies between $0$ and $\infty$, the solutions found by Local GD interpolate between the pure local optimal models (i.e. $x_i = \argmin_{x} f_i (x)$) and the solution of the global problem $x_\ast$ (the minimizer of~\eqref{eq:ERM}). Our starting point is then the following observation:
\begin{quote}
  The solutions $y_1, \ldots, y_n$ found by Local GD are an \emph{implicit mixture} of the local minimizers $x_1, \ldots, x_n$ and the global empirical risk minimizer $x_\ast$.
\end{quote}

Rather than seeking an implicit mixture of the local and global optimal models, we instead propose to find an \emph{explicit} mixture of the local optimal models and a global model: given any global model $x$ (not necessarily the empirical risk minimizer), we choose coefficients $\alpha_1, \alpha_2, \ldots, \alpha_n$ (all between $0$ and $1$) and then deploy on node $i$ the mixture
\begin{equation}
  \label{eq:deployed-models-def}
  T_i (x) = \alpha_i x + \br{1-\alpha_i} x_i,
\end{equation}
we may then choose $x$ as \emph{the best} such global model by explicitly solving the optimization problem
\[ \min \limits_{x \in \R^d} \frac{1}{n} \sum \limits_{i=1}^{n} f_{i} \br{\alpha_i x + \br{1 - \alpha_i} x_i},  \]
and this is exactly the FLIX formulation. Observe that coefficients $\alpha_1, \alpha_2, \ldots, \alpha_n$ should regularize the population variance of the deployed solutions $T_1 (x), T_2 (x), \ldots, T_n (x)$, as in local methods. We show this rigorously for equal $\alpha_i$ in Proposition~\ref{proposition:alpha-controls-variance}. Our development thus leads us to a natural framework that captures the strength of local methods while also satisfying the desiderata specified in Section~\ref{sec:desiderata}.

\subsection{Motivation 2: from model-agnostic meta-learning to FLIX}
\label{sec:motivation-2}
We now motivate FLIX differently by starting with \emph{personalization via fine-tuning}. The ordinary formulation of the federated learning problem~\eqref{eq:ERM} asks for a single global model to be used on all clients. If the clients are sufficiently heterogeneous, a single model may perform badly on many of them~\citep{jiang19_improv_feder_learn_person_via}. Personalizing a global model to each of the users' custom data is often beneficial in practice; For example, \citet{wang19_feder_evaluat_devic_person} study the benefits of personalizing language models for a virtual keyboard application used by tens of millions of users. They observe that a sizeable fraction of the users benefit from personalization. Personalization is often done in two steps:
\begin{algorithmic}
  \State \textbf{Step I: initial model training.} Find a ``good'' global model $\xglb$.
  \State \textbf{Step II: fine-tuning.} Personalize the global model $\xglb$ on each client to get the \emph{personalized} local models $x_i$.
\end{algorithmic}

Methods that fit this framework are known as \emph{finite-tuning approaches}: they include the model-agnostic meta-learning (MAML) family of methods~\citep{finn17_model_agnos_meta_learn_fast}. In addition to its practical popularity, recent theoretical investigations reveal that fine-tuning approaches, such as MAML, are also benefical from a statistical perspective~\citep{fallah21_gener_model_agnos_meta_learn_algor,chua21_how_fine_tunin_allow_effec_meta_learn}. In MAML, we find $\xglb$ by optimizing for the loss after a single step of gradient descent, i.e.\ the MAML objective is
\begin{equation}
  \label{eq:MAML-objective}
  \text{Find } \quad \xglb \in \argmin \limits_{x \in \R^d} \frac{1}{n} \sum \limits_{i=1}^n f_{i} (x - \gamma \nabla f_{i} (x)),
\end{equation}

where $\gamma$ is a given stepsize. Once $\xglb$ is found, we may then fine-tune it by running gradient descent for a number of steps on each node $i$ locally using its own objective $f_i$~\citep{finn17_model_agnos_meta_learn_fast}. To gain further insight into what fine-tuning is doing, we now consider the case when each $f_i$ is a quadratic function. Because this problem is amenable to analysis, several authors have used it to study the theoretical properties of MAML~\citep{collins20_why_does_maml_outper_erm, Charles2021,gao20_model_optim_trade_off_meta_learn}, and we follow in their footsteps. We assume that each $f_i$ can be written as $f_i (x) = \frac{1}{2} x^T A_i x - b_i^T x + c_i$,
where $A_i \in \R^{d \times d}$ is a positive definite matrix, $b_i \in \R^d$, and $c_i \in \R$. Now suppose that we have some initial global model $x^{0}$, and we fine-tune it by running gradient descent for $H$ steps on node $i$: the next proposition shows the final iterate is a matrix-weighted average of the initial solution and the optimal local solution:

\begin{proposition}
  \label{proposition:quad-fine-tuned-sol}
  Suppose that we run gradient descent for $H$ steps on the quadratic objective $f_i~=~ \frac{1}{2}~x^T A_i x~-~b_i^T x~+~c$ starting from $x^{0}$ with stepsize $\gamma > 0$. Suppose that the stepsize satisfies $\gamma \leq \frac{1}{L_i}$, where $L_i = \lambda_{\max} (A_i)$. Then the final iterate $x_i^{H}$ can be written as
  \[ x_i^{H} = \br{I - J_i^H} x_i + J_i^H x^{0}, \]
  where $x_i$ minimizes $f_i$ and $J_i \in \R^{d \times d}$ is a matrix with maximum eigenvalue smaller than $1$, i.e.\ $\lambda_{\max} (J) < 1$.
\end{proposition}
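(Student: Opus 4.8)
The plan is to write the gradient-descent recursion in closed form as an affine map and then center it at the minimizer. First I would compute $\nabla f_i(x) = A_i x - b_i$, so that one step of gradient descent reads $x^{t+1} = x^t - \gamma (A_i x^t - b_i) = (I - \gamma A_i) x^t + \gamma b_i$. Setting $J_i \eqdef I - \gamma A_i$, this becomes the affine recursion $x^{t+1} = J_i x^t + \gamma b_i$.

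Next I would eliminate the constant term by centering at the minimizer. Since $f_i$ is a positive-definite quadratic, its unique minimizer $x_i$ satisfies the stationarity condition $A_i x_i = b_i$, hence $\gamma b_i = \gamma A_i x_i = (I - J_i) x_i$. Substituting this into the recursion gives $x^{t+1} - x_i = J_i (x^t - x_i)$, a purely linear recursion in the error $x^t - x_i$. A one-line induction on $t$ then yields $x_i^H - x_i = J_i^H (x^0 - x_i)$, which rearranges to $x_i^H = (I - J_i^H) x_i + J_i^H x^0$, the claimed form.

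It remains to bound the spectrum of $J_i$. Because $A_i$ is symmetric positive definite, its eigenvalues lie in $[\lambda_{\min}(A_i), L_i]$ with $\lambda_{\min}(A_i) > 0$. By the spectral mapping theorem the eigenvalues of $J_i = I - \gamma A_i$ are exactly $1 - \gamma \mu$ as $\mu$ ranges over the eigenvalues of $A_i$. The stepsize bound $\gamma \leq 1/L_i$ gives $1 - \gamma \mu \geq 1 - \gamma L_i \geq 0$ for every such $\mu$, while positive definiteness gives $1 - \gamma \mu \leq 1 - \gamma \lambda_{\min}(A_i) < 1$; hence $\lambda_{\max}(J_i) = 1 - \gamma \lambda_{\min}(A_i) < 1$, as required.

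There is no substantial obstacle here: the argument is routine once one observes that centering at $x_i$ converts the affine gradient-descent map into a linear one whose iterates are governed by powers of $J_i$. The only point demanding care is the strict inequality $\lambda_{\max}(J_i) < 1$, which relies essentially on $A_i$ being positive \emph{definite} (so that $\lambda_{\min}(A_i) > 0$) rather than merely positive semidefinite; the upper stepsize bound $\gamma \leq 1/L_i$ is what guarantees the complementary nonnegativity $\lambda_{\min}(J_i) \geq 0$ and is not itself needed for the strict upper bound.
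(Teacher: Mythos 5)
Your proposal is correct and follows essentially the same route as the paper: compute the affine gradient-descent map, use the stationarity condition $A_i x_i = b_i$ to center the recursion at the minimizer, unroll to get $x_i^H - x_i = J_i^H(x^0 - x_i)$ with $J_i = I - \gamma A_i$, and conclude via the spectrum of $J_i$. Your spectral discussion at the end is in fact slightly more careful than the paper's one-line remark, and your observation that positive definiteness (not the stepsize bound) is what yields the strict inequality $\lambda_{\max}(J_i) < 1$ is accurate.
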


The proof of Proposition~\ref{proposition:quad-fine-tuned-sol} and all subsequent proofs are relegated to the supplementary. Plugging the result of Proposition~\ref{proposition:quad-fine-tuned-sol} into Equation~\eqref{eq:MAML-objective}, observe that in MAML we find the initial model $x^{0}$ by solving the problem
\[ \min \limits_{x \in \R^d} \frac{1}{n} \sum \limits_{i=1}^{n} f_i ( (I - J_i) x_i + J_i x ).  \]

Hence, MAML is optimizing \emph{for a specific weighted average} of the initial model $x^{0}$ and the local solutions $x_1, x_2, \ldots, x_n$. We thus propose to dispense with the specific matrix $J_i$ and instead optimize an average weighted with an arbitrary constant $\alpha_i$:
\[ \min \limits_{x \in \R^d} \frac{1}{n} \sum \limits_{i=1}^{n} f_i (\alpha_i x + \br{1-\alpha_i} x_i), \]
and this is exactly the FLIX formulation. Observe that by choosing $\alpha_i$ we may accomplish a similar effect to multiplying by $J_i^H$ for large $H$, so this is a valid approximation at least for large $H$. This gives FLIX a new interpretation as an approximate generalized MAML, where we optimize the global model for performance after potentially many gradient descent steps rather than just a single step. By exploiting information about the objective's optimum $x_i$ (available at zero communication cost in federated learning), FLIX takes into account higher-order information that ordinary MAML can not.

\section{THEORY AND ALGORITHMS}
\label{sec:algorithms-for-FLIX}

In this section we aim to develop algorithms to solve~\eqref{eq:FLIX-problem} in a communication-efficient manner. Before discussing concrete algorithms, we study a few algorithm-independent properties of~\eqref{eq:FLIX-problem} that will come in handy for understanding the formulation and proving convergence bounds. The following proposition shows that the formulation preserves smoothness and convexity. This is in contrast to other meta-learning methods such as MAML, where the objective may generally be nonsmooth~\citep{fallah19_conver_theor_gradien_based_model}.

\begin{proposition}
  \label{proposition:preserve-smooth-and-convexity}
  Suppose that each objective $f_i$ is $L_i$-smooth. That is, for any $x, y \in \R^d$ we have $\norm{\nabla f_i (x) - \nabla f_i (y)} \leq L_i \norm{x-y}$. Then the FLIX objective $\tilde{f}$ defined in \eqref{eq:FLIX-objective} is $L_\alpha$-smooth for $L_\alpha \eqdef \frac{1}{n} \sum_{i=1}^{n} \alpha_i^2 L_i$. If each $f_i$ is convex, then $\tilde{f}$ is also convex. If each $f_i$ is $\mu_i$-strongly convex, then $\tilde{f}$ is $\mu_\alpha$ strongly convex for $\mu_\alpha \eqdef \frac{1}{n} \sum_{i=1}^{n} \alpha_i^2 \mu_i$.
\end{proposition}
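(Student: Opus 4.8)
The plan is to exploit the fact that $\tilde f$ is assembled from the $f_i$ by composition with the affine maps $T_i(x) = \alpha_i x + \br{1-\alpha_i} x_i$ followed by averaging, so that smoothness, convexity, and strong convexity all transfer through these two elementary operations. The single computation I would set up first is the chain rule for the gradient: since the Jacobian of $T_i$ is $\alpha_i \mI$, we obtain $\nabla \tilde f(x) = \avein \alpha_i \nabla f_i\br{T_i(x)}$. This identity drives all three claims, and the bookkeeping point to keep in mind throughout is that $\alpha_i$ enters twice — once from differentiating $T_i$, and once from the fact that $T_i$ contracts distances, $\norm{T_i(x) - T_i(y)} = \alpha_i \norm{x-y}$ — which is precisely why the resulting constants are quadratic in $\alpha_i$.

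For smoothness, I would subtract the gradients at $x$ and $y$, push the triangle inequality through the average, apply $L_i$-smoothness of each $f_i$ to get $\norm{\nabla f_i(T_i(x)) - \nabla f_i(T_i(y))} \le L_i \norm{T_i(x)-T_i(y)}$, and then substitute $\norm{T_i(x)-T_i(y)} = \alpha_i\norm{x-y}$. Collecting the two $\alpha_i$ factors yields the bound $\avein \alpha_i^2 L_i \cdot \norm{x-y} = L_\alpha \norm{x-y}$, as required.

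Convexity is immediate, since the composition of a convex function with an affine map is convex and a nonnegative average of convex functions is convex. For strong convexity I would prefer the first-order characterization over a Hessian argument, to avoid assuming twice differentiability. Writing $g_i = f_i \circ T_i$, I would plug $u = T_i(x)$ and $v = T_i(y)$ into the $\mu_i$-strong-convexity inequality for $f_i$, then rewrite $T_i(y) - T_i(x) = \alpha_i(y-x)$ in both the inner-product and the quadratic term. The inner-product term becomes $\lin{\alpha_i \nabla f_i(T_i(x)), y-x} = \lin{\nabla g_i(x), y-x}$ and the quadratic term becomes $\tfrac{\alpha_i^2 \mu_i}{2}\sqn{y-x}$, exhibiting $g_i$ as $\alpha_i^2\mu_i$-strongly convex; averaging over $i$ then gives $\mu_\alpha$-strong convexity of $\tilde f$, since strong-convexity moduli add under nonnegative averaging.

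None of these steps presents a genuine obstacle; the only thing requiring care is the double appearance of $\alpha_i$ (so that the constants are $\alpha_i^2 L_i$ and $\alpha_i^2\mu_i$ rather than linear in $\alpha_i$) and keeping the strong-convexity argument in first-order form so that no differentiability is assumed beyond what $L_i$-smoothness already provides.
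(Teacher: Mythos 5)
Your proposal is correct and follows essentially the same route as the paper's proof: the smoothness bound via the triangle inequality, the chain-rule factor $\alpha_i$ combined with the contraction $\norm{T_i(x)-T_i(y)}=\alpha_i\norm{x-y}$ to produce $\alpha_i^2 L_i$, and the first-order strong-convexity inequality applied at $T_i(x)$ with displacement $\alpha_i h$ (the paper treats convexity as the $\mu_i=0$ case of the same computation, which is equivalent to your affine-composition remark). No gaps.
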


Our next result offers some insight into the variance-regularizing effect of the $\alpha_i$: in particular, when all the $\alpha_i$ are equal, increasing $\alpha$ in~\eqref{eq:FLIX-problem} directly decreases the variance of the deployed local models from their mean. As discussed in Section~\ref{sec:motivation-1}, this is a key property of local descent methods that the FLIX formulation captures.

\begin{proposition}
  \label{proposition:alpha-controls-variance}
  Suppose that $\alpha_1 = \alpha_2 = \ldots = \alpha_n = \beta$ in the FLIX formulation~\eqref{eq:FLIX-problem}. Let $T_1 (x), T_2 (x), \ldots, T_n(x)$ be the deployed models defined in~\eqref{eq:deployed-models-def}. If $y_1, \ldots, y_n$ are vectors in $\R^d$ and $\bar{y}$ is their mean, we define $V(y_1, \ldots, y_n)$ as the population variance $$ V(y_1, \ldots, y_n) \eqdef \frac{1}{n} \sum \limits_{i=1}^{n} \sqn{y_i - \bar{y}}.$$ Then,
  \[ V(T_1 (x), \ldots, T_n (x)) = \br{1 - \beta}^2 V(x_1, \ldots, x_n). \]
\end{proposition}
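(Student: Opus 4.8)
The plan is to compute the variance directly from its definition by exploiting the fact that, under the assumption $\alpha_1 = \cdots = \alpha_n = \beta$, the global component $\beta x$ of each deployed model $T_i(x) = \beta x + \br{1-\beta} x_i$ is identical across all nodes and therefore cancels when we center the $T_i(x)$ about their mean. The whole argument is a short chain of algebraic identities; there is no analytic or geometric obstacle, so the real work is simply bookkeeping with the mean.

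First I would compute the mean of the deployed models. Writing $\bar{x} \eqdef \avein x_i$ for the average of the local minimizers, linearity gives
\[ \bar{T} \eqdef \avein T_i(x) = \avein \br{\beta x + \br{1-\beta} x_i} = \beta x + \br{1-\beta} \bar{x}, \]
since $\beta x$ does not depend on the summation index $i$. The key step is the next one: subtracting this mean from each deployed model makes the shared term $\beta x$ disappear, leaving
\[ T_i(x) - \bar{T} = \br{\beta x + \br{1-\beta} x_i} - \br{\beta x + \br{1-\beta}\bar{x}} = \br{1-\beta}\br{x_i - \bar{x}}. \]
This identity is the heart of the proposition, and it is exactly where the assumption of equal $\alpha_i$ is used — if the coefficients differed across nodes, the global terms would not cancel and the result would not factor so cleanly.

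Finally I would square and average. Taking norms and using the homogeneity of $\sqn{\cdot}$ yields $\sqn{T_i(x) - \bar{T}} = \br{1-\beta}^2 \sqn{x_i - \bar{x}}$, and summing over $i$ with the $\tfrac{1}{n}$ weight gives
\[ V\br{T_1(x), \ldots, T_n(x)} = \avein \sqn{T_i(x) - \bar{T}} = \br{1-\beta}^2 \avein \sqn{x_i - \bar{x}} = \br{1-\beta}^2 V\br{x_1, \ldots, x_n}, \]
which is the claimed identity. The only point requiring mild care is confirming that $\bar{T}$ really is the correct centering point (i.e.\ that $\bar{x}$ is the mean of the $x_i$), so that the cancellation in the middle step is exact; once that is verified the conclusion is immediate and the scalar factor $\br{1-\beta}^2$ pulls out of the sum by homogeneity.
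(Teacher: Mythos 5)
Your proof is correct and follows essentially the same route as the paper's: substitute $T_i(x) = \beta x + (1-\beta)x_i$ into the variance, observe that the shared term $\beta x$ cancels upon centering so that $T_i(x) - \bar{T} = (1-\beta)(x_i - \bar{x})$, and pull the scalar factor $(1-\beta)^2$ out of the squared norms. No gaps.
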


One-shot learning is a learning paradigm where we may use only a single round of communication to solve the federated learning problem~\citep{guha19_one_shot_feder_learn, salehkaleybar19_one_shot_feder_learn}. When the personalization parameters are small enough, we can provably solve the FLIX problem with a single round of communication by computing a certain weighted average of the local solutions $x_1, x_2, \ldots, x_n$. We first consider if each $x_i$ is a minimizer of $f_i$, then we can get an approximate minimizer:
\begin{theorem}
  \label{thm:one-shot-averaging}
  Suppose that each objective $f_i$ is $L_i$-smooth, that each $x_i$ minimizes $f_i$, and let $\hat{L} \eqdef \frac{1}{n} \sum_{i=1}^{n} L_i$. Given the pure local models $x_1, x_2, \ldots, x_n$, define the weighted average
  \begin{align}
    \label{eq:xavg-def}
    \xavg \eqdef \sum \limits_{i=1}^{n} w_i x_i, &&  w_i \eqdef \frac{\alpha_i^2 L_i}{n L_\alpha}, &&  L_\alpha \eqdef \frac{1}{n}\sum \limits_{i=1}^{n} \alpha_i^2 L_i.
  \end{align}
  We further define the constants
  \begin{equation}
  \begin{aligned}
    \label{eq:var-consts-def}
    D &\eqdef \max \limits_{i, j = 1, \ldots, n, i \neq j} \norm{x_i - x_j} &&\text{ and, }  V &\eqdef \sum \limits_{i=1}^{n} w_i \sqn{x_i - \xavg}
  \end{aligned}
\end{equation}
  Fix any $\epsilon > 0$. Assume that either $\max_{i=1, \ldots, n} \alpha_i \leq \sqrt{2\epsilon}/\sqrt{\hat{L}D}$, or $\alpha_i = \beta$ for all $i$ and $\beta \leq \sqrt{2\epsilon}/\sqrt{\hat{L}D}$. Then $\xavg$ is an $\epsilon$-approximate minimizer of ~\eqref{eq:FLIX-problem}.
\end{theorem}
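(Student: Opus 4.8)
The plan is to sandwich the optimal value $\tilde{f}(x_\ast)$ and show that $\tilde{f}(\xavg)$ cannot exceed it by more than $\epsilon$. The lower bound is the trivial one: since $x_i$ minimizes $f_i$, we have $f_i(T_i(x)) \geq f_i(x_i)$ for every $x$ and every $i$, so $\tilde{f}(x) \geq \frac{1}{n}\sum_{i=1}^{n} f_i(x_i)$ for all $x$, and in particular $\tilde{f}(x_\ast) \geq \frac{1}{n}\sum_{i=1}^{n} f_i(x_i)$. All the real work then lies in upper bounding $\tilde{f}(\xavg)$. Note that this route deliberately avoids ever locating $x_\ast$ or invoking convexity.

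For the upper bound I would expand each term $f_i(T_i(\xavg))$ around the local minimizer $x_i$. Using the hypothesis that $f_i$ is $L_i$-smooth together with the stationarity $\nabla f_i(x_i) = 0$ gives the one-sided quadratic estimate $f_i(T_i(\xavg)) \leq f_i(x_i) + \frac{L_i}{2}\sqn{T_i(\xavg) - x_i}$, and since $T_i(\xavg) - x_i = \alpha_i\br{\xavg - x_i}$ the last term is $\frac{\alpha_i^2 L_i}{2}\sqn{\xavg - x_i}$. Averaging over $i$ and subtracting the lower bound cancels the $\frac{1}{n}\sum_{i=1}^{n} f_i(x_i)$ terms and yields the clean gap estimate $\tilde{f}(\xavg) - \tilde{f}(x_\ast) \leq \frac{1}{2n}\sum_{i=1}^{n}\alpha_i^2 L_i \sqn{\xavg - x_i}$.

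The next step is to read this right-hand side off in terms of the quantities defined in the statement. Since $\sum_{i=1}^{n} w_i = 1$ and $\alpha_i^2 L_i = n L_\alpha w_i$, the bound becomes $\frac{L_\alpha}{2}\sum_{i=1}^{n} w_i \sqn{\xavg - x_i} = \frac{L_\alpha}{2} V$. This also explains the choice of $\xavg$: the weighted average $\sum_{i=1}^{n} w_i x_i$ is exactly the minimizer of $z \mapsto \sum_{i=1}^{n} w_i \sqn{z - x_i}$, so $\xavg$ makes this family of quadratic upper bounds as tight as possible, with minimal value $V$. It then remains to control $\frac{L_\alpha}{2}V$. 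I would bound $V \leq D^2$ by noting that $\xavg - x_i = \sum_{j=1}^{n} w_j\br{x_j - x_i}$ is a convex combination, so $\norm{\xavg - x_i} \leq \max_{j}\norm{x_i - x_j} \leq D$ and hence $V \leq D^2$; and $L_\alpha = \frac{1}{n}\sum_{i=1}^{n} \alpha_i^2 L_i \leq \br{\max_i \alpha_i}^2 \hat{L}$. This makes the gap at most $\frac{\br{\max_i\alpha_i}^2 \hat{L} D^2}{2}$, and the equal-$\alpha_i$ case is the same computation with $\max_i\alpha_i = \beta$. Plugging in the assumed bound on $\max_i \alpha_i$ (resp.\ $\beta$) then forces the gap below $\epsilon$.

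I do not expect a serious obstacle: the argument is short once the right lower bound is chosen. The one genuinely load-bearing idea, and the step I would be most careful to state correctly, is the opening move of lower-bounding $\tilde{f}(x_\ast)$ by the disaggregated optimum $\frac{1}{n}\sum_{i=1}^{n} f_i(x_i)$ rather than estimating $\norm{\xavg - x_\ast}$ directly; this is what allows the $f_i(x_i)$ terms to cancel and leaves a bound depending only on the known quantities $L_\alpha$, $V$, and $D$. I would also re-verify the final constant, since the threshold should track the requirement $\br{\max_i\alpha_i}^2 \hat{L}D^2/2 \leq \epsilon$, i.e.\ $\max_i\alpha_i \leq \sqrt{2\epsilon}/(\sqrt{\hat{L}}\,D)$.
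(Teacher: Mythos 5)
Your proposal is correct and follows essentially the same route as the paper's own proof: the same smoothness expansion around each local minimizer $x_i$ (the paper's Lemma~\ref{lemma:bounds-local-average}), the same lower bound $\min_x \tilde{f}(x) \geq \frac{1}{n}\sum_{i=1}^n f_i(x_i)$, the same identification of $\xavg$ as the minimizer of the resulting quadratic error term with value $\frac{L_\alpha V}{2}$, and the same bounds $V \leq D^2$ and $L_\alpha \leq (\max_i \alpha_i)^2 \hat{L}$. Your closing remark about the constant is well taken: with $D$ defined as $\max_{i\neq j}\norm{x_i-x_j}$ (unsquared, as in the theorem statement) the threshold should indeed read $\sqrt{2\epsilon}/(\sqrt{\hat{L}}\,D)$, and the paper's appendix implicitly resolves this by redefining $D$ as the squared diameter.
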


We can relax this to the requirement that each $x_i$ is a stationary point, in which we case we also get another stationary point using weighted averaging, see Theorem~\ref{thm:one-shot-averaging-stationary-points} in the supplementary for details. We point out that this implies FLIX applies equally well to objectives with minimizers (e.g.\ when the $f_i$ are strongly convex) as well as when the objective is non-convex and the goal is attaining a stationary point rather than minimization. For larger $\alpha_i$ than that required by Theorem~\ref{thm:one-shot-averaging} or Theorem~\ref{thm:one-shot-averaging-stationary-points}, we need more communication rounds than one. In the next subsection, we describe how distributed gradient descent can be used to solve the problem.

\subsection{Distributed gradient descent}
The simplest approach to solving~\eqref{eq:FLIX-problem} is via distributed gradient descent: given the local models $x_1, x_2, \ldots, x_n$ (precomputed before starting the process) and an initial global model $x^0$, we run the update
\begin{equation}
  \label{eq:dgd-update}
  \tag{DGD}
  x^{k+1} = x^k - \frac{\gamma}{n} \sum \limits_{i=1}^{n} \alpha_i \nabla f_i \br{\alpha_i x^k + \br{1 - \alpha_i} x_i}.
\end{equation}

Each iteration of~\eqref{eq:dgd-update} requires that every node sends its local gradient $\nabla \tilde{f}_i (x^k)$ to the server. The server then averages the received gradients, computes the new iterate $x^{k+1}$, and broadcasts it to the nodes. The next theorem shows that under smoothness and strong convexity, distributed gradient descent converges linearly to the solution of~\eqref{eq:FLIX-problem}.

\begin{theorem}
  \label{theorem:dgd-convergence}
  Suppose that each $f_i$ in \eqref{eq:FLIX-problem} is $L_i$-smooth and $\mu_i$-strongly convex. Define $\xavg, L_\alpha$, and $\hat{L}$ by~\eqref{eq:xavg-def} and $V, D$ by~\eqref{eq:var-consts-def}. Suppose that we run~\eqref{eq:dgd-update} for $K$ iterations starting from $x^0 = \xavg$. Then the following hold:
  \begin{itemize}[leftmargin=0.15in,itemsep=0.01in,topsep=0pt]
    \item [i)] If the $\alpha_i$ are allowed to be arbitrary, then for $\alpha_{\max} \eqdef \max_{i=1,\ldots,n} \alpha_i$ we have
      \[  \tilde{f}(x^k) - \min\limits_{x \in \R^d} \tilde{f}(x) \leq \br{1 - \frac{\mu_\alpha}{L_\alpha}}^K \frac{\alpha_{\max}^2 \hat{L} D}{2}. \]
    \item [ii)] If $\alpha_i = \beta$ for all $i$, then
      \begin{equation}
        \tilde{f}(x^k) - \min_{x \in \R^d} \tilde{f}(x) \leq \br{1 - \frac{\hat{\mu}}{\hat{L}}}^K \frac{\beta^2 \hat{L} V}{2},
      \end{equation}
      where $\hat{\mu} \eqdef \frac{1}{n} \sum_{i=1}^{n} \mu_i$.
\end{itemize}
\end{theorem}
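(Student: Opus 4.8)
The plan is to recognize that \eqref{eq:dgd-update} is nothing but ordinary gradient descent applied to the single objective $\tilde f$. Indeed, by the chain rule $\nabla \tilde f(x) = \avein \alpha_i \nabla f_i(\alpha_i x + \br{1-\alpha_i}x_i)$, so the update reads $x^{k+1} = x^k - \gamma \nabla \tilde f(x^k)$. By Proposition~\ref{proposition:preserve-smooth-and-convexity}, $\tilde f$ is $L_\alpha$-smooth and $\mu_\alpha$-strongly convex, so the whole problem reduces to the convergence of gradient descent on a smooth strongly convex function, which I would run with stepsize $\gamma = 1/L_\alpha$ in both cases.

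First I would establish the per-step function-value contraction. The descent lemma with $\gamma = 1/L_\alpha$ gives $\tilde f(x^{k+1}) \leq \tilde f(x^k) - \frac{1}{2L_\alpha}\sqn{\nabla\tilde f(x^k)}$, and $\mu_\alpha$-strong convexity implies the Polyak--{\L}ojasiewicz inequality $\frac12\sqn{\nabla\tilde f(x^k)} \geq \mu_\alpha\br{\tilde f(x^k)-\min\tilde f}$. Combining and telescoping yields
\[ \tilde f(x^K) - \min_{x\in\R^d}\tilde f(x) \leq \br{1 - \frac{\mu_\alpha}{L_\alpha}}^K\br{\tilde f(x^0) - \min_{x\in\R^d}\tilde f(x)}. \]
This already produces the contraction factors in both parts: in case (i) it is $\br{1-\mu_\alpha/L_\alpha}^K$ directly, and in case (ii), substituting $\alpha_i = \beta$ gives $L_\alpha = \beta^2\hat L$ and $\mu_\alpha = \beta^2\hat\mu$, so $\mu_\alpha/L_\alpha = \hat\mu/\hat L$.

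The crux is then to bound the initial gap $\tilde f(x^0) - \min\tilde f$ at the warm start $x^0 = \xavg$. The key inequality I would prove is, for every $x$,
\[ \tilde f(x) - \min_{y\in\R^d}\tilde f(y) \leq \frac{1}{2n}\sumin \alpha_i^2 L_i \sqn{x - x_i}. \]
For the upper side, since $x_i$ minimizes $f_i$ we have $\nabla f_i(x_i) = 0$, and $L_i$-smoothness gives $f_i(T_i(x)) \leq f_i(x_i) + \frac{L_i}{2}\sqn{T_i(x) - x_i} = f_i(x_i) + \frac{\alpha_i^2 L_i}{2}\sqn{x - x_i}$, using $T_i(x) - x_i = \alpha_i(x - x_i)$. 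For the lower side, each summand satisfies $f_i(T_i(y)) \geq f_i(x_i) = \min f_i$, so $\min_y \tilde f(y) \geq \avein f_i(x_i)$; subtracting gives the claim. Evaluating at $x = \xavg$ and using that the weights $w_i = \alpha_i^2 L_i/(nL_\alpha)$ are nonnegative and sum to one, I would finish the two cases: in (ii), $\frac{1}{2n}\sum_i\beta^2 L_i\sqn{\xavg - x_i} = \frac{\beta^2\hat L}{2}\sum_i w_i\sqn{x_i - \xavg} = \frac{\beta^2\hat L}{2}V$; in (i), bounding $\alpha_i \leq \alpha_{\max}$ and using that $\xavg$ is a convex combination of the $x_j$, so by convexity of the squared norm $\sqn{\xavg - x_i} \leq \sum_j w_j\sqn{x_j-x_i} \leq D^2$, yields $\frac{\alpha_{\max}^2\hat L D^2}{2}$ (matching the stated bound up to the $D$ versus $D^2$ convention).

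I expect the main obstacle to be the initial-gap estimate rather than the contraction, which is textbook. The delicate point is the two-sided argument: the same smoothness/optimality facts that upper bound $\tilde f(\xavg)$ through $\frac{\alpha_i^2 L_i}{2}\sqn{\xavg - x_i}$ must be paired with the matching lower bound $\min\tilde f \geq \avein f_i(x_i)$, which is precisely what makes the local minima $f_i(x_i)$ cancel. This is exactly the estimate underlying Theorem~\ref{thm:one-shot-averaging}, and reusing it keeps the two parts parallel; the only case-specific work is the weight bookkeeping for $V$ in (ii) and the convex-hull diameter bound for $D$ in (i).
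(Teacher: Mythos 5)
Your proposal is correct and follows essentially the same route as the paper: view \eqref{eq:dgd-update} as gradient descent on $\tilde f$, invoke Proposition~\ref{proposition:preserve-smooth-and-convexity} for $L_\alpha$-smoothness and $\mu_\alpha$-strong convexity to get the linear contraction, and control the initial gap $\tilde f(\xavg)-\min\tilde f$ by exactly the one-shot-averaging estimate of Theorem~\ref{thm:one-shot-averaging} (your two-sided bound combining $f_i(T_i(x))\le f_i(x_i)+\tfrac{\alpha_i^2L_i}{2}\sqn{x-x_i}$ with $\min\tilde f\ge\avein f_i(x_i)$ is precisely that lemma). The only cosmetic differences are that you derive the standard GD rate via descent lemma plus PL rather than citing it, and you correctly flag the paper's $D$ versus $D^2$ inconsistency between the main text and the appendix.
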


There are four ways of making the right hand side in~\eqref{eq:dgd-conv-beta} small (i.e.\ decrease the communication complexity):
\begin{itemize}[leftmargin=0.15in,itemsep=0.01in,topsep=0pt]
\item \textbf{Communicate more.} Increase the number of communications $K$.
\item {\bf Homogeneous data}. The variance $V=\sum_{i=1}^{n} w_i \sqn{x_i - \xavg}$ can be seen as a measure of data heterogeneity. More homogeneous data sets will have smaller $V$, which leads to better performance.
\item {\bf Train simpler models.} Focusing attention on models with smaller $\hat{L}$ (adjust model design), or larger $\hat{\mu}$ (e.g., add more regularization).
\item {\bf Put more weight on local models.} If we prefer local models to the global model, then  $\alpha_i$ is small, and hence fewer communications are needed to achieve any given accuracy.
\end{itemize}

Armed with Theorem~\ref{theorem:dgd-convergence}, we make good on our promise in Section~\ref{sec:desiderata} and show that FLIX can be solved using \emph{any} communication budget. Looking at~\eqref{eq:dgd-conv-beta} we see that for any fixed $\epsilon > 0$ we have $$\tilde{f}(x^k) - \min_{x \in \R^d} \tilde{f}(x) \leq \epsilon$$ as long as $$\beta \leq A q^k,$$ where $A=\sqrt{2\epsilon/(\hat{L} V)}$ and $q = 1/\sqrt{1-\frac{\hat{\mu}}{\hat{L}}}$.

Putting this together leads to the following observations:
\begin{itemize}[leftmargin=0.15in,itemsep=0.01in,topsep=0pt]
\item If $\beta=0$, the problem can be solved with $0$ communications (i.e.\ ,\ each device $i$ independently computes the pure local model $x_i$).
\item If $0<\beta \leq A$, the problem can be solved with $1$ communication (i.e., compute $x^{avg}$). This follows from Theorem~\ref{thm:one-shot-averaging}, and also from the more general result Theorem~\ref{theorem:dgd-convergence} by setting $K=0$.
\item If  $A <\beta \leq Aq$, the problem can be solved with $2$ communications (1 communication to compute $x^0=x^{avg}$, followed by one iteration of distributed GD).
\item  If $Aq^{k-1} <\beta \leq A q^k$,  the problem can be solved with $k+1$ communications (1 communication to compute $x^0=x^{avg}$, followed by $K$ iterations of distributed gradient descent).
\item If $\beta=1$, we need $1$ communication to compute $x^0=x^{avg}$, followed by $k\geq \frac{\bar{L}}{\bar{\mu}}\log \frac{\bar{L}V}{2\varepsilon}$ iterations of distributed gradient descent. This is recovers the standard communication complexity of gradient descent needed to find the optimal solution of the average risk minimization problem~\eqref{eq:ERM}.
\end{itemize}

In the supplementary, we develop other algorithms for solving~\eqref{eq:FLIX-problem} such as distributed gradient descent with compression~\citep{alistarh16_qsgd} and DIANA~\citep{diana_paper}. We point out that \Cref{thm:non_cvx_dcgd} and \Cref{thm:diana_nncvx} analyze the compressed gradient descent and DIANA algorithms applied in the non-convex case, showing explicit dependence of the convergence rates on $\alpha$, thus this convergence theory applies to convex as well as non-convex objectives. Because~\eqref{eq:FLIX-problem} has a standard finite-sum form, many more algorithms can be used to solve it, e.g.\ accelerated minibatch SGD~\citep{cotter11_better_mini_batch_algor_via} or SARAH~\citep{nguyen17_sarah}. We leave the task of finding the \emph{optimal} algorithm for~\eqref{eq:FLIX-problem} to future work.

\section{EXPERIMENTS}{\label{sec:experiments}}

{\bfseries Logistic regression with $l_2$ regularizer. } For our first experiment, we consider a setup where each device runs regularized logistic regression:
\begin{align}\label{fnct:logreg_l2}
 f_i(x) \coloneqq \frac{1}{k_i}\sum\limits_{j=1}^{k_i} \left[ \log{(1 + \exp{(-a_{i,j}^\top x)})}\right] + \frac{\lambda}{2} \|x\|^2,
\end{align}
where $a_{i,j} \in \RR^d$ are given for all $j = 1 \dots k_i$, $k_i$ is a number of data points associated with device $i$ and $\lambda$ is a regularization parameter. The objective given by~\eqref{fnct:logreg_l2} is typically used in classification problems. It is not difficult to show that $f_i$ is $\nicefrac{1}{4k_i}\norm{\sum_{i=1}^{k_i} a_i a_i^\top } + \lambda$-smooth and $\lambda$-strongly convex.

\begin{figure}[h]
\begin{subfigure}{0.32\linewidth}
\includegraphics[width=\linewidth]{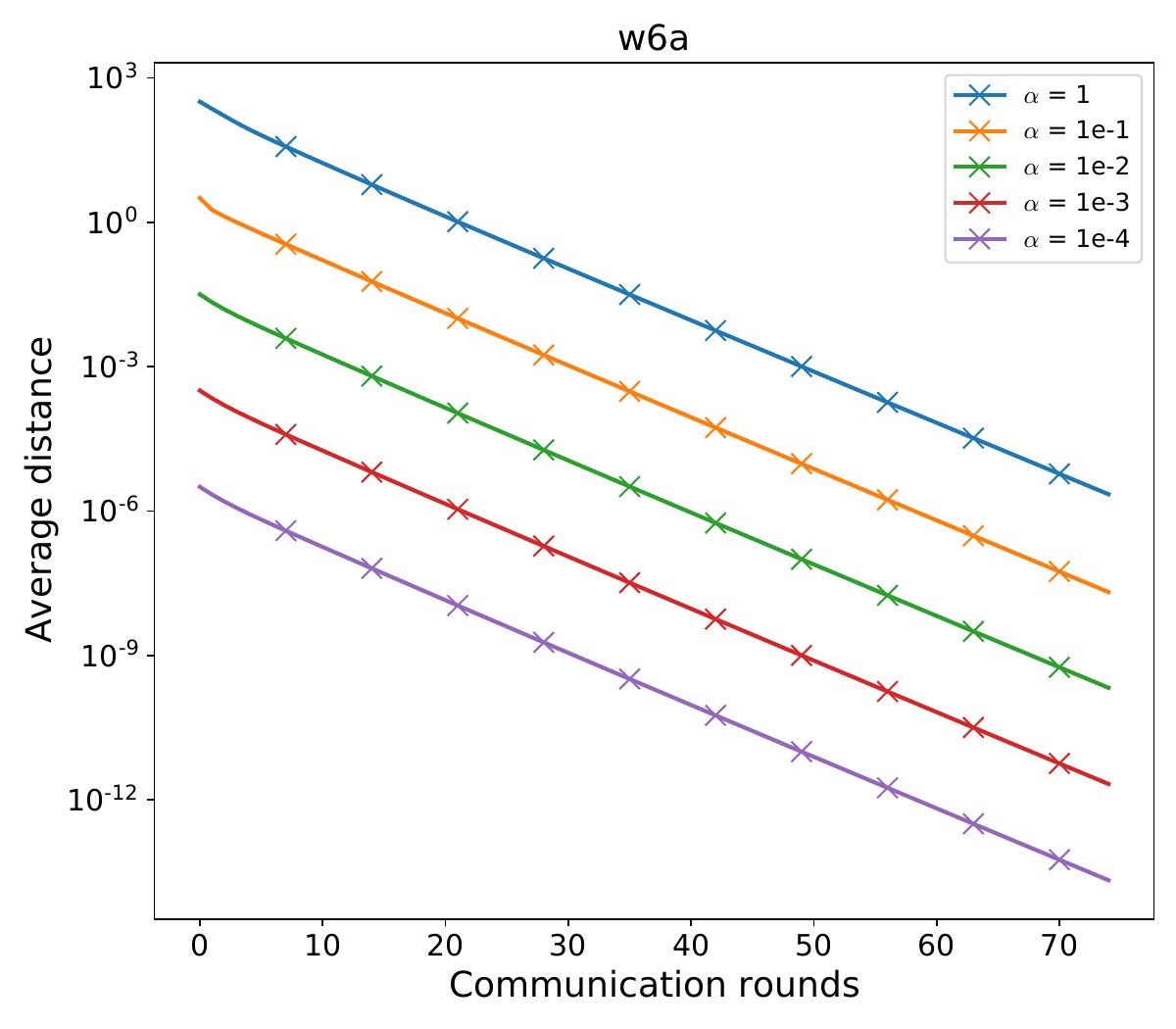}
\end{subfigure}
\begin{subfigure}{0.32\linewidth}
\includegraphics[width=\linewidth]{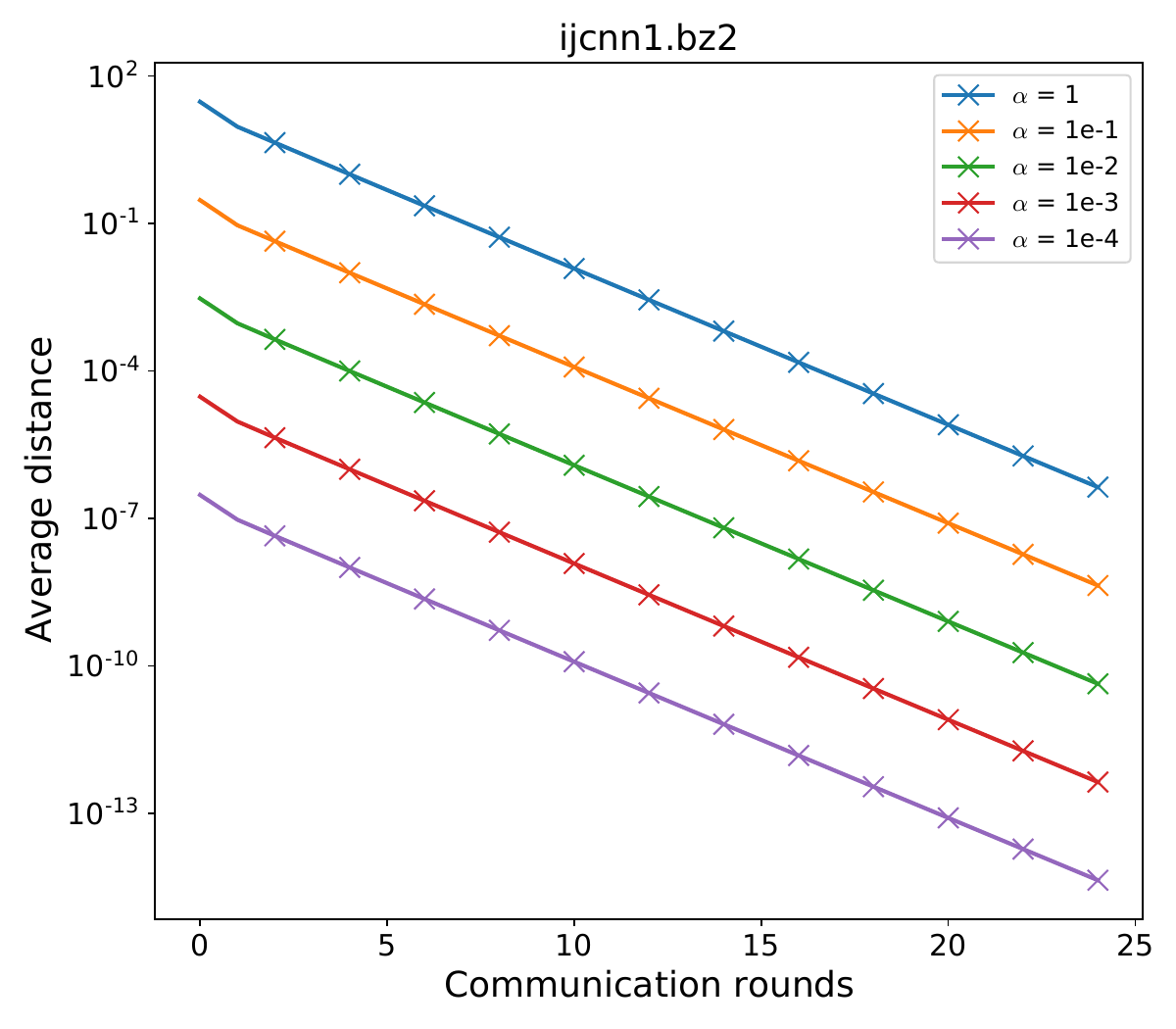}
\end{subfigure}
\begin{subfigure}{0.32\linewidth}
\includegraphics[width=\linewidth]{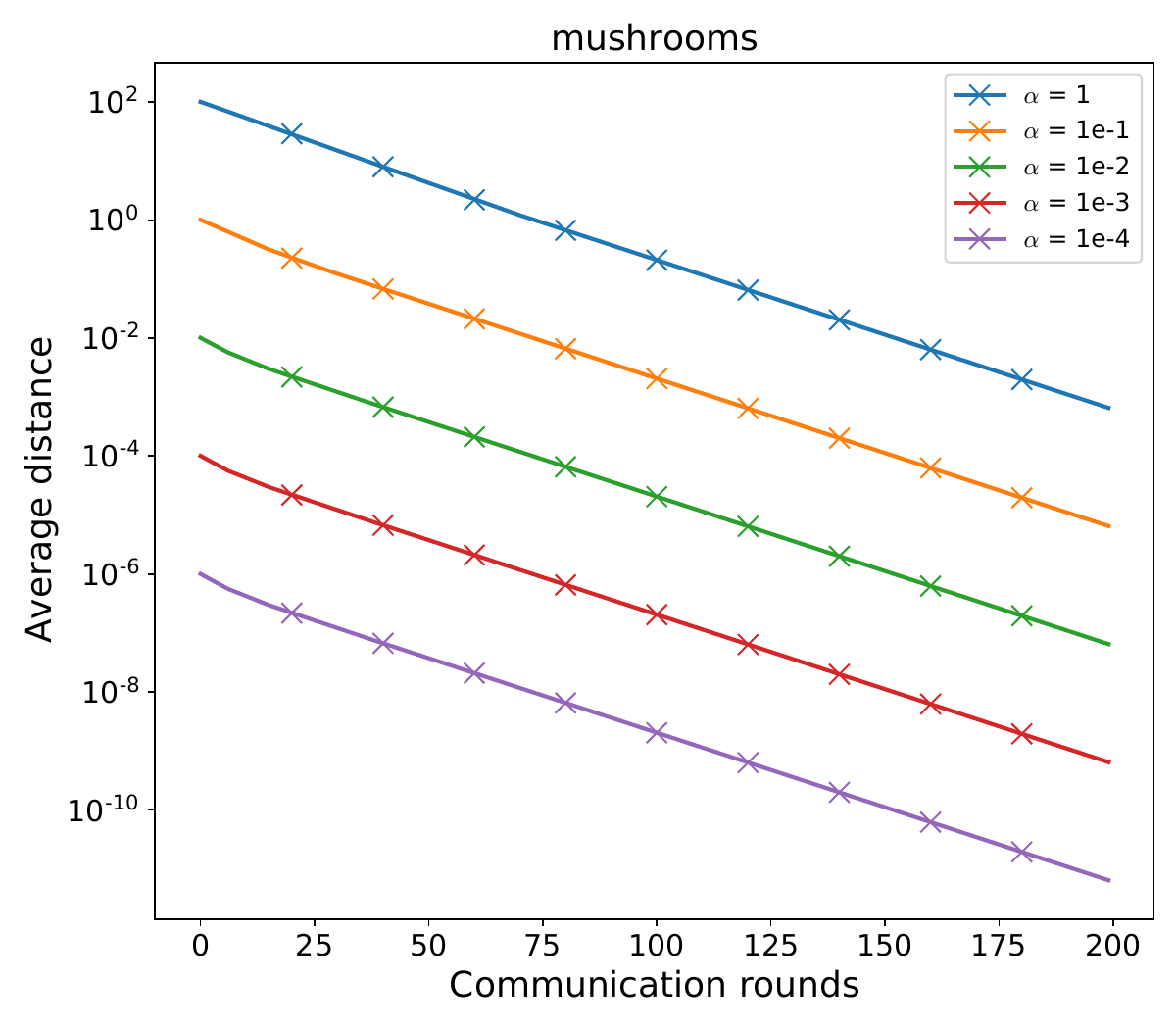}
\end{subfigure}
~\\
\begin{subfigure}{0.32\linewidth}
\includegraphics[width=\linewidth]{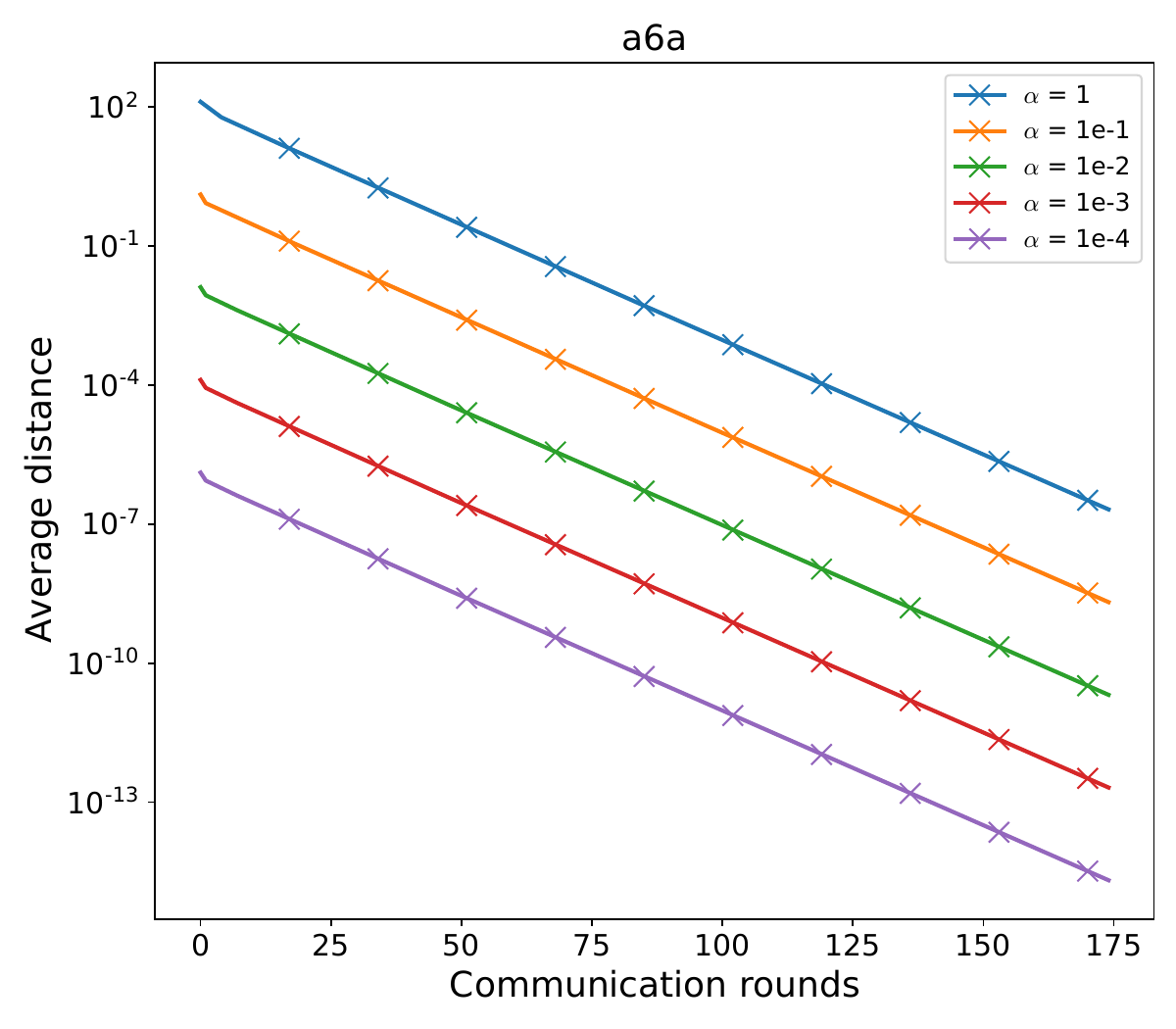}
\end{subfigure}
\begin{subfigure}{0.32\linewidth}
\includegraphics[width=\linewidth]{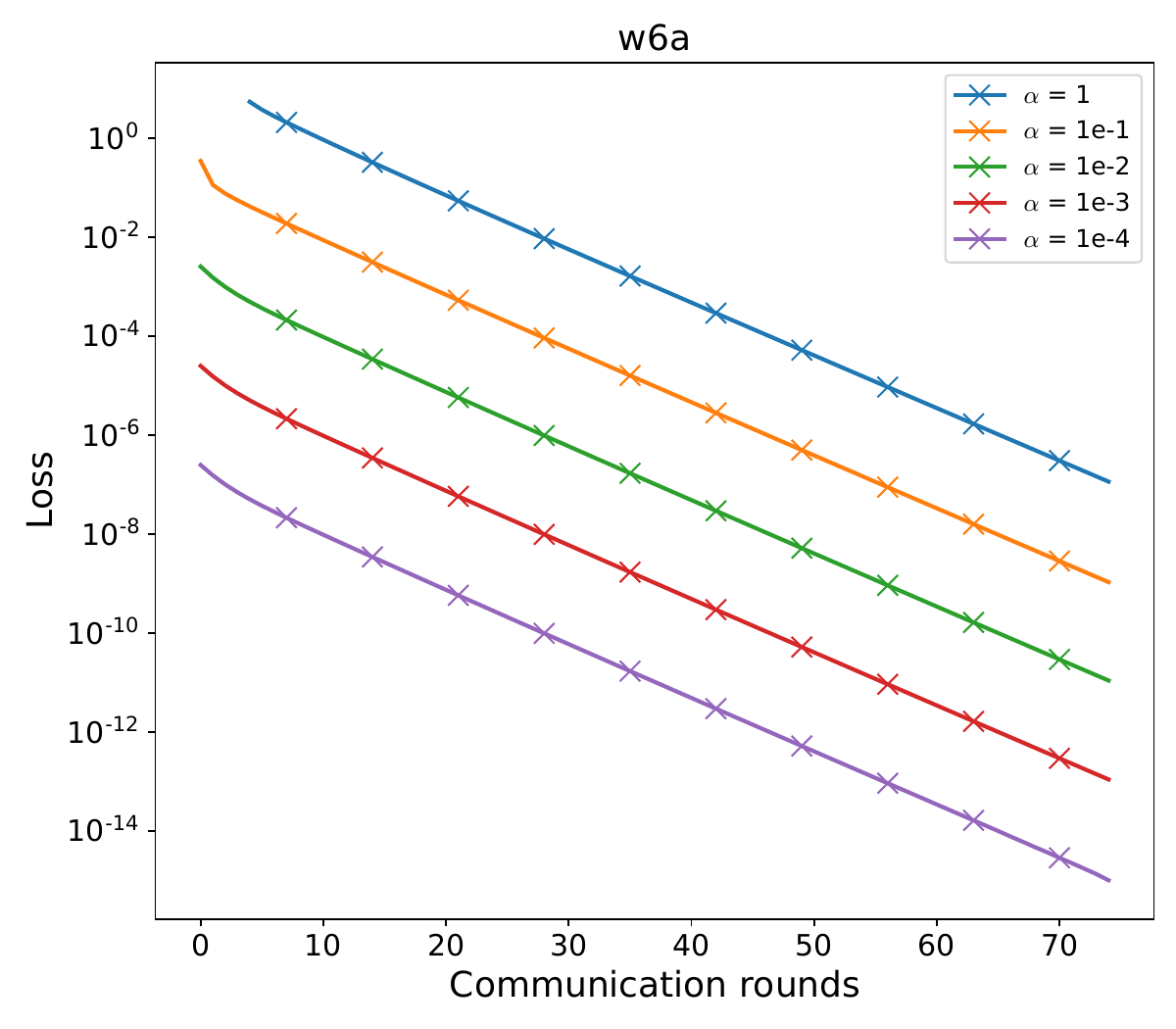}
\end{subfigure}
\begin{subfigure}{0.32\linewidth}
\includegraphics[width=\linewidth]{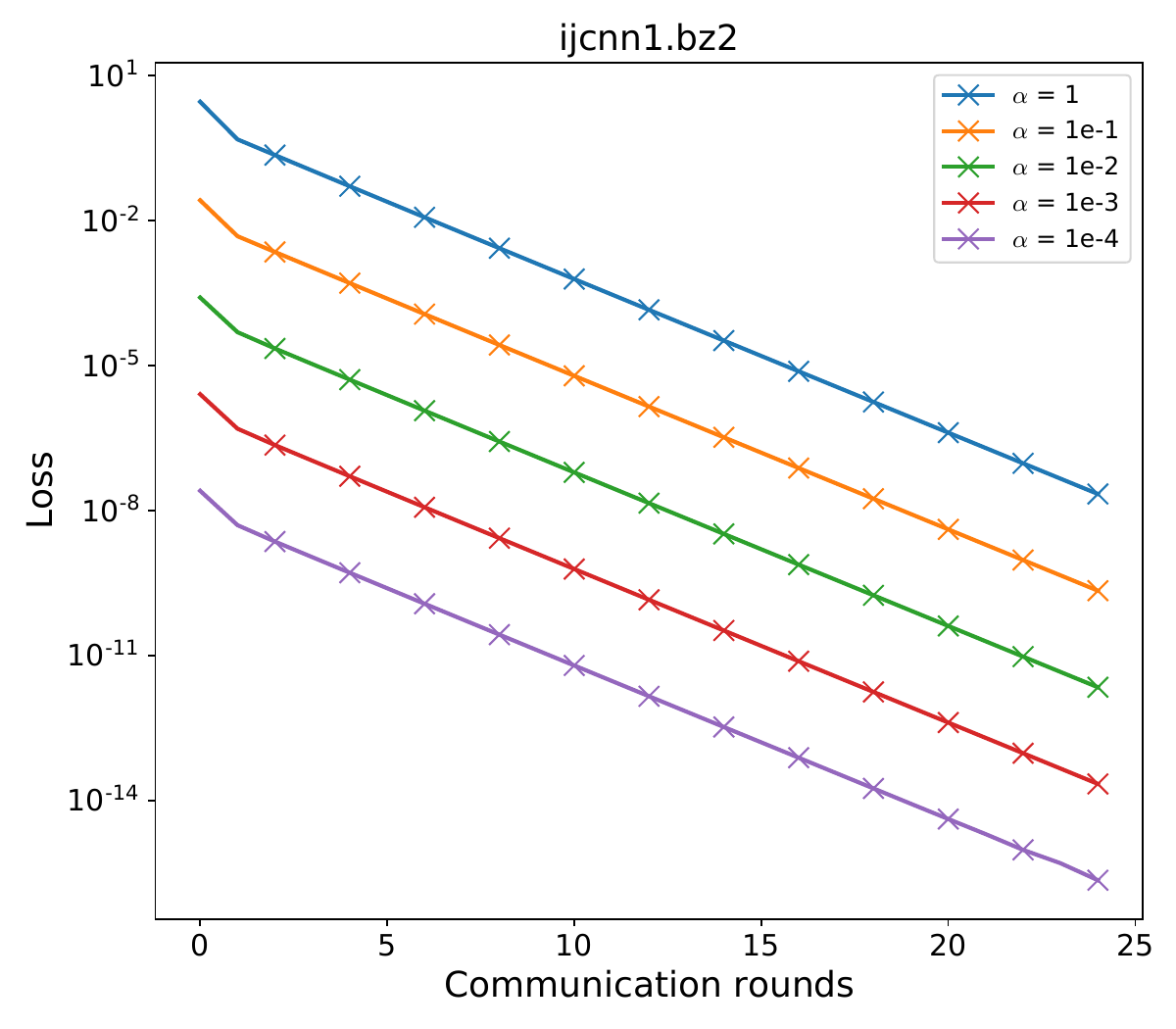}
\end{subfigure}
~\\
\centering
\begin{subfigure}{0.32\linewidth}
\includegraphics[width=\linewidth]{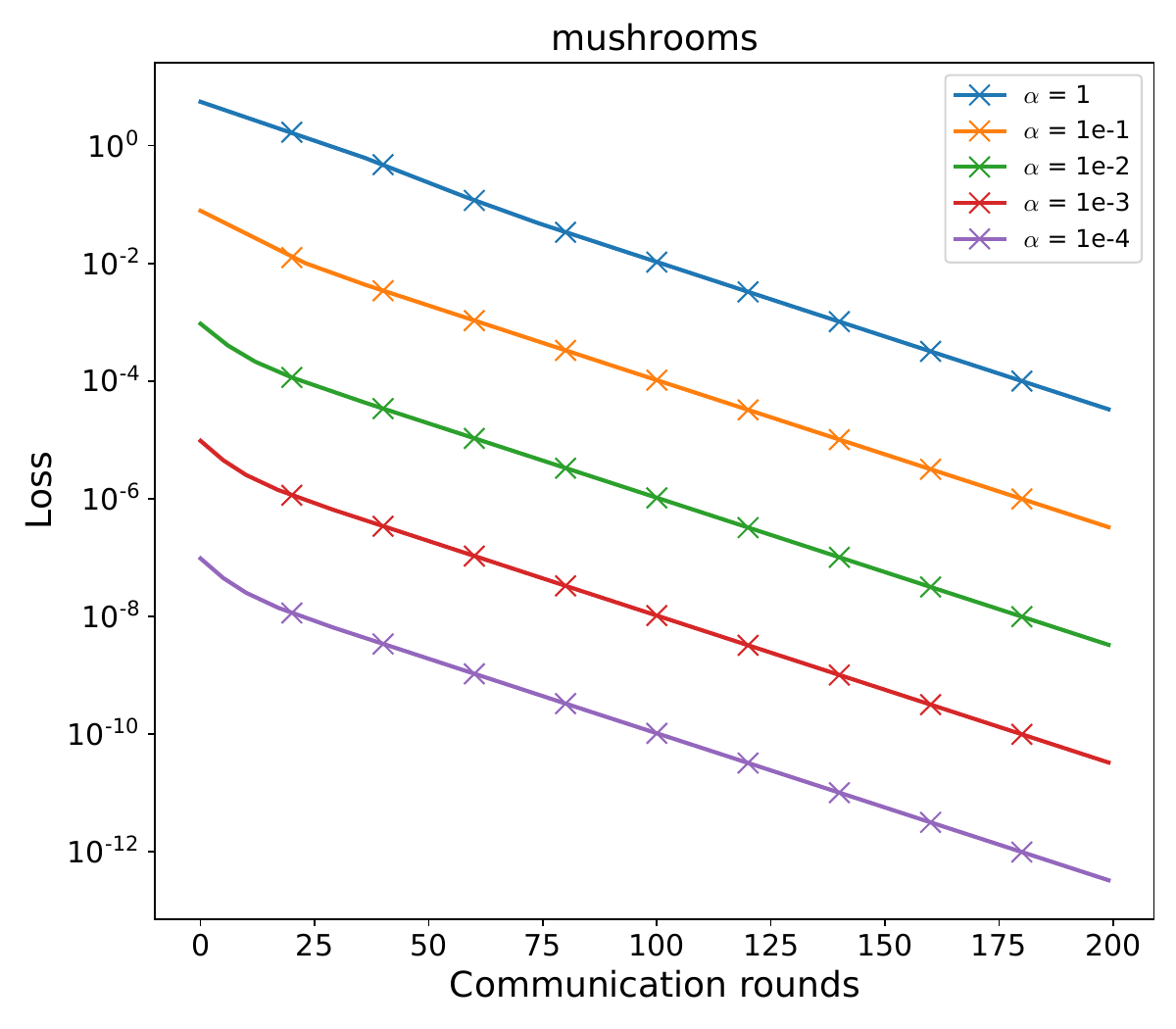}
\end{subfigure}
\begin{subfigure}{0.32\linewidth}
\includegraphics[width=\linewidth]{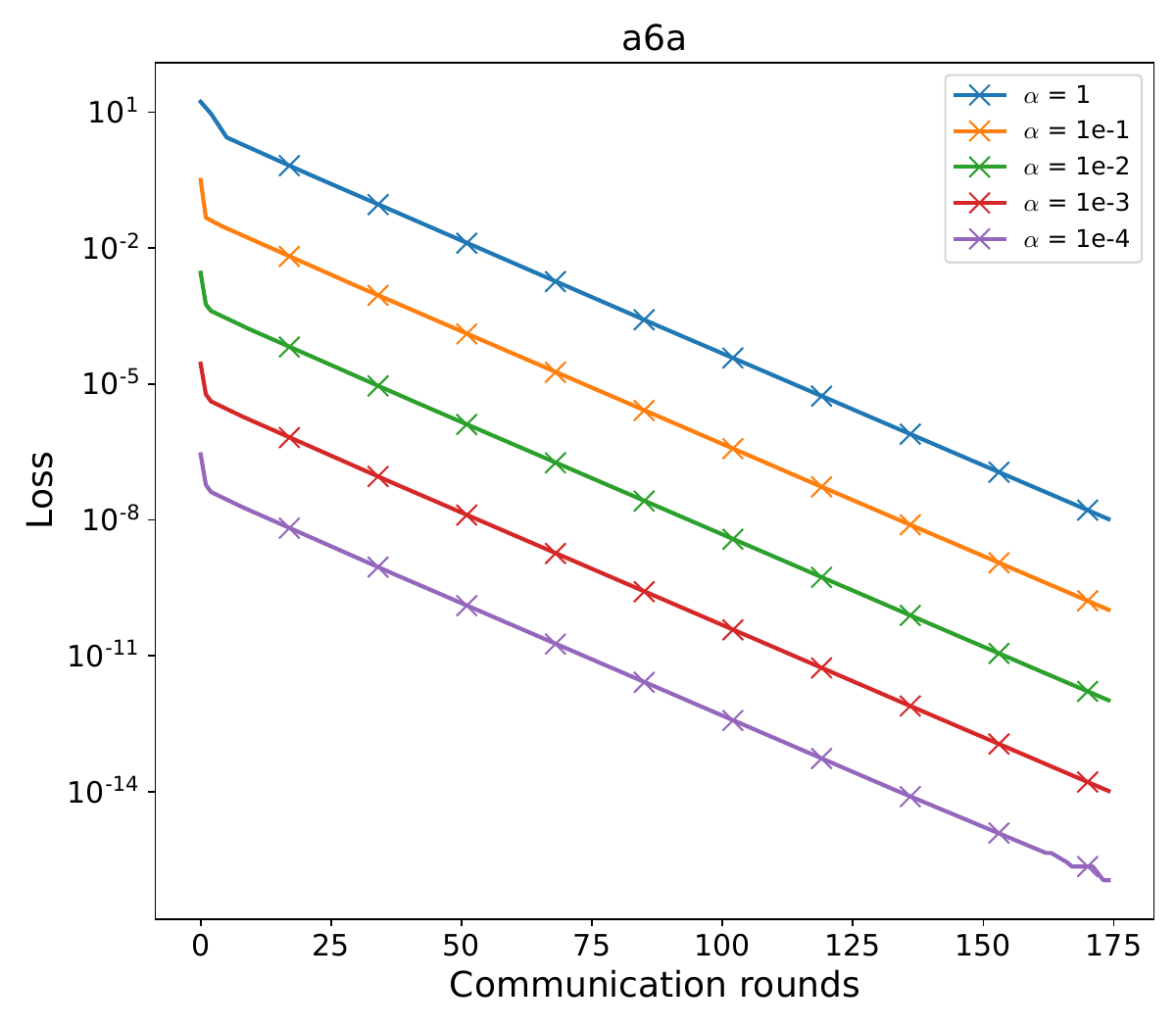}
\end{subfigure}
\caption{Squared averaged distance $\nicefrac{1}{n}\sum_{i=1}^n\|x_i - x_i^\ast\|^2$ and loss $f(x) - f^\ast $ vs. \# of communication rounds of Gradient Descent for logistic regression with $l_2$ regularizer. $\alpha_i = \alpha$ is set to the value indicated in the legend. }
\label{gd_plot}
\end{figure}

We use four datasets from LIBSVM~\citep{chang2011libsvm} for~\eqref{fnct:logreg_l2}: \texttt{w6a, mushrooms, ijcnn1.bz2, a6a}. We set all $\alpha_i$'s to be equal. We divide data equally between all machines while preserving the order of data points such that $i$-th machine owns data with indices $\lfloor \nicefrac{(i-1)r}{n}\rfloor  + 1$ up to $\lfloor \nicefrac{ir}{n}\rfloor$, where $r$ is the total number of data points. We set the regularization parameter $\lambda$ to $0.1$.
To find pure local models for each machine we run gradient descent with step size $\nicefrac{1}{L_i}$, where $L_i$ is $f_i$'s smoothness parameter until the norm of the gradient is below $10^{-6}$. We set the condition number to be $\kappa = \frac{1}{n} \sum_{i=1}^{n} L_i/\lambda$.

In this experiment, we investigate the convergence of gradient descent and look into the dependence between convergence and value of $\alpha$. As expected, Figure~\ref{gd_plot} confirms that smaller values of $\alpha$ lead to better convergence as we rely more on local solutions and thus start closer to the optimal solution. Also note that the speed of the convergence appears to be constant among different values of $\alpha$ which is also predicted by our theory as the same $\alpha$ on each machine does not affect the conditioning of the global problem, see Proposition~\ref{proposition:preserve-smooth-and-convexity}. We also use the DIANA algorithm~\citep{diana_paper} with the random sparsification (also known as Rand-$k$ compression), where set $k$ coordinates to zero at random before communicating gradients to the server. Figure~\ref{diana_log_reg_omega} shows the effect of varying $k$ on the convergence of the method in terms of communication rounds. We consider 7 values of $k$ linearly spaced between $1$ and $d$. Similar to Figure~\ref{gd_plot}, we observe that smaller values of $\alpha$ lead to better convergence. The rate at which the algorithm converges linearly is controlled by the compression constant $\omega +1 = \nicefrac{d}{k}$ or the effective conditioning $\nicefrac{\kappa (\omega + 1)}{n}$. This is in line with the theory for DIANA~\citep{diana_paper}.

\begin{figure*}[h]
\includegraphics[width=\linewidth]{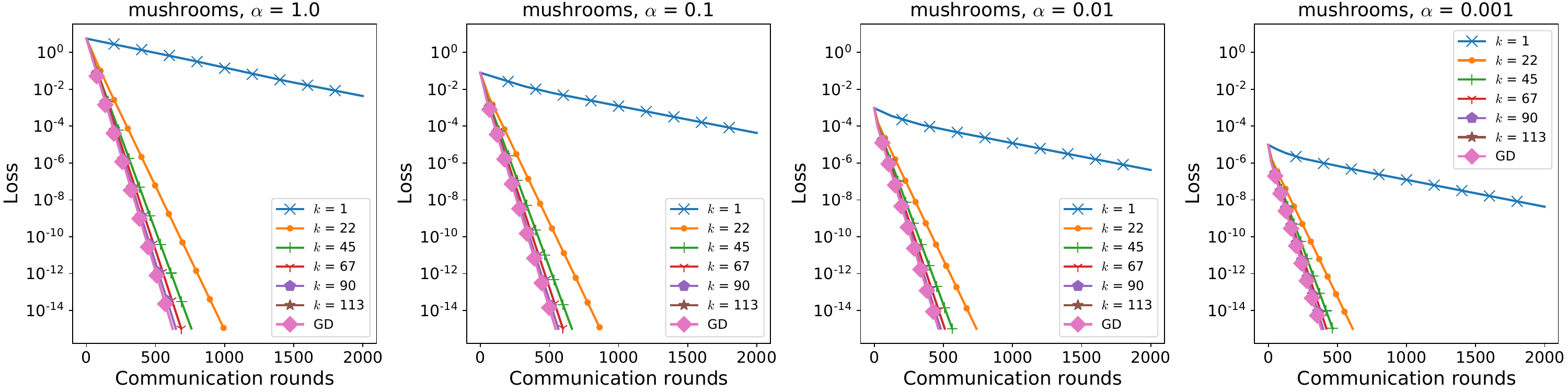}
\caption{Loss $f(x) - f^\ast$ vs. \# of communication rounds of DIANA for logistic regression problem $l_2$ regularizer, $k$  is a sparsification parameter of Random-$k$ compressor.}
\label{diana_log_reg_omega}
\end{figure*}

\begin{figure*}[ht!]
	\centering
	\begin{subfigure}{0.245\linewidth}
		\includegraphics[width=\linewidth]{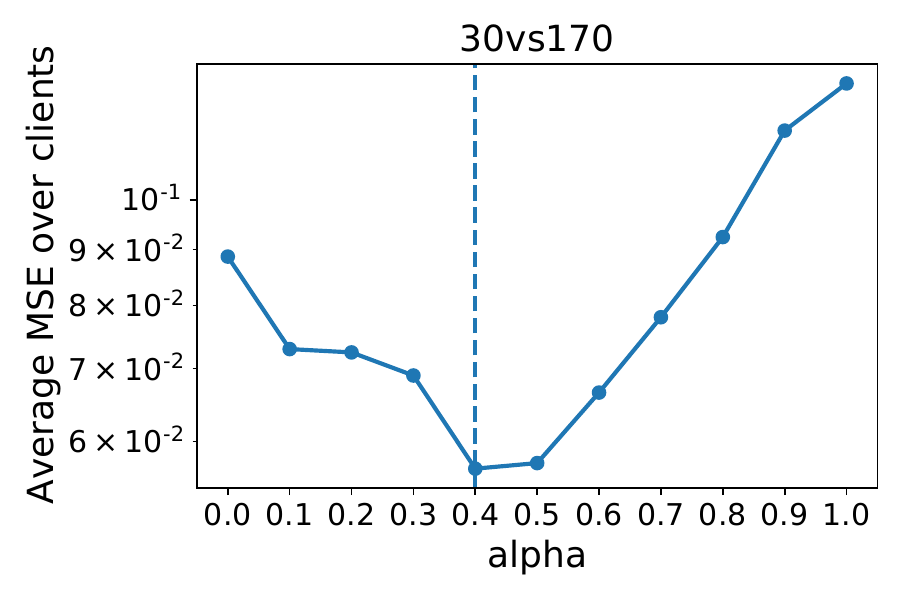}
	\end{subfigure}
	\begin{subfigure}{0.245\linewidth}
		\includegraphics[width=\linewidth]{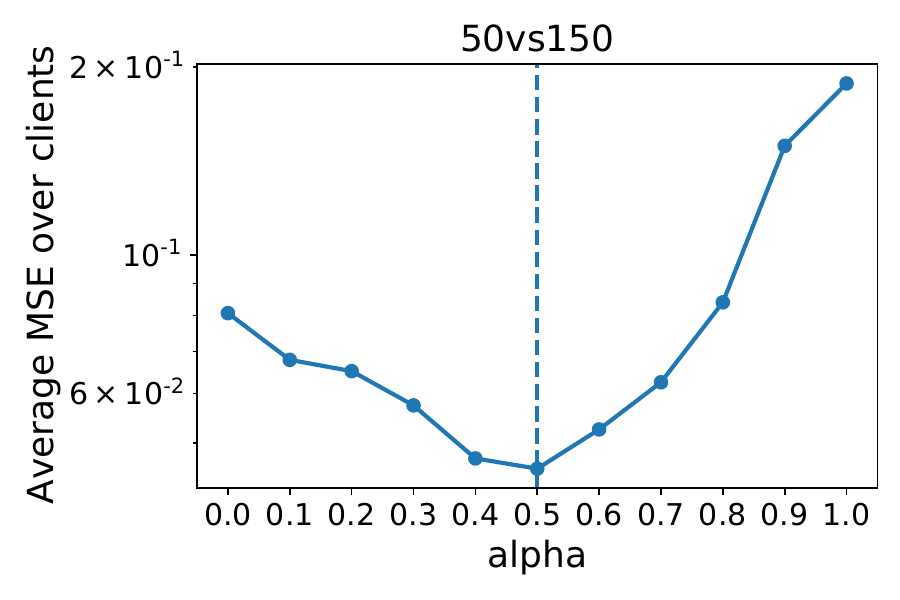}
	\end{subfigure}
	\begin{subfigure}{0.245\linewidth}
		\includegraphics[width=\linewidth]{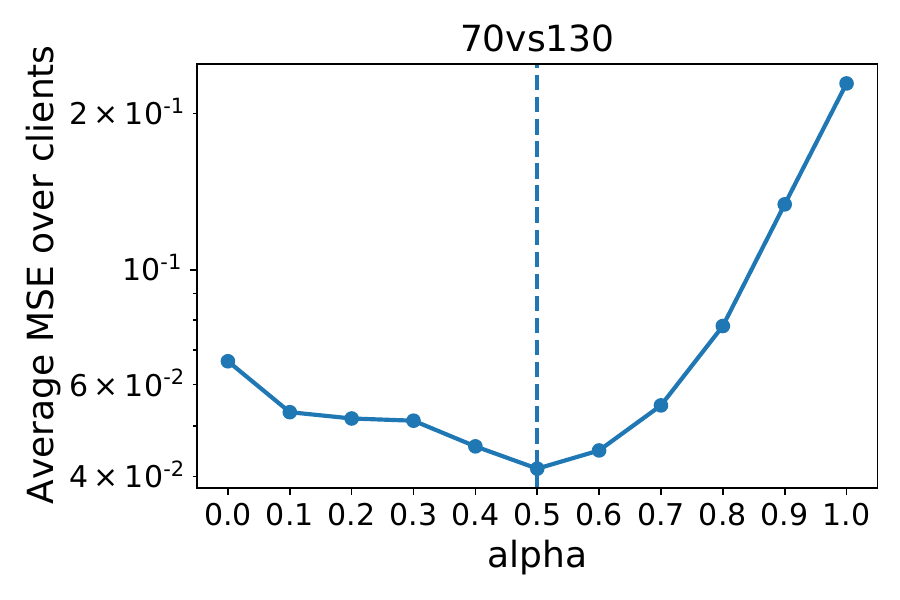}
	\end{subfigure}
	\begin{subfigure}{0.245\linewidth}
		\includegraphics[width=\linewidth]{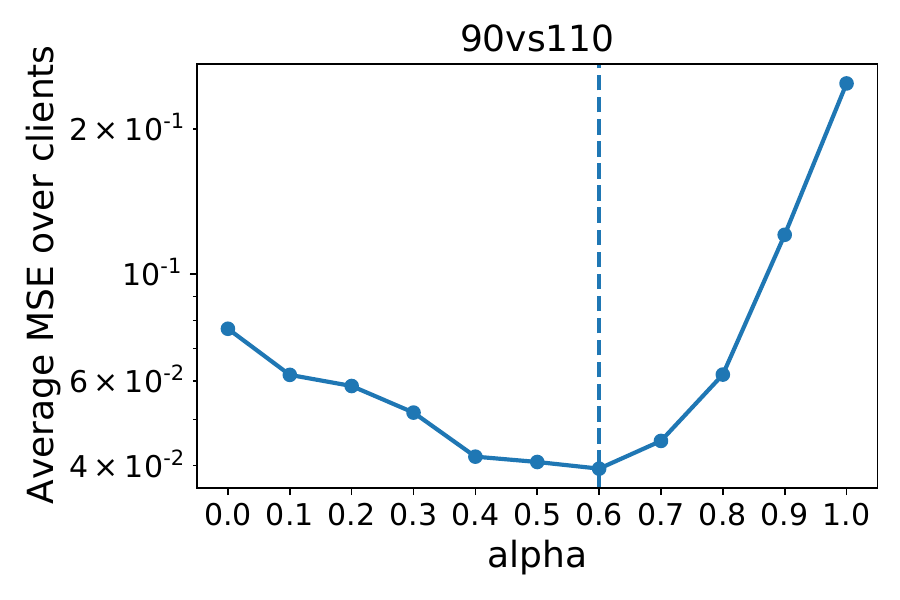}
	\end{subfigure}
	~\\
	\begin{subfigure}{0.245\linewidth}
		\includegraphics[width=\linewidth]{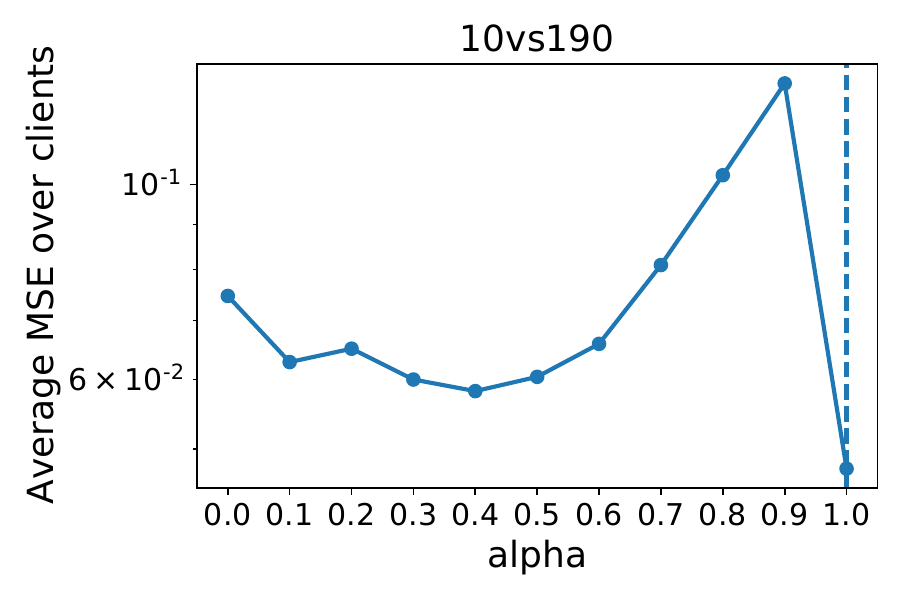}
	\end{subfigure}
	\begin{subfigure}{0.245\linewidth}
		\includegraphics[width=\linewidth]{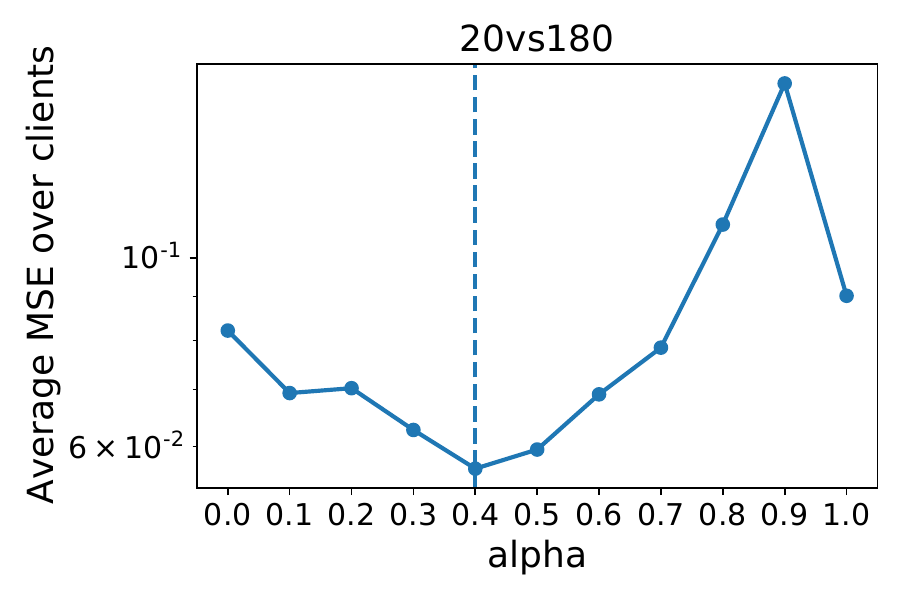}
	\end{subfigure}
	\begin{subfigure}{0.245\linewidth}
		\includegraphics[width=\linewidth]{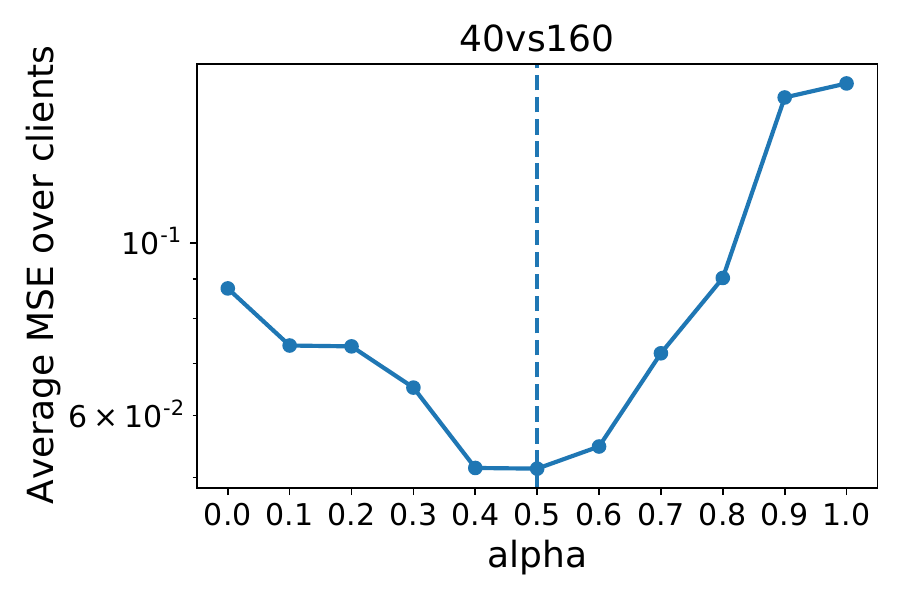}
	\end{subfigure}
	~\\
	\begin{subfigure}{0.245\linewidth}
		\includegraphics[width=\linewidth]{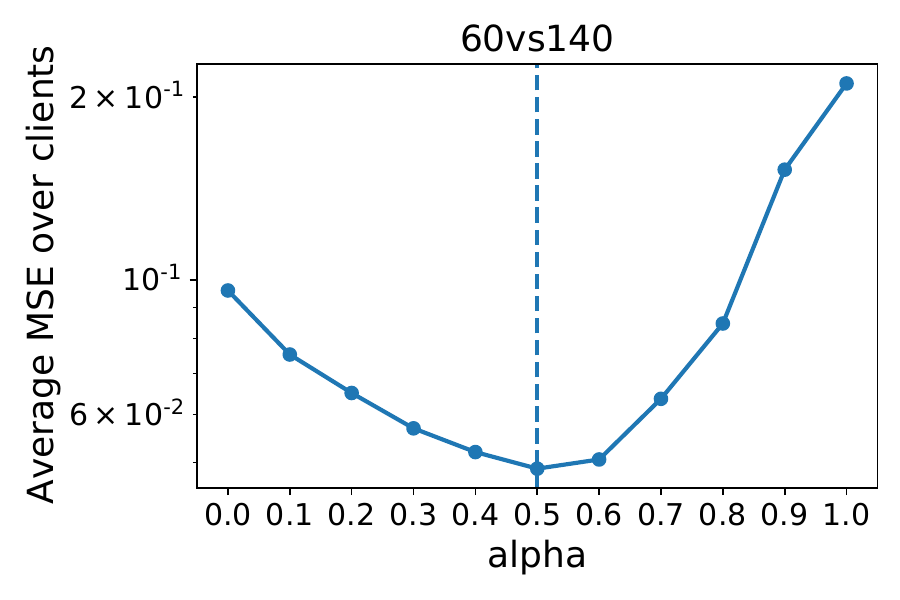}
	\end{subfigure}
	\begin{subfigure}{0.245\linewidth}
		\includegraphics[width=\linewidth]{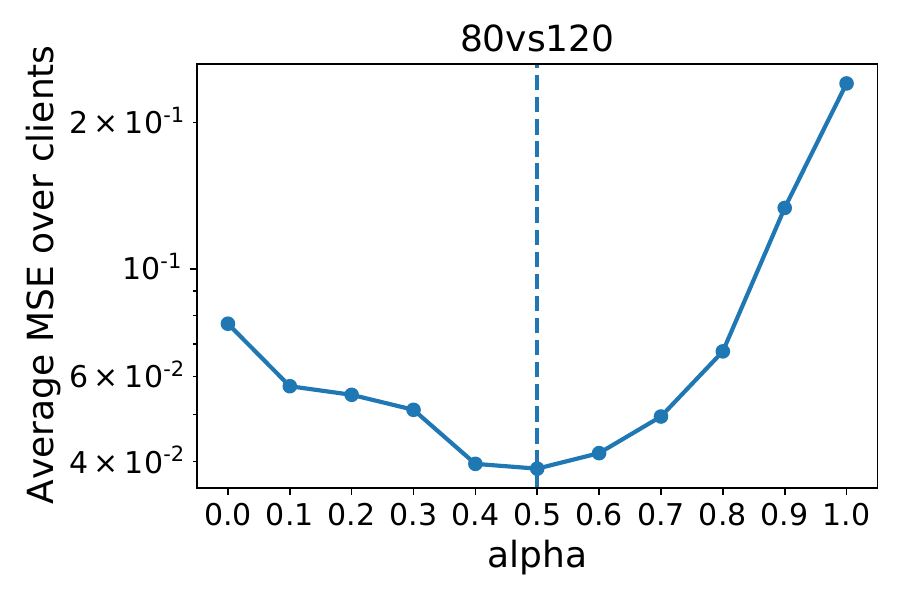}
	\end{subfigure}
	\begin{subfigure}{0.245\linewidth}
		\includegraphics[width=\linewidth]{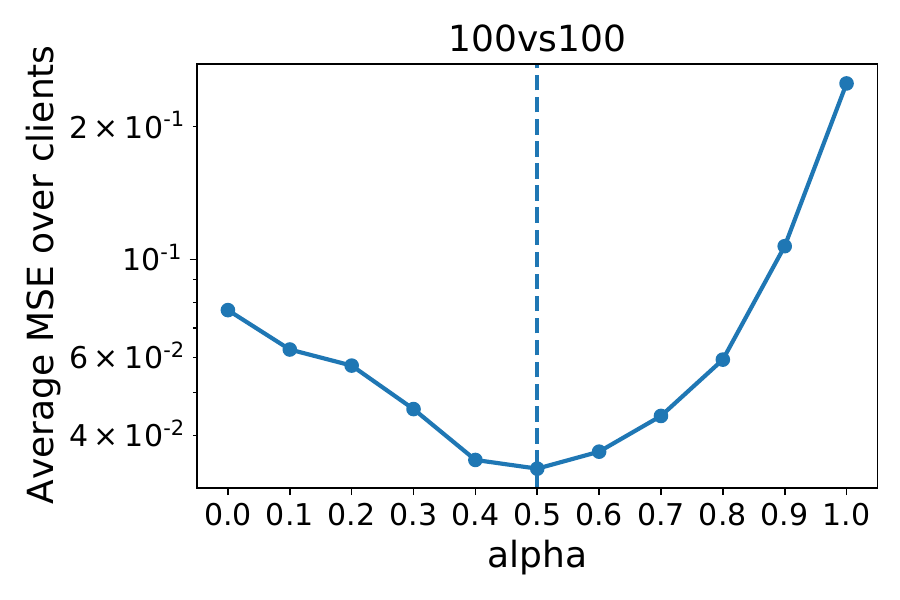}
	\end{subfigure}
	\caption{Average MSE vs. personalization parameter $\alpha$. }
	\label{two_sine_func_values}
\end{figure*}

\begin{figure*}[ht]
	\centering
	\begin{subfigure}{0.45\linewidth}
		\includegraphics[width=\linewidth]{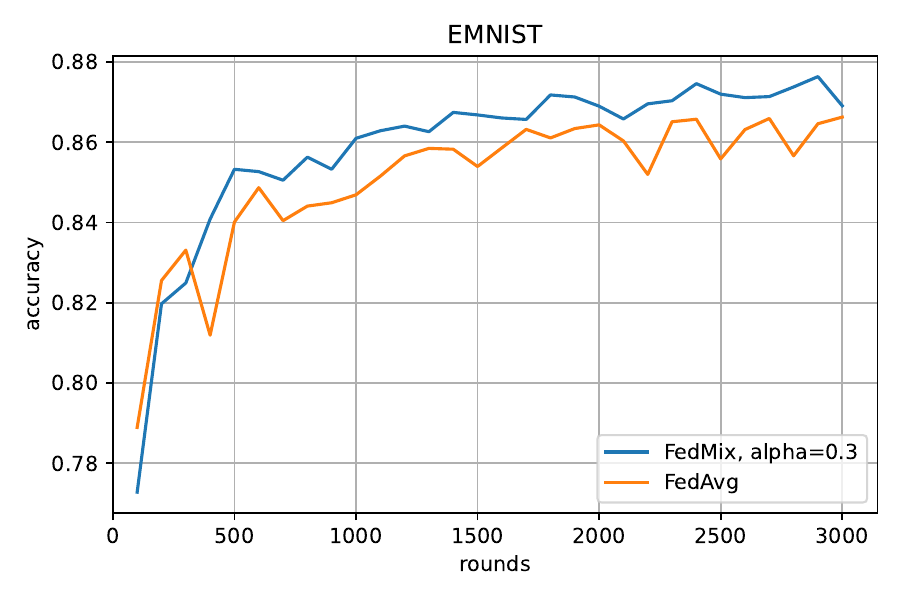}
		\caption{EMNIST}
	\end{subfigure}
	\begin{subfigure}{0.45\linewidth}
		\includegraphics[width=\linewidth]{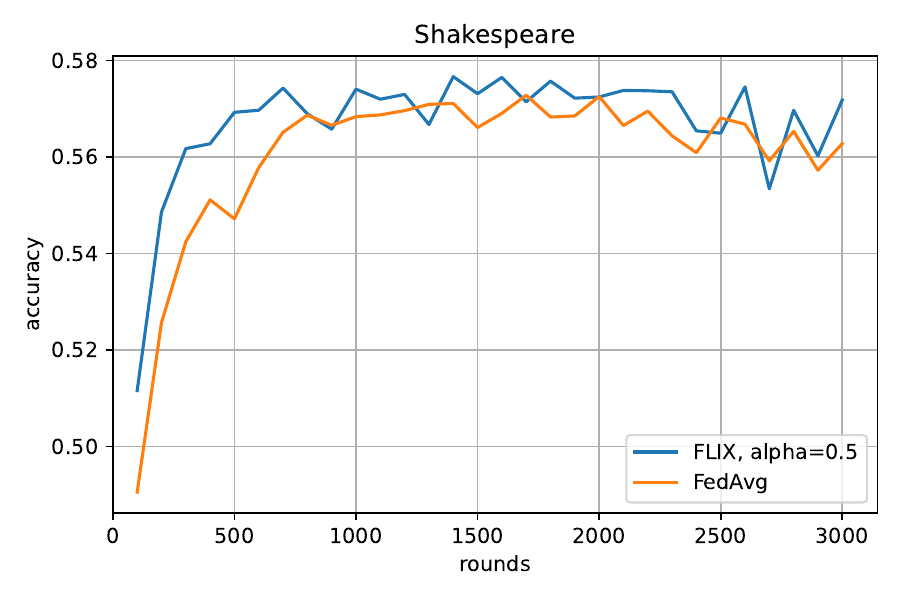}
		\caption{Shakespeare}
	\end{subfigure}
	\caption{Generalization progress of FLIX vs. FedAvg on high-scale datasets.}
	\label{fig:high_scale_progress}
\end{figure*}

\begin{table}[h]
	\caption{Test accuracy comparisons of different methods after 3k rounds of training on EMNIST and Shakespeare datasets. Numbers next to FLIX model denote the value of $\alpha_i = \alpha$.}\label{table:flix_high_scale}
	\begin{center}
		\begin{tabular}{lll}
			\textbf{MODEL} &\textbf{EMNIST} &\textbf{SHAKESPEARE}\\
		    \hline \\
			FLIX, 0.1 & 0.862 &  0.5620\\
			FLIX, 0.3 & {\bfseries 0.8691} & 0.5291\\
			FLIX, 0.5 & 0.8634 & {\bfseries 0.5718}\\
			FLIX, 0.7 & 0.8652 & 0.5146\\
			FLIX, 0.9 & 0.8397 & \\
			FedAvg & 0.8663 &  0.5629\\
		\end{tabular}
	\end{center}
\end{table}

For completeness, we include convex experiments with unregularized logistic loss, extra experiments for all the combinations of 5 datasets and 3 algorithms -- gradient descent, compressed gradient descent, and DIANA, as well as a detailed experimental description in the supplementary material (see \Cref{sec:extra-experiments}).

{\bfseries Generalization experiment 1: Fitting Sine Functions with two-layer neural networks.} Following~\citet{finn17_model_agnos_meta_learn_fast} and~\citet{minprox}, we show the generalization advantages of the proposed method on the following regression problem. We define $i$-th client's function $f_i(x) = a_i \sin{(x + b_i)}$, where amplitude $a_i$ and phase $b_i$ lie in the intervals $[0.1, 0.5]$ and $[0, 2\pi]$, respectively. For each client, we fix $a_i$ and $b_i$ and sample $50$ points uniformly at random from the interval $[-5.0, 5.0]$. We measure regression fit in terms of mean squared error (MSE) loss.  To train a model each client adopts a neural net with 2 hidden layers of size 40 with tanh activation. Further technical details are deferred to the appendix. For the experiment, we first sample 2 pairs $\{a_i, b_i\}$ and each of $200$ clients is assigned one pair, we investigate different proportion-- $(30, 170), (50, 150),  (70, 130), (90, 110)$. We then train our FLIX formulation with $\alpha_i = \alpha = 0.1, 0.2, \hdots, 1$. For testing for each client generates a new dataset of size 2000. Figure~\ref{two_sine_func_values} shows average MSE over clients against different values of $\alpha$ for different proportions. As this figure indicates, optimal $\alpha$ for which test average MSE is minimal can dramatically outperform the edge cases of either global model for all tasks or personalized model trained only on the local dataset. This underlines the main benefit of FLIX formulation, which aims to find an optimal trade-off between local personalization and a single global model.

{\bfseries Generalization experiment 2: Comparison to FOMAML and Reptile with mutli-class logistic regression.} Inspired by~\cite{reddi2021adaptive}, we conduct a similar experiment to compare generalization capabilities, i.e.,  test accuracy, of FLIX and its two  baselines FOMAML~\citep{finn17_model_agnos_meta_learn_fast}, and Reptile~\citep{nichol18_first_order_meta_learn_algor} on the following tasks. For the first experiment (see Figure~\ref{subfig:100w_2c}), we take 500 train data points of two clients (with client ids `00000267' and `00000459') from the Stack Overflow dataset~\citep{stackoverflow_dataset} and divide them among 100 workers so that there are 50 workers with 10 train data points from the first client and another 50 with 10 data points from the second client. For the second experiment (see Figure~\ref{subfug:50w_50c}), a worker gets 90 train data points from a distinct client. For both experiments, each objective component  $f_i$ is a cross-entropy loss for multi-class logistic regression. Further technical details and the hyperparameters tuning for a fair comparison can be found in the supplementary. In the test phase, for each client, we used a hold-out testing dataset of size 300 (the same dataset has been used for workers related to the same client in the first experiment). It can be observed from Figure~\ref{subfig:100w_2c}, that for wide range of $\alpha_i$, $\alpha_i \in \{0.2, 0.4, 0.6, 0.8 \}$ FLIX exhibits a better generalization than its classical meta-learning competitors (FOMAML and Reptile), and it can lead to improvement of up to $11\%$ in recall@5. Figure~\ref{subfug:50w_50c} shows that in the more real-world scenario FLIX outperforms FOMAML and Reptile while showing its best test accuracy in non-edge $\alpha$. More experimental details and figures are included in the supplementary (\Cref{sec:extra-experiments}).

{\bfseries Generalization experiment 3: Comparison to FedAvg with CNNs and LSTMs on large-scale datasets.} For this section, we implement all the methods using the Python FedJAX library~\citep{fedjax2020github}. We compare FLIX's solution versus pure global obtained by FedAvg on two large-scale FL datasets: the EMNIST~\citep{leaf} and Shakespeare~\citep{mcmahan54} datasets, solving a character prediction task on the first dataset and a next-character prediction task on the second dataset. For the first task, we adopt the EMNIST CNN model as described in~\citep{reddi2021adaptive}. To choose the hyperparameters for FLIX, we run grid search on a five-dimensional grid: pure local model (PLM) batch sizes [$10, 20, 50, 100$], pure local model learning rates [$10^{-5}, 10^{-4}, 10^{-3}, 10^{-2}, 10^{-1}, 1$], FLIX batch sizes [$20, 50, 100, 200$], and FLIX learning rates [$10^{-10}, 10^{-9}, \dots, 1$]. We run SGD for 100 local epochs to compute PLMs for each client; with PLMs computed, we run Adam to compute FLIX global model. At each round of training in FLIX, we apply client subsampling and randomly choose ten clients to participate in the training. For the final training, we choose the five hyperparameters showing the highest accuracy on a hold-out validation (different from the test) dataset (cut from train EMNIST dataset). For training FedAvg, we use the same hyperparameters and setting used by~\citep{reddi2021adaptive}. For the second task, we use the next-character prediction model described in~\citep{mcmahan54}. We choose hyperparameters as follows: for the pure local model learning rates, we search over the grid [$10^{-1}, 10^{-0.5}, 10^0, 10^{0.5}, 10^1$], for FLIX learning rates, we search over the grid [$10^{-2}, 10^{-1.5}, \dots, 10^2$], for the batch sizes for PLM and FLIX we search over the grid [$1, 4, 10, 20$]. To compute PLMs, we run SGD for 25 epochs on each client locally. We then use SGD to compute the FLIX global model. At each round, only ten clients were chosen uniformly at random to participate in the training. We report test accuracy of the final models in~\Cref{table:flix_high_scale} and show the generalization progress for the best alphas in~\Cref{fig:high_scale_progress}. As~\Cref{table:flix_high_scale} shows the generalization capability of FLIX is superior to the purely global model obtained by FedAvg on both datasets.

\section{ACKNOWLEDGMENTS}

We would like to thank David Pugh for his support in configuring experiments on the KAUST HPC cluster.

\bibliographystyle{plainnat}
\bibliography{FL-mixture.bib}

\clearpage
\onecolumn
\part*{APPENDIX}
\tableofcontents

\clearpage
\section{BASIC FACTS AND NOTATION} \label{sec:basic_facts}

We will use the triangle inequality for norms:
\begin{equation}
  \label{eq:triangle-inequality}
  \norm{\sum_{i=1}^{n} a_i} \leq \sum_{i=1}^n \norm{a_i}.
\end{equation}
We say that a function $g$ is $L_g$-smooth if for any $x, y \in \R^d$ we have
\begin{equation}
  \label{eq:smoothness-def}
  \norm{g(x) - g(y)} \leq L_g \norm{x - y}.
\end{equation}
Note that \eqref{eq:smoothness-def} implies
\begin{equation}
  \label{eq:smoothness-func-consequence}
  g(x) \leq g(y) + \ev{\nabla g(y), x-y} + \frac{L_g}{2} \sqn{x-y}.
\end{equation}
We say that a function $g$ is $\mu_g$-strongly convex for $\mu_g > 0$ if for all $x, y \in \R^d$ we have
\begin{equation}
  \label{eq:strong-conv-def}
  g(x) \geq g(y) + \ev{\nabla g(y), x-y} + \frac{\mu_g}{2} \sqn{x-y}.
\end{equation}
If \eqref{eq:strong-conv-def} holds with $\mu_g = 0$, we say that $g$ is convex.

We say $\cC \in \mathbb{B}^d(\omega)$ is a compression operator if $\cC$ is unbiased (i.e., $\EE [\cC(x)] = x$ for all $x \in \RR^d$) and if the second moment is bounded as
\begin{align}\label{ineq:cmp_oper_variance}
\EE{ \|\cC(x) - x\|^2 } \leq \omega \|x\|^2 \ \forall x\in \R^d.
\end{align}

Note that if $\cC \in \mathbb{B}^d(\omega)$, then
\begin{align}
\label{eq:comress_cor}
\EE{\|\cC(x)\|^2} \leq (1 + \omega) \|x\|^2, \forall x\in \R^d.
\end{align}

Let $f$ be a convex function. Then the Bregman divergence associated with $f$ for points $x, y \in \RR^d$ is defined in the following way:
$$
D_f(x, y) = f(x) - f(y) - \langle \nabla f(y), x - y \rangle.
$$

\cite{Nesterov2018} shows that if function $f$ is convex and $L$-smooth, then for all $x$ and $y$
\begin{align}
\label{ineq:cvx_smooth}
\|\nabla f(x) - \nabla f(y)\|^2 \leq 2 L D_f(x, y).
\end{align}

Let $a, b \in \RR^n$ be arbitrary vectors. Then, it holds that

\begin{align}
\label{ineq:a_b}
\|a + b\|^2 \leq 2 (\|a\|^2 + \|b\|^2).
\end{align}

\clearpage
\section{PROOFS FOR SECTIONS~\ref{sec:FLIX_formulation} AND~\ref{sec:algorithms-for-FLIX}}
This section collects the proofs of all propositions and theorems mentioned in the paper.

\subsection{Proof of \Cref{proposition:quad-fine-tuned-sol}}
In \Cref{sec:motivation-2} we provided the motivation for FLIX through the lens of fine-tuning. We used the following proposition on fine-tuning quadratics:

\begin{customprop}{\ref{proposition:quad-fine-tuned-sol}}
  Suppose that we run gradient descent for $H$ steps on the quadratic objective $f_i = \frac{1}{2} x^T A_i x - b_i^T x + c$ starting from $x^{0}$ with stepsize $\gamma > 0$. Suppose that the stepsize satisfies $\gamma \leq \frac{1}{L_i}$, where $L_i = \lambda_{\max} (A_i)$, and suppose that $A_i$ is positive definite. Then the final iterate $x_i^{H}$ can be written as
  \begin{equation}
    \label{eq:prop-quad-claim}
    x_i^H = \br{I - J_i^H} x_i + J_i^H x^0,
  \end{equation}
  where $x_i$ minimizes $f_i$ and $J_i \in \R^{d \times d}$ is a matrix with maximum eigenvalue smaller than $1$, i.e.\ $\lambda_{\max} (J) < 1$.
\end{customprop}
\begin{proof}
  The gradient descent update is
  \begin{align}
    x_{i}^{t+1} &= x_i^t - \gamma \nabla f(x_i^t) \nonumber \\
                &= x_i^t - \gamma (A_i x_i^t - b_i) \nonumber \\
                &= (I - \gamma A_i) x_i^t + \gamma b_i. \label{eq:prop-quad-1}
  \end{align}
  Note that because $x_i$ minimizes $f_i$, we have $\nabla f_i (x_i) = 0$ by first-order optimality. Hence,
  \begin{align*}
    \nabla f_{i} (x_i) = 0 \Longleftrightarrow A_i x_i = b_i.
  \end{align*}
  Using this in~\eqref{eq:prop-quad-1},
  \begin{align*}
    x_i^{t+1} &= (I - \gamma A_i) x_i^t + \gamma A_i x_i
  \end{align*}
  Subtracting $x_i$ from both sides,
  \begin{align*}
    x_i^{t+1} - x_i &= (I - \gamma A_i) x_i^t + (\gamma A_i - I) x_i \\
    &= (I - \gamma A_i) (x_i^t - x_i).
  \end{align*}
  Iterating the above equality for $H$ steps we get
  \begin{equation*}
    x_i^{H} - x_i = (I - \gamma A_i)^H (x^0 - x_i).
  \end{equation*}
  Rearranging the terms we get \eqref{eq:prop-quad-claim} with $J_i \eqdef (I - \gamma A_i)$. Observe that when $\gamma \leq \frac{1}{\lambda_{\max} (A_i)}$ and $\lambda_{\min} (A_i) > 0$ we have that $\lambda_{\max} (I - \gamma A_i) < 1$.
\end{proof}

\subsection{Proofs for algorithm-independent results}
\subsubsection{Proof of Proposition~\ref{proposition:preserve-smooth-and-convexity}}
\begin{customprop}{\ref{proposition:preserve-smooth-and-convexity}}
  Suppose that each objective $f_i$ is $L_i$-smooth. That is, for any $x, y \in \R^d$ we have
  \begin{equation}
    \label{eq:fi-smoothness}
    \norm{\nabla f_i (x) - \nabla f_i (y)} \leq L_i \norm{x-y}.
  \end{equation}
  Then the FLIX objective $\tilde{f}$ defined in \eqref{eq:FLIX-objective} is $L_\alpha$-smooth for $L_\alpha \eqdef \frac{1}{n} \sum_{i=1}^{n} \alpha_i^2 L_i$. If each $f_i$ is convex, then $\tilde{f}$ is also convex. If each $f_i$ is $\mu_i$-strongly convex, then $\tilde{f}$ is $\mu_\alpha$ strongly convex for $\mu_\alpha \eqdef \frac{1}{n} \sum_{i=1}^{n} \alpha_i^2 \mu_i$.
\end{customprop}
\begin{proof}
  We separate the proofs in two cases: when each $f_i$ is smooth, and when each $f_i$ is (strongly) convex.
  \begin{itemize}[leftmargin=0.15in,itemsep=0.01in,topsep=0pt]
  \item[(i)] Suppose that each $f_i$ is $L_i$-smooth, and let $x, y \in \R^d$. Then by direct computation we have
    \begin{align*}
      \norm{\nabla \tilde{f} (x) - \nabla \tilde{f} (y)} &= \norm{\frac{1}{n} \sum_{i=1}^{n} \alpha_i \left [ \nabla f_i (\alpha_i x + (1-\alpha_i) x_i) - \nabla f_i (\alpha_i y + (1-\alpha_i) y_i) \right ] } \\
                                                         &\overset{\eqref{eq:triangle-inequality}}{\leq} \frac{1}{n} \sum_{i=1}^{n} \norm{\alpha_i} \norm{\nabla f_i (\alpha_i x + \br{1-\alpha_i} x_i) - \nabla f_i (\alpha_i y + (1-\alpha_i) y_i)} \\
                                                         &\overset{\eqref{eq:fi-smoothness}}{\leq} \frac{1}{n} \sum_{i=1}^{n} \alpha_i^2 L_i \norm{x-y} = L_{\alpha} \norm{x-y},
    \end{align*}
    hence $\tilde{f}$ is $L_\alpha$-smooth.
  \item[(ii)] Suppose that each $f_i$ is $\mu_i$-strongly convex for $\mu_i \geq 0$ (where $\mu_i = 0$ corresponds to just convexity). Then for $x, h \in \R^d$ we have
    \begin{align*}
      \tilde{f} (x+h) &= \frac{1}{n} \sum_{i=1}^{n} f_i (\alpha_i (x+h) + (1-\alpha_i) x_i) \\
                      &\overset{\eqref{eq:strong-conv-def}}{\geq} \frac{1}{n} \sum_{i=1}^{n} \left [ f_i (\alpha_i x + (1-\alpha_i) x_i) + \ev{\nabla f_i (\alpha_i x + (1-\alpha_i) x_i), \alpha_i h} + \frac{\mu_i }{2} \sqn{\alpha_i h}  \right ] \\
                      \begin{split}
                        &=\frac{1}{n} \sum_{i=1}^{n} f_i (\alpha_i x + (1-\alpha_i) x_i) + \ev{\frac{1}{n} \sum_{i=1}^{n} \alpha_i \nabla f_i (\alpha_i x + (1-\alpha_i) x_i), h } \\
                        &\qquad\qquad + \frac{1}{n} \sum_{i=1}^{n} \frac{\alpha_i^2 \mu_i}{2} \norm{h}.
                      \end{split} \\
      &= \tilde{f} (x) + \ev{\nabla \tilde{f} (x), h} + \frac{\mu_\alpha}{2} \sqn{h},
    \end{align*}
    hence $\tilde{f}$ is $\mu_\alpha$-strongly convex if all the $\mu_i$ are positive (resp. convex if they are equal to $0$).
  \end{itemize}
\end{proof}

\subsubsection{A proposition for bounding the gradient norm}
\Cref{proposition:preserve-smooth-and-convexity} provides us with a simple way to bound the gradient of $\tilde{f}$ in terms of the distance to the optimum $x^\alpha$:

\begin{proposition}
  \label{proposition:bound-on-grad-and-func}
  Define $x^\alpha$ as the solution to the FLIX problem~\eqref{eq:FLIX-problem}:
  \[ x^\alpha \eqdef \argmin_{x \in \R^d} \left [ \tilde{f} (x) = \frac{1}{n} \sum_{i=1}^{n} f_i (\alpha_i x + (1-\alpha_i) x_i) \right ]. \]
  Define $L_\alpha$ as in \Cref{proposition:preserve-smooth-and-convexity}. Then for any $x \in \R^d$ we have
  \begin{equation}
    \label{eq:norm-f-bound}
    \norm{\nabla \tilde{f}(x)} \leq L_{\alpha} \norm{x - x^\alpha},
  \end{equation}
  and
  \begin{equation}
    \label{eq:val-f-bound}
    \tilde{f}(x) - \tilde{f}(x^\alpha) \leq \frac{L_\alpha}{2} \sqn{x - x^\alpha}.
  \end{equation}
\end{proposition}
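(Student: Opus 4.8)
The plan is to derive both bounds directly from two facts: the $L_\alpha$-smoothness of $\tilde{f}$ established in \Cref{proposition:preserve-smooth-and-convexity}, and the first-order optimality condition at the minimizer. Since $x^\alpha$ minimizes the differentiable objective $\tilde{f}$ over all of $\R^d$, the stationarity condition gives $\nabla \tilde{f}(x^\alpha) = 0$. This single observation is what makes both inequalities collapse to their stated forms.

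For the gradient bound \eqref{eq:norm-f-bound}, I would apply the smoothness definition \eqref{eq:smoothness-def} (which for $\tilde{f}$ reads $\norm{\nabla \tilde{f}(x) - \nabla \tilde{f}(y)} \leq L_\alpha \norm{x-y}$) with the choice $y = x^\alpha$. Substituting $\nabla \tilde{f}(x^\alpha) = 0$ immediately yields
\[ \norm{\nabla \tilde{f}(x)} = \norm{\nabla \tilde{f}(x) - \nabla \tilde{f}(x^\alpha)} \leq L_\alpha \norm{x - x^\alpha}. \]

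For the function-value bound \eqref{eq:val-f-bound}, I would invoke the descent-lemma consequence of smoothness, \eqref{eq:smoothness-func-consequence}, expanding $\tilde{f}(x)$ around the point $x^\alpha$:
\[ \tilde{f}(x) \leq \tilde{f}(x^\alpha) + \ev{\nabla \tilde{f}(x^\alpha), x - x^\alpha} + \frac{L_\alpha}{2} \sqn{x - x^\alpha}. \]
Again using $\nabla \tilde{f}(x^\alpha) = 0$ kills the inner-product term, and rearranging gives the claim.

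There is no real obstacle here: the proposition is a routine corollary of smoothness combined with optimality, and the only thing one must be careful about is correctly invoking \Cref{proposition:preserve-smooth-and-convexity} to justify that $\tilde{f}$ is indeed $L_\alpha$-smooth (so that the smoothness inequalities \eqref{eq:smoothness-def} and \eqref{eq:smoothness-func-consequence} apply with constant $L_\alpha$), and noting that existence of the minimizer $x^\alpha$ with $\nabla \tilde{f}(x^\alpha)=0$ is what the statement implicitly assumes. Both steps are independent of each other and of any convexity assumption, so they can be presented in either order.
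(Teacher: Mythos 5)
Your proposal is correct and follows exactly the same route as the paper's own proof: invoke the $L_\alpha$-smoothness of $\tilde{f}$ from \Cref{proposition:preserve-smooth-and-convexity}, set $y = x^\alpha$ in the Lipschitz-gradient inequality and in the descent-lemma consequence, and use $\nabla \tilde{f}(x^\alpha) = 0$ to eliminate the extra terms. No gaps or differences worth noting.
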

\begin{proof}
  By \Cref{proposition:preserve-smooth-and-convexity}, we have for any $x, y \in \R^d$ that
  \[ \norm{\nabla \tilde{f} (x) - \nabla \tilde{f} (y)} \leq L_\alpha \norm{x-y}. \]
  Putting $y=x^\alpha$ and using that $\nabla \tilde{f} (x^\alpha) = 0$ we get \eqref{eq:norm-f-bound}. The $L_\alpha$-smoothness of $\tilde{f}$ implies
  \[ \tilde{f}(x) \leq \tilde{f}(y) + \ev{\nabla \tilde{f} (y), x-y} + \frac{L_\alpha}{2} \sqn{x-y}. \]
  Putting $y = x^\alpha$ recovers \eqref{eq:val-f-bound}.
\end{proof}

\subsubsection{Proof of \Cref{proposition:alpha-controls-variance}}
\begin{customprop}{\ref{proposition:alpha-controls-variance}}
    Suppose that $\alpha_1 = \alpha_2 = \ldots = \alpha_n = \beta$ in the FLIX formulation~\eqref{eq:FLIX-problem}. Let $T_1 (x), T_2 (x), \ldots, T_n(x)$ be the deployed models defined in~\eqref{eq:deployed-models-def}. If $y_1, \ldots, y_n$ are vectors in $\R^d$ and $\bar{y}$ is their mean, we define $V(y_1, \ldots, y_n)$ as the population variance $V(y_1, \ldots, y_n) \eqdef \frac{1}{n} \sum_{i=1}^{n} \sqn{y_i - \bar{y}}$. Then,
  \[ V(T_1 (x), T_2 (x), \ldots, T_n (x)) = \br{1 - \beta}^2 V(x_1, x_2, \ldots, x_n). \]
\end{customprop}
\begin{proof}
  By direct computation observe
  \begin{align*}
    V(T_1 (x), T_2 (x), \ldots, T_n (x)) &= \frac{1}{n} \sum_{i=1}^{n} \sqn{T_i (x) - \frac{1}{n} \sum_{j=1}^{n} T_j (x) } \\
                                         &= \frac{1}{n} \sum_{i=1}^{n} \sqn{\beta x + (1-\beta) x_i - \frac{1}{n} \sum_{j=1}^{n} (\beta x + (1 - \beta) x_j) } \\
                                         &= \frac{1}{n} \sum_{i=1}^{n} \sqn{(1 - \beta) \br{x_i - \frac{1}{n} \sum_{j=1}^n x_j} } \\
                                         &= (1-\beta)^2 V(x_1, x_2, \ldots, x_n).
  \end{align*}
\end{proof}

\subsection{Results on one shot averaging}

Before proving Theorems~\ref{thm:one-shot-averaging} and \ref{thm:one-shot-averaging-stationary-points}, we will need the following lemma which shows that the gradient of the FLIX and the functional suboptimality can be bound using a weighted average of the iterate norms:

\begin{lemma}
  \label{lemma:bounds-local-average}
  Suppose that each $f_i$ is $L_i$-smooth and that each $x_i$ is a stationary point of $f_i$. Then for any $x \in \R^d$ we have,
  \begin{equation}
    \label{eq:bounds-gradf-local-average}
    \norm{\nabla \tilde{f}(x)} \leq \frac{1}{n} \sum_{i=1}^{n} \alpha_i^2 L_i \norm{x-x_i},
  \end{equation}
  and,
\begin{equation}
\label{eq:bounds:sqgradf-local-average}
\sqn{\nabla \tilde{f} (x)} \leq \frac{1}{n} \sum_{i=1}^n \alpha_i^4 L_i^2 \norm{x - x_i}^2,
\end{equation}
and,
  \begin{equation}
    \label{eq:bounds-func-local-average}
    \tilde{f}(x) \leq \frac{1}{n} \sum_{i=1}^{n} f_i (x_i) + \frac{1}{2n} \sum_{i=1}^{n} \alpha_i^2 L_i \sqn{x - x_i}.
  \end{equation}
\end{lemma}
\begin{proof}
  For the first inequality, using the fact that $\nabla f_i (x_i) = 0$ for all $i$, and applying the triangle inequality~\eqref{eq:triangle-inequality} and the $L_i$-smoothness of $f_i$, we get
  \begin{align*}
    \norm{\nabla \tilde{f}(x)} &= \norm{\frac{1}{n} \sum_{i=1}^{n} \alpha_i \nabla f_i (\alpha_i x + (1-\alpha_i) x_i) } \\
                       &= \norm{\frac{1}{n} \sum_{i=1}^{n} \alpha_i \left [ \nabla f_i (\alpha_i x + (1-\alpha_i) x_i) - \nabla f_i (x_i) \right ]  } \\
                       &\overset{\eqref{eq:triangle-inequality}}{\leq} \frac{1}{n} \sum_{i=1}^{n} |\alpha_i| \norm{\nabla f_i (\alpha_i x + (1-\alpha_i) x_i) - \nabla f_i (x_i)} \\
                       &\overset{\eqref{eq:smoothness-def}}{\leq} \frac{1}{n} \sum_{i=1}^{n} \alpha_i^2 L_i \norm{x-x_i}.
  \end{align*}

For the second inequality, we use that $\nabla f_i (x_i) = 0$, the convexity of the squared norm, and smoothness:
\begin{align*}
\sqn{\nabla \tilde{f}(x)} &= \sqn{\frac{1}{n} \sum_{i=1}^{n} \alpha_i \nabla f_i (\alpha_i x + (1-\alpha_i) x_i) } \\
                       &= \sqn{\frac{1}{n} \sum_{i=1}^{n} \alpha_i \left [ \nabla f_i (\alpha_i x + (1-\alpha_i) x_i) - \nabla f_i (x_i) \right ]  } \\
                       &\leq \frac{1}{n} \sum_{i=1}^{n} \alpha_i^2 \sqn{\nabla f_i (\alpha_i x + (1-\alpha_i) x_i) - \nabla f_i (x_i)} \\
                       &\overset{\eqref{eq:smoothness-def}}{\leq} \frac{1}{n} \sum_{i=1}^{n} \alpha_i^4 L_i^2 \sqn{x-x_i}.
\end{align*}

  Using $L_i$-smoothness of $f_i$ and that $\nabla f_i (x_i) = 0$, we get
  \begin{align*}
    f_i (\alpha_i x + (1-\alpha_i) x_i) &= f_i (x_i + \alpha_i (x - x_i)) \\
                                        &\overset{\eqref{eq:smoothness-func-consequence}}{\leq} f_i (x_i) + \ev{\nabla f_i (x_i), \alpha_i (x - x_i)} + \frac{\alpha_i^2 L_i}{2} \sqn{x - x_i} \\
    &= f_i (x_i) + \frac{\alpha_i^2 L_i }{2} \sqn{x - x_i}.
  \end{align*}
  Averaging the above inequality yields \eqref{eq:bounds-func-local-average}.
\end{proof}

As communicated earlier in the paper, we can solve~\eqref{eq:FLIX-problem} by taking a weighted average of the pure local models $x_i$ if the $\alpha_i$ are small enough. This is a consequence of preprocessing step where we compute $x_i$, which is communication-free, and also a consequence of the new formulation, where for small $\alpha$ the FLIX objective is less responsive to argument change (see Proposition~\ref{proposition:preserve-smooth-and-convexity}).

\begin{customthm}{\ref{thm:one-shot-averaging}}
  Suppose that each objective $f_i$ is $L_i$-smooth, that $x_i$ minimizes $f_i$, and let $\hat{L} \eqdef \frac{1}{n} \sum_{i=1}^{n} L_i$. Given the pure local models $x_1, x_2, \ldots, x_n$, define the weighted average
  \begin{align}
    \xavg \eqdef \sum_{i=1}^{n} w_i x_i, && w_i \eqdef \frac{\alpha_i^2 L_i}{n L_\alpha}, && L_\alpha \eqdef \frac{1}{n}\sum_{i=1}^{n} \alpha_i^2 L_i.
  \end{align}
  We further define the constants
  \begin{align}
    D \eqdef \max_{i, j = 1, \ldots, n, i \neq j} \norm{x_i - x_j}^2, &&\text{ and } && V \eqdef \sum_{i=1}^{n} w_i \sqn{x_i - \xavg}
  \end{align}
  Fix any $\epsilon > 0$.  Assume that either $\max_{i=1, \ldots, n} \alpha_i \leq \sqrt{2\epsilon}/\sqrt{\hat{L}D}$, or $\alpha_i = \beta$ for all $i$ and $\beta \leq \sqrt{2\epsilon}/\sqrt{\hat{L}D}$. Then $\xavg$ is an $\epsilon$-approximate minimizer of \eqref{eq:FLIX-problem}. That is,
\[ \tilde{f}(\xavg; \alpha_1, \ldots, \alpha_n, x_1, \ldots, x_n) - \min_{x \in \R^d} \tilde{f}(x; \alpha_1, \ldots, \alpha_n, x_1, \ldots, x_n) \leq \epsilon.  \]
\end{customthm}
\begin{proof}

Define
\begin{equation}\label{eq:cE-def}\cE_{\alpha}(x)\eqdef \frac{1}{2n} \sum \limits_{i=1}^n \alpha_i^2 L_i  \norm{x-x_i}^2 .\end{equation}

Note that since the vectors $\{x_i\}$ are known, the expression $\cE_{\alpha}(x)$ can be minimized in $x$, leading to a weighted average of the pure local models:
\begin{equation}\label{eq:x^avg}x^{avg} \eqdef \sum \limits_{i=1}^n w_i x_i, \quad w_i \eqdef \frac{\alpha_i^2 L_i}{nL_{\alpha}}, \quad L_{\alpha}\eqdef \frac{1}{n}\sum \limits_{i=1}^n \alpha_i^2 L_i.\end{equation}
Note that $x^{avg}$,  can be computed using a single communication round.
By plugging \eqref{eq:x^avg} into \eqref{eq:cE-def}, we can evaluate the error of the average model:
\begin{equation}\label{eq:cE-identity}\cE(x^{avg}) = \frac{L_{\alpha}}{2} \sum \limits_{i=1}^n w_i \norm{x^{avg}-x_i}^2= \frac{L_{\alpha} V_{\alpha}}{2},\end{equation}
where \begin{equation}\label{eq:D_alpha} V_\alpha \eqdef \sum \limits_{i=1}^n w_i \norm{x^{avg}-x_i}^2 = \EE{\norm{x_i - \EE{x_i}}^2}\end{equation}
can be interpreted as the {\em variance of the local optimal models} $\{x_i\}$.
We now give two ways how $\cE(x^{avg})$ can be bounded:
\begin{itemize}
\item Assume the customization parameters  $\{\alpha_i\}$ are allowed to be arbitrary. Since $x^{avg}$ is in the convex hull of the set $\{x_1,\dots,x_n\}$, we have $\norm{x^{avg}-x_i} \leq \max_{i,j} \norm{x_i-x_j}$. Therefore, \begin{equation}\label{eq:bf7g9vd898gf_98y}V_{\alpha} \leq D\eqdef \max_{i,j} \norm{x_i-x_j}^2.\end{equation}
Moreover,
\begin{equation}\label{eq:bif7g98gff} L_{\alpha} =  \frac{1}{n}\sum \limits_{i=1}^n \alpha_i^2 L_i \leq \alpha_{\max}^2 \bar{L},\end{equation} where $\alpha_{\max} \eqdef \max_i \alpha_i$ and $\bar{L}\eqdef \frac{1}{n}\sum_{i=1}^n L_i$. By plugging \eqref{eq:bf7g9vd898gf_98y} and \eqref{eq:bif7g98gff} into \eqref{eq:cE-identity}, we get
\begin{equation}\cE(x^{avg}) \leq  \frac{\alpha_{\max}^2 \bar{L}D}{2}. \label{eq:9gfdg789fd}\end{equation}
\item Assume the customization parameters $\{\alpha_i\}$ are all equal: $\alpha_i=\beta$ for all $i$. Then $w_i=\frac{L_i}{\sum_j L_j}$, and hence $x^{avg}$ and $V_{\alpha}=V$ are independent of $\beta$. Since $L_{\alpha} = \beta^2 \bar{L}$, by plugging these expressions into \eqref{eq:cE-identity}, we get
\begin{equation}\cE(x^{avg}) \leq  \frac{\beta^2 \bar{L} V}{2}. \label{eq:njfu_y8g7dfhd}\end{equation}
Note that $V\leq D$.
\end{itemize}
Equations~\eqref{eq:9gfdg789fd} and~\eqref{eq:njfu_y8g7dfhd} yield the following observation: fix any $\epsilon > 0$ and assume one of the following conditions holds: (1) The maximum personalization parameter satisfies
\begin{equation} \label{eq:alpha-max-bound} \alpha_{\max} \leq \sqrt{\frac{2\epsilon}{\bar{L} D}}. \end{equation}
Or, (2) All the personalization parameters are equal, i.e., $\alpha_i = \beta$ for all $i$, and
\begin{equation}
\label{eq:alpha-const-bound}
\beta \leq \sqrt{\frac{2 \epsilon}{\bar{L} V}}.
\end{equation}
Then
\begin{equation} \label{eq:bound-on-ce-xavg} \cE(\xavg) \leq \epsilon. \end{equation}

Then combining \eqref{eq:bounds-func-local-average} from \Cref{lemma:bounds-local-average} with \eqref{eq:bound-on-ce-xavg} we get that the weighted average of the local optimal models $x^{\mathrm{avg}}$ satisfies
\[ \tilde{f}(x^{\mathrm{avg}}) \leq \frac{1}{n} \sum_{i=1}^{n} f_i (x_i) + \epsilon \leq \frac{1}{n} \sum_{i=1}^{n} f_i (x^\alpha)  + \epsilon = \tilde{f}(x^\alpha) + \epsilon,  \]
where $x^\alpha = \argmin_{x \in \R^d} \tilde{f} (x)$ and where we used that $x_i$ minimizes $f_i$. This shows the second part of this Theorem's claim.
\end{proof}

The next theorem extends the previous result to stationary points instead of minimizers.

\begin{theorem}
\label{thm:one-shot-averaging-stationary-points}
  Suppose that each objective $f_i$ is $L_i$-smooth and that $x_i$ is a stationary point of $f_i$, and let $\hat{L} \eqdef \frac{1}{n} \sum_{i=1}^{n} L_i$. Given the pure local models $x_1, x_2, \ldots, x_n$, define the weighted average
  \begin{align}
\label{eq:avg-stationary-point}
    \xavg \eqdef \sum_{i=1}^{n} w_i x_i, && w_i \eqdef \frac{\alpha_i^4 L_i^2}{n \hat{L}_{\alpha}^2} && \hat{L}^2_{\alpha} \eqdef \frac{1}{n} \sum_{i=1}^n \alpha_i^4 L_i^2.
  \end{align}
  Fix any $\epsilon > 0$ and define $D$ and $V$ as in Theorem~\ref{thm:one-shot-averaging}.  Assume that either $\max_{i=1, \ldots, n} \alpha_i \leq \sqrt{2\epsilon}/\sqrt{\hat{L}D}$, or $\alpha_i = \beta$ for all $i$ and $\beta \leq \sqrt{2\epsilon}/\sqrt{\hat{L}V}$. Then $\xavg$ is a stationary point of \eqref{eq:FLIX-problem}. That is,
\[ \norm{\nabla\tilde{f}(\xavg)} \leq \epsilon.  \]
\end{theorem}
\begin{proof}
We will proceed very similarly to Theorem~\ref{thm:one-shot-averaging}. Let
\[ \zeta_{\alpha} (x) = \frac{1}{n} \sum_{i=1}^{n} \alpha_{i}^4 L_i^2 \sqn{x-x_i}. \]
It is easy to show that \eqref{eq:avg-stationary-point} minimizes $\zeta_{\alpha} (x)$, giving
\begin{equation} \label{eq:zeta-xavg-bound} \zeta_{\alpha} (\xavg) = \frac{\hat{L}^2_{\alpha} V_{\alpha}}{2},  \end{equation}
where
\[ V_{\alpha} \eqdef \sum_{i=1}^n w_i \sqn{\xavg - x_i}. \]
\begin{itemize}
\item Assume the customization parameters  $\{\alpha_i\}$ are allowed to be arbitrary. Since $x^{avg}$ is in the convex hull of the set $\{x_1,\dots,x_n\}$, we have $\norm{x^{avg}-x_i} \leq \max_{i,j} \norm{x_i-x_j}$. Therefore, \begin{equation}\label{eq:copythm-1}V_{\alpha} \leq D\eqdef \max_{i,j} \norm{x_i-x_j}^2.\end{equation}
Moreover,
\begin{equation}\label{eq:copythm-2} \hat{L}_{\alpha}^2 =  \frac{1}{n}\sum \limits_{i=1}^n \alpha_i^4 L_i^2 \leq \alpha_{\max}^4 \hat{L}^2,\end{equation} where $\alpha_{\max} \eqdef \max_i \alpha_i$ and $\hat{L}^2 \eqdef \frac{1}{n}\sum_{i=1}^n L_i^2$. By plugging \eqref{eq:copythm-1} and \eqref{eq:copythm-2} into \eqref{eq:zeta-xavg-bound}, we get
        \begin{equation}\zeta_{\alpha}(x^{avg}) \leq  \alpha_{\max}^4 \hat{L}^2D. \label{eq:copythm-3}\end{equation}
\item Assume the customization parameters $\{\alpha_i\}$ are all equal: $\alpha_i=\beta$ for all $i$. Then $w_i=\frac{L_i^2}{\sum_j L_j^2}$, and hence $x^{avg}$ and $V_{\alpha}=V$ are independent of $\beta$. Since $L_{\alpha}^2 = \beta^4 \hat{L}^2$, by plugging these expressions into \eqref{eq:zeta-xavg-bound}, we get
\begin{equation}\zeta_{\alpha}(\xavg) \leq  \beta^4 \hat{L}^2 V. \label{eq:copythm-4}\end{equation}
\end{itemize}
Combining \eqref{eq:copythm-3} and \eqref{eq:copythm-4} we get that if (1) $\alpha_{\mathrm{max}} \leq \br{\frac{\epsilon^2}{\hat{L}^2 D}}^{1/4}$ or (2) $\alpha_i = \beta$ and $\beta \leq \br{\frac{\epsilon^2}{\hat{L}^2 V}}^{1/4}$ then
\[ \zeta_{\alpha}(\xavg) \leq \epsilon^2. \]
It remains to use \eqref{eq:bounds:sqgradf-local-average} from \Cref{lemma:bounds-local-average} and then take square roots.
\end{proof}

\subsection{Distributed Gradient Descent}

\begin{algorithm}[t]
  \caption{Distributed gradient descent for FLIX.}
  \label{alg:dgd-FLIX}
  \begin{algorithmic}[1]
    \Require Number of communication rounds $K$, stepsize $\gamma$, initial point $x^0$
    \For{$k=0, 1, \ldots, K-1$}
    	\For{$i=1, \ldots, n$ in parallel}
	    	\State Compute $\nabla \tilde{f_i}(x^k) = \alpha_i \nabla f_i \br{\alpha_i x^k + \br{1 - \alpha_i} x_i}$ and communicate it to server.
        \EndFor
        \State Average and broadcast the new iterate
        \[ x^{k+1} = x^k - \frac{\gamma}{n} \sum_{i=1}^{n} \nabla \tilde{f_i}(x^k). \]
    \EndFor
  \end{algorithmic}
\end{algorithm}

Gradient descent is a simple but yet informative way to solve many optimization problems. Here we provide the proof for the convergence of DGD.

\begin{customthm}{\ref{theorem:dgd-convergence}}
  Suppose that each $f_i$ in \eqref{eq:FLIX-problem} is $L_i$-smooth and $\mu_i$-strongly convex. Define $\xavg, L_\alpha$, and $\hat{L}$ by~\eqref{eq:xavg-def} and $V, D$ by~\eqref{eq:var-consts-def}. Suppose that we run~\eqref{eq:dgd-update} for $K$ iterations starting from $x^0 = \xavg$. Then the following hold:
  \begin{itemize}[leftmargin=0.15in,itemsep=0.01in,topsep=0pt]
    \item [i)] If the $\alpha_i$ are allowed to be arbitrary, then for $\alpha_{\max} \eqdef \max_{i=1,\ldots,n} \alpha_i$ we have
      \[ \tilde{f}(x^k) - \min_{x \in \R^d} \tilde{f}(x) \leq \br{1 - \frac{\mu_\alpha}{L_\alpha}}^k \frac{\alpha_{\max}^2 \hat{L} D}{2}. \]
    \item [ii)] If $\alpha_i = \beta$ for all $i$, then
      \begin{equation*}
        \label{eq:dgd-conv-beta}
        \tilde{f}(x^k) - \min_{x \in \R^d} \tilde{f}(x) \leq \br{1 - \frac{\hat{\mu}}{\hat{L}}}^k \frac{\beta^2 \hat{L} V}{2},
      \end{equation*}
      where $\hat{\mu} \eqdef \frac{1}{n} \sum_{i=1}^{n} \mu_i$.
\end{itemize}
\end{customthm}
\begin{proof}
  Recall the standard result that gradient descent for an $L$-smooth and $\mu$-strongly convex objective $g$ satisfies for any initial point $x^0$
  \begin{equation}
    \label{eq:standard-gd-conv}
    g(x^k) - g_\ast \leq \br{1 - \gamma \mu}^k \br{g(x_0) - g_\ast},
  \end{equation}
  where $g_\ast = \min_{x \in \R^d} g(x)$. For a proof, see \citep{Nesterov2018}. Note that by \Cref{proposition:preserve-smooth-and-convexity} we have that $\tilde{f}$ is $L_\alpha$-smooth and $\mu_\alpha$-strongly convex. Specializing \eqref{eq:standard-gd-conv} to this case yields
  \begin{equation}
    \label{eq:gd-conv-no-init}
    \tilde{f} (x^k) - \min_{x \in \R^d} \tilde{f}(x) \leq \br{1 - \frac{\mu_{\alpha}}{L_{\alpha}}}^k \br{\tilde{f}(x_0) - \min_{x \in \R^d} \tilde{f}(x)}
  \end{equation}
  Note that our initialization is the same $\xavg$ from Theorem~\ref{thm:one-shot-averaging}. Plugging the bounds of that theorem into \eqref{eq:gd-conv-no-init} yields the theorem's claims.
\end{proof}

\clearpage
\section{OTHER ALGORITHMS}

In this section, we include additional algorithms to solve \eqref{eq:FLIX-problem} formulation, namely, DCGD~\citep{gorbunov2019unified} and DIANA~\citep{diana_paper}. In Table~\ref{table:results}, we display convergence guarantees for these algorithms. One can see that similarly to our results in Section~\ref{sec:algorithms-for-FLIX} both of the algorithms requires less iterations in terms of convergence in both local deploy iterates $T_i(x^k)$'s and functional value $\tilde{f}(x^k)$. In addition, for the standard \eqref{eq:ERM} problem, i.e., $\alpha = 1$, we recover the best known convergence guarantees. Below, we provide a derivation of these claims.

\renewcommand{\arraystretch}{1.5}
\begin{table}
\caption{Convergence results for Distributed Compressed Gradient Descent and DIANA in different settings. All the constants are independent of $\alpha$.}
\label{table:results}
\begin{center}
\begin{tabular}{llll}
\textbf{ALGORITHM} & \textbf{ASSUMPTION} & \textbf{CONVERGENCE GUARANTEE} & \textbf{COROLLARY} \\
\hline\\
\multirow{2}{*}{DCGD} & \multirow{2}{*}{smoothness, str cvx}&$\EE\|x^k - x^\ast\|^2 \leq (1 - \gamma_0 \overline{\mu})^k C_1 + C_2$ &  \multirow{2}{*}{\ref{crl:cgd_str_cvx}}\\
 					  & 														  &$\EE [\tilde{f}(x^k) - \tilde{f}^\ast] \leq ((1 - \gamma_0 \overline{\mu})^k C_1 + C_2)\alpha^2$ &\\

\multirow{2}{*}{DIANA} & \multirow{2}{*}{smoothness, str cvx}&$\EE\|x^k - x^\ast\|^2 \leq (1 - \rho)^k C$ & \multirow{2}{*}{\ref{crl:DIANA}} \\
 					  & 														  &$\EE [\tilde{f}(x^k) - \tilde{f}^\ast] \leq  (1 - \rho)^k C \alpha^2$ & \\

DCGD & smoothness, cvx & $\EE[\tilde{f}(\overline{x}^k) - \tilde{f}(x^\ast)] \leq \frac{1}{k} C_1 \alpha^2 + C_2 \alpha$ & \ref{crl:cgd_cvx} \\

DIANA & smoothness, cvx & $\EE[\tilde{f}(\overline{x}^k) - \tilde{f}(x^\ast)] \leq \frac{1}{k} (C_1 \alpha^2 + C_2 \alpha)$ &  \ref{crl:diana_cvx} \\

DCGD & smoothness & $\min\limits_{0 \leq t \leq k - 1} \EE \|\nabla \tilde{f}(x^t)\|^2 \leq  \frac{\left(1 +  C_1\gamma_0^2\right)^k C_2}{\gamma_0 k} \alpha^2$ &  \ref{crl:dcgd_non_convex} \\

DIANA & smoothness & $\EE \|\nabla \tilde{f}(\hat{x})	\|^2 \leq \frac{C}{k} \alpha^2$ & \ref{crl:diana_nncvx} \\
\end{tabular}
\end{center}
\end{table}

\subsection{Strongly convex objectives}

\subsubsection{DCGD}

We firstly introduce the DCGD algorithm followed by general convergence results that we later exploit to obtain a convergence guarantee for DCGD applied to \eqref{eq:FLIX-problem}.

\begin{algorithm}[H]
\caption{Distributed Compressed Gradient Descent with different noise levels $\omega_i$}
\begin{algorithmic}[1]
\Require{$x^0 \in \ \RR^d$, learning rate $\gamma$}
\For{$k = 0, 1, 2, \dots$}
	\State Broadcast $x^k$ to all workers
	\For{$i = 1, \dots, n $ in parallel}
		\State Evaluate $\nabla f_i(x^k)$
		\State $g_i^k = \mathcal{C}_i(\nabla f_i(x^k))$
	\EndFor
	\State $g^k = \avein g_i^k $
	\State $x^{k + 1} = x^k - \gamma g^k$
\EndFor
\end{algorithmic}
\end{algorithm}

\begin{lemma}\label{lm:CGD}
Suppose each $f_i$ is $L_i$-smooth and convex, and $f$ is $L$-smooth. Let
$\cC_i : \RR^d \rightarrow \RR^d$ be randomized compression operators satisfying $\cC_i \in \mathbb{B}^d(\omega_i)$. Let $g_k = \avein {\cal C}_i(\nabla f_i(x^k))$. Then
\begin{equation}
\begin{aligned}
\EE \|g^k - \nabla f(x^\ast)\|^2 \leq 2 \left(L + \frac{2 \max\{L_i \omega_i\}}{n} \right) D_f(x^k, x^\ast) + \sigma_{DCGD} ,
\end{aligned}
\end{equation}
where $\sigma_{DCGD} = \frac{2}{n^2} \sumin \omega_i \|\nabla f_i(x^\ast)\|^2$.
\end{lemma}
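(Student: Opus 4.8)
The plan is to control the second moment of the compressed estimator $g^k = \avein \cC_i(\nabla f_i(x^k))$ by a bias--variance split around its conditional mean. Since each $\cC_i$ is unbiased, $\EE[g^k] = \avein \nabla f_i(x^k) = \nabla f(x^k)$, so writing $g^k - \nabla f(x^\ast) = \br{g^k - \nabla f(x^k)} + \br{\nabla f(x^k) - \nabla f(x^\ast)}$ and noting that the second summand is deterministic while the first is mean-zero, the cross term drops and I obtain
\[ \EE\sqn{g^k - \nabla f(x^\ast)} = \EE\sqn{g^k - \nabla f(x^k)} + \sqn{\nabla f(x^k) - \nabla f(x^\ast)}. \]
The two resulting pieces are then handled separately.

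For the variance piece I would use that the $\cC_i$ are applied independently across workers, so the mixed terms vanish and $\EE\sqn{g^k - \nabla f(x^k)} = \frac{1}{n^2}\sumin \EE\sqn{\cC_i(\nabla f_i(x^k)) - \nabla f_i(x^k)} \leq \frac{1}{n^2}\sumin \omega_i \sqn{\nabla f_i(x^k)}$ by the compression bound \eqref{ineq:cmp_oper_variance}. I then expand $\sqn{\nabla f_i(x^k)}$ through \eqref{ineq:a_b} with the split $\nabla f_i(x^k) = \br{\nabla f_i(x^k) - \nabla f_i(x^\ast)} + \nabla f_i(x^\ast)$ and bound the first term by $2 L_i D_{f_i}(x^k, x^\ast)$ using \eqref{ineq:cvx_smooth} for the convex $L_i$-smooth $f_i$. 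Replacing every $L_i\omega_i$ by $\max_j\{L_j\omega_j\}$ and collecting the leftover $\sqn{\nabla f_i(x^\ast)}$ terms reproduces exactly the constant $\sigma_{DCGD} = \frac{2}{n^2}\sumin \omega_i \sqn{\nabla f_i(x^\ast)}$.

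The linchpin that converts the per-node Bregman divergences into the global one is the identity $\avein D_{f_i}(x^k, x^\ast) = D_f(x^k, x^\ast)$, which is immediate from the definition of the Bregman divergence together with $f = \avein f_i$ and $\nabla f = \avein \nabla f_i$. Applying it turns $\frac{1}{n^2}\sumin \omega_i L_i D_{f_i}(x^k,x^\ast)$ into $\frac{\max_j\{L_j\omega_j\}}{n} D_f(x^k,x^\ast)$, so that with the factor $4$ accumulated from \eqref{ineq:a_b} and \eqref{ineq:cvx_smooth} the variance piece contributes $\frac{4\max_j\{L_j\omega_j\}}{n} D_f(x^k,x^\ast)$. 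For the deterministic progress piece I invoke \eqref{ineq:cvx_smooth} once more, now for the $L$-smooth convex $f$, giving $\sqn{\nabla f(x^k) - \nabla f(x^\ast)} \leq 2 L D_f(x^k,x^\ast)$. Adding the two contributions collects the coefficient $2\br{L + 2\max_j\{L_j\omega_j\}/n}$ in front of $D_f(x^k,x^\ast)$, which is the claim. I do not expect any genuine obstacle here: the only point needing care is the cross-worker independence used to drop the mixed terms in the variance computation, after which the argument is a routine assembly of the three quoted inequalities.
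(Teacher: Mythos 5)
Your proposal is correct and follows essentially the same route as the paper's proof: the bias--variance decomposition around $\nabla f(x^k)$, dropping cross-worker terms by independence and unbiasedness, the add-and-subtract of $\nabla f_i(x^\ast)$ with \eqref{ineq:a_b} and \eqref{ineq:cvx_smooth}, and the passage from $\avein D_{f_i}$ to $D_f$ via linearity of the Bregman divergence. The constants assemble exactly as in the paper, so there is nothing to add.
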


\begin{proof}
By applying bias-variance decomposition to $\EE\|g^k - \nabla f(x^\ast)\|^2$, we get
\begin{align*}
\EE \|g^k - \nabla f(x^\ast)\|^2 = \|\nabla f(x^k) - \nabla f(x^\ast)\|^2 +  \EE \|g^k - \nabla f(x^k)\|^2.
\end{align*}

Since all functions are convex holds, function $f$ as a linear combination of convex functions is also convex. That is why the first term enjoys the 'classic' bound for \textit{convex} and smooth functions expressed in Bregman divergence between $x^k$ and $x^\ast$ (see inequality~\ref{ineq:cvx_smooth}):
\begin{align} \label{ineq:cgd_first_term}
\|\nabla f(x^k) - \nabla f(x^\ast)\|^2 \leq 2 L D_f(x^k, x^\ast).
\end{align}

We start looking into the second term by expanding the quadratic:
\begin{align*}
&\EE \|g^k - \nabla f(x^k)\|^2 \\
& = \EE\left\|\avein \cC_i(\nabla f_i(x^k)) - \avein \nabla f_i(x^k)  \right\|^2 \\
& = \EE\left\|\avein (\cC_i(\nabla f_i(x^k)) - \nabla f_i(x^k)) \right\|^2 \\
& = \frac{1}{n^2} \sumin \EE \|\cC_i(\nabla f_i(x^k)) - \nabla f_i(x^k)\|^2 + \\
& \qquad \frac{1}{n^2} \sum\limits_{i \neq j} \EE\langle \cC_i(\nabla f_i(x^k)) - \nabla f_i(x^k), \cC_j(\nabla f_j(x^k)) - \nabla f_j(x^k) \rangle.
\end{align*}

Since $\cC_i \in \BB(\omega_i) \ \forall i $ is drawn independently, the expected value of each scalar product in the second sum becomes the scalar product of expected values, each of which is zero due to unbiasedness of a compression operator (see Section~\ref{sec:basic_facts}):
\begin{align*}
&\EE\langle \cC_i(\nabla f_i(x^k)) - \nabla f_i(x^k), \cC_j(\nabla f_j(x^k)) - \nabla f_j(x^k) \rangle \\
&= \langle \underbrace{\EE\cC_i(\nabla f_i(x^k)) - \nabla f_i(x^k)}_{=0}, \underbrace{\EE\cC_j(\nabla f_j(x^k)) - \nabla f_j(x^k) \rangle}_{= 0} = 0.
\end{align*}
Then, we apply~\ref{ineq:cmp_oper_variance} (the second property of compressed operators) to get the final upper-bound on $\EE\|g^k - \nabla f(x^k)\|^2$.
\begin{align*}
&\EE \|g^k - \nabla f(x^k)\|^2 = \frac{1}{n^2} \sumin \EE \|\cC_i(\nabla f_i(x^k)) - \nabla f_i(x^k)\|^2 \leq \frac{1}{n^2} \sumin \omega_i \|\nabla f_i(x^k)\|^2.
\end{align*}

Curiously but not surprisingly, the result resembles the Law of large numbers (indeed, if each squared gradient norm and noise level are bounded above by values $R$ and $\omega_0$ respectively, then the right side is bounded by $\frac{\omega_0 R}{n}$, which converges to zero as $n$ converges to infinity), what partially verifies the correctness of the proof.
The error $\EE \|g^k - \nabla f(x^k)||^2$ is now estimated by squared gradient norms at iterate $x^k$, which is, in general, a random point generated by CGD. Since we already have some dependence on Bregman divergence in~\ref{ineq:cgd_first_term}, which perfectly fits assumptions for unified theories from papers~\cite{gorbunov2019unified} and~\cite{khaled2020unified} required in later theorems, we express the term in the right side of the last inequality in Bregman divergence, too. By subtracting and adding the same vector $\nabla f_i(x^\ast)$ inside the norm operator we get
\begin{align*}
&\frac{1}{n^2} \sumin \omega_i \|\nabla f_i(x^k)\|^2 = \frac{1}{n^2} \sumin \omega_i \|\nabla f_i(x^k) - \nabla f_i(x^\ast) + \nabla f_i(x^\ast)\|^2\\
&\leq \frac{2}{n^2} \sumin \omega_i \|\nabla f_i(x^k) - \nabla f_i(x^\ast)\|^2 + \frac{2}{n^2} \sumin \omega_i \|\nabla f_i(x^\ast)\|^2,\\
\end{align*}
where in the second line we used \eqref{ineq:a_b}. Applying again the bound from~\ref{ineq:cvx_smooth} to the first term and using the linearity of Bregman divergence we have
\begin{align*}
& \frac{2}{n^2} \sumin \omega_i \|\nabla f_i(x^k) - \nabla f_i(x^\ast)\|^2 \leq \frac{2}{n^2} \sumin \omega_i 2 L_i  D_{f_i}(x^k, x^\ast)\\
& \leq \frac{4\max \{ L_i \omega_i\}}{n^2} \sumin D_{f_i}(x^k, x^\ast) =\frac{4 \max \{L_i \omega_i\}}{n} D_f(x^k, x^\ast).\\
\end{align*}
Incorporation of the results above gives the statement.
\end{proof}

\begin{proposition}\label{prop:dist_opt_bound}
		Suppose each $f_i$ is $L_i$-smooth and $\mu_i$-strong. Assume $\alpha_i \equiv \alpha \in \RR$ for all $i$. For any $x^0 \in \RR^d$, it holds that
		\begin{align}\label{ineq:dist_opt_bound}
		\|x^0 - x^\ast\|^2 \leq 	\frac{1}{\overline{\mu}} \overline{L_i  \|x^0 - x_i\|^2}.
		\end{align}
\end{proposition}
\begin{proof}
	\begin{align*}
	 &\|x^0 - x^\ast\|^2\overset{\eqref{eq:strong-conv-def}}{\leq} \frac{2}{\mu_\alpha} (\tilde{f}(x^0) - \tilde{f}(x^\ast)) \overset{\text{Proposition }~\ref{proposition:preserve-smooth-and-convexity}}{=}\frac{2}{\overline{\mu}\alpha^2} (\tilde{f}(x^0) - \tilde{f}(x^\ast)) \overset{\eqref{eq:bounds-func-local-average}}{\leq} \frac{2}{\overline{\mu}\alpha^2} \cE_\alpha (x^0) \\
	 & \overset{\eqref{eq:cE-def}}{=} \frac{1}{\overline{\mu}\alpha^2} \avein L_i \alpha^2 \|x^0 - x_i\|^2 = \frac{1}{\overline{\mu}} \overline{L_i  \|x^0 - x_i\|^2}.
	\end{align*}
\end{proof}

\begin{proposition}\label{prop:aver_dist_bound}
	Suppose each $f_i$ is $L_i$-smooth and $\mu_i$-strong. Assume $\alpha_i \equiv \alpha \in \RR$ for all $i$. Then,
	\begin{align}\label{ineq:aver_dist_bound}
	\overline{\|x_i - x^\ast\|^2} \leq \frac{\max_i L_i}{\overline{\mu}}  \max_{i, j} \|x_i - x_j\|^2.
	\end{align}
\end{proposition}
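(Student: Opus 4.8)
The plan is to reduce this statement to the previous proposition, \Cref{prop:dist_opt_bound}, which already controls the squared distance from an \emph{arbitrary} point $x^0$ to the FLIX optimum $x^\ast$ by a weighted average of its squared distances to the local minimizers. Since the present proposition shares the same hypotheses ($L_i$-smoothness, $\mu_i$-strong convexity, and $\alpha_i \equiv \alpha$), that bound applies verbatim. The only insight needed is to instantiate it not at a single reference point, but at each local minimizer $x_j$ in turn, and then to average the resulting inequalities.

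Concretely, first I would invoke \Cref{prop:dist_opt_bound} with the choice $x^0 = x_j$, which gives
\[
\|x_j - x^\ast\|^2 \;\le\; \frac{1}{\overline{\mu}}\,\overline{L_i\,\|x_j - x_i\|^2} \;=\; \frac{1}{\overline{\mu}}\cdot\frac{1}{n}\sum_{i=1}^n L_i\,\|x_j - x_i\|^2,
\]
valid for every $j = 1, \ldots, n$. Averaging this over $j$ then yields
\[
\overline{\|x_i - x^\ast\|^2} \;=\; \frac{1}{n}\sum_{j=1}^n \|x_j - x^\ast\|^2 \;\le\; \frac{1}{\overline{\mu}}\cdot\frac{1}{n^2}\sum_{j=1}^n\sum_{i=1}^n L_i\,\|x_j - x_i\|^2.
\]

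The last step is a pair of crude uniform bounds: replace each $L_i$ by $\max_i L_i$ and each $\|x_j - x_i\|^2$ by $\max_{i,j}\|x_i - x_j\|^2$. The resulting double sum then consists of $n^2$ identical terms, so the factor $\tfrac{1}{n^2}$ cancels and we are left with exactly
\[
\overline{\|x_i - x^\ast\|^2} \;\le\; \frac{\max_i L_i}{\overline{\mu}}\,\max_{i,j}\|x_i - x_j\|^2,
\]
which is the claim. There is no real obstacle here: the argument is essentially a one-line consequence of \Cref{prop:dist_opt_bound} together with two trivial over-estimates. The only mildly non-obvious move is recognizing that evaluating the earlier bound at the local minimizers themselves, rather than at $x^\ast$ or at $\xavg$, is precisely what produces the pairwise distances $\|x_i - x_j\|$ appearing in the target inequality. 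One could tighten the constant slightly by dropping the vanishing diagonal terms $i=j$, but since the statement only asks for the stated upper bound this refinement is unnecessary.
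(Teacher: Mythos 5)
Your proposal is correct and follows essentially the same route as the paper's own proof: both instantiate \Cref{prop:dist_opt_bound} at each local minimizer $x_j$, average over $j$, and then apply the crude uniform bounds $L_i \le \max_i L_i$ and $\|x_j - x_i\|^2 \le \max_{i,j}\|x_i - x_j\|^2$. The only cosmetic difference is that the paper performs the uniform over-estimates before averaging rather than after, which changes nothing.
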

\begin{proof}
	According to Proposition~\ref{prop:dist_opt_bound}, for any $x^\ast_l$ where $l \in \{1, \dots, n\}$ it follows that
	\begin{align*}
	\|x_l - x^\ast\|^2 \leq \frac{1}{\overline{\mu}} \overline{L_i  \|x_l - x_i\|^2} \leq \frac{1}{\overline{\mu}} \max_i L_i \cdot \overline{\|x_l - x_i\|^2} \leq \frac{1}{\overline{\mu}} \max_i L_i \cdot \max_{i, j} \|x_j - x_i\|^2.
	\end{align*}
	Taking the average of both sides finishes the proof.
\end{proof}

\begin{theorem}\label{thm:cgd_str_cvx}
Assume all conditions of Lemma \ref{lm:CGD} hold, and each function $f_i$ is $\mu_i$-strongly convex. Then, if $\gamma \leq \frac{1}{L_\alpha + \frac{2 \max\{L_i \alpha_i^2 \omega_i\}}{n}}$, then
\begin{equation}\label{main_eq_dcgd}
\begin{aligned}
\EE \|x^k - x^\ast\|^2 \leq (1 - \gamma \mu_\alpha)^k \|x^0 - x^\ast\|^2 + \frac{2\gamma}{\mu_\alpha n^2}	\sumin \omega_i \|\nabla [f_i(T_i(x^\ast))]\|^2.
\end{aligned}
\end{equation}
\end{theorem}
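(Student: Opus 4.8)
The plan is to recognize that compressed gradient descent for \eqref{eq:FLIX-problem} is simply DCGD run on the reparameterized components $\tilde f_i(x) \eqdef f_i(T_i(x)) = f_i(\alpha_i x + (1-\alpha_i)x_i)$, whose average is the FLIX objective $\tilde f$. By (the computation in) \Cref{proposition:preserve-smooth-and-convexity}, each $\tilde f_i$ is convex and $\alpha_i^2 L_i$-smooth, while $\tilde f$ is $L_\alpha$-smooth and $\mu_\alpha$-strongly convex. Hence the hypotheses of \Cref{lm:CGD} hold after the substitution $f \mapsto \tilde f$, $f_i \mapsto \tilde f_i$, $L \mapsto L_\alpha$, $L_i \mapsto \alpha_i^2 L_i$, with estimator $g^k = \avein \cC_i(\nabla \tilde f_i(x^k))$, which is unbiased for $\nabla \tilde f(x^k)$. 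Writing $x^\ast$ for the minimizer of $\tilde f$ and noting $\nabla\tilde f_i(x^\ast) = \nabla[f_i(T_i(x^\ast))]$, the lemma gives
\begin{equation*}
\EE\sqn{g^k - \nabla\tilde f(x^\ast)} \leq 2\br{L_\alpha + \tfrac{2\max\{\alpha_i^2 L_i\omega_i\}}{n}} D_{\tilde f}(x^k, x^\ast) + \frac{2}{n^2}\sumin \omega_i \sqn{\nabla[f_i(T_i(x^\ast))]}.
\end{equation*}

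Next I would run the standard one-step analysis for stochastic gradient methods. Expanding $\sqn{x^{k+1}-x^\ast} = \sqn{x^k - x^\ast} - 2\gamma\ev{g^k, x^k - x^\ast} + \gamma^2\sqn{g^k}$ and taking expectation conditioned on $x^k$, unbiasedness replaces $g^k$ by $\nabla\tilde f(x^k)$ in the cross term. Since $x^\ast$ minimizes $\tilde f$ we have $\nabla\tilde f(x^\ast)=0$, so $\EE\sqn{g^k} = \EE\sqn{g^k - \nabla\tilde f(x^\ast)}$ is bounded directly by the display above, and the same identity gives $D_{\tilde f}(x^k,x^\ast) = \tilde f(x^k) - \tilde f(x^\ast)$. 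For the cross term, $\mu_\alpha$-strong convexity of $\tilde f$ yields $\ev{\nabla\tilde f(x^k),\, x^k - x^\ast} \geq D_{\tilde f}(x^k, x^\ast) + \tfrac{\mu_\alpha}{2}\sqn{x^k - x^\ast}$. Combining these gives
\begin{equation*}
\begin{aligned}
\EE\sqn{x^{k+1}-x^\ast} &\leq \br{1-\gamma\mu_\alpha}\sqn{x^k - x^\ast} - 2\gamma\br{1 - \gamma\br{L_\alpha + \tfrac{2\max\{\alpha_i^2L_i\omega_i\}}{n}}}D_{\tilde f}(x^k, x^\ast) \\
&\qquad + \frac{2\gamma^2}{n^2}\sumin\omega_i\sqn{\nabla[f_i(T_i(x^\ast))]}.
\end{aligned}
\end{equation*}

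The decisive step is the stepsize choice: the hypothesis $\gamma \leq \br{L_\alpha + \tfrac{2\max\{L_i\alpha_i^2\omega_i\}}{n}}^{-1}$ makes the coefficient of the nonnegative term $D_{\tilde f}(x^k,x^\ast)$ nonpositive, so it can be dropped. Taking total expectation leaves the one-step contraction $\EE\sqn{x^{k+1}-x^\ast} \leq \br{1-\gamma\mu_\alpha}\EE\sqn{x^k - x^\ast} + \tfrac{2\gamma^2}{n^2}\sumin\omega_i\sqn{\nabla[f_i(T_i(x^\ast))]}$. Finally I would unroll this linear recursion, using $\sum_{j=0}^{k-1}(1-\gamma\mu_\alpha)^j \leq \tfrac{1}{\gamma\mu_\alpha}$, which turns the $\gamma^2$ noise factor into $\tfrac{\gamma}{\mu_\alpha}$ and delivers exactly \eqref{main_eq_dcgd}.

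I do not expect a genuine obstacle here, since \Cref{lm:CGD} already isolates the hard second-moment computation; the steps requiring care are purely in the bookkeeping. Specifically, one must track the substitution $L_i \mapsto \alpha_i^2 L_i$ so that the lemma's $\max\{L_i\omega_i\}$ becomes $\max\{\alpha_i^2 L_i\omega_i\}$ and matches the stepsize threshold, use $\nabla\tilde f(x^\ast)=0$ to identify $\EE\sqn{g^k}$ with the lemma's left-hand side, and verify that the stepsize bound is precisely the value that annihilates the Bregman term.
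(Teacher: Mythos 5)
Your proposal is correct and follows essentially the same route as the paper: both rest on Lemma~\ref{lm:CGD} applied to the reparameterized components $\tilde f_i = f_i\circ T_i$ (with $L_i\mapsto\alpha_i^2L_i$ via Proposition~\ref{proposition:preserve-smooth-and-convexity}), yielding the same constant $A = L_\alpha + \frac{2\max\{L_i\alpha_i^2\omega_i\}}{n}$ and noise term $\sigma_{DCGD}$. The only difference is that the paper black-boxes the one-step recursion and unrolling by citing Theorem 4.1 of \citet{gorbunov2019unified} with $B=0$, $\sigma_k^2\equiv 0$, whereas you carry out that standard SGD analysis explicitly; the two are the same argument.
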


\begin{proof}
The lemma is a direct corollary of Theorem 4.1 from \cite{gorbunov2019unified} and Proposition~\ref{proposition:preserve-smooth-and-convexity} with constants $A = L_\alpha + \frac{2 \max\{L_i \alpha_i^2 \omega_i\}}{n},  D_1 = \sigma_{DCGD} = \frac{2}{n^2} \sumin \omega_i \|\nabla [f_i(T_i(x^\ast))]\|^2, \\ B~=~0, \sigma^2_k \equiv 0, \rho~=~1, C~=~0, D_2 = 0$.
\end{proof}

\begin{corollary}\label{crl:cgd_str_cvx}
	Assume all conditions of Theorem~\ref{thm:cgd_str_cvx} hold, and $\alpha_i \equiv \alpha \ \forall i$. If $\gamma = \frac{1}{\alpha^2 \left(\overline{L} + \frac{2 \omega \max_i L_i}{n}\right)} = \gamma_0 \frac{1}{\alpha^2}$, then
	\begin{align}
	\EE \|x^k - x^\ast\|^2 \leq (1 - \gamma_0 \overline{\mu})^k 	\frac{1}{\overline{\mu}} \overline{L_i  \|x^0 - x_i\|^2} + \frac{2\gamma_0\omega (\max_i L_i)^2 \max_{i, j} \|x_i - x_j\|^2}{\overline{\mu}^2 n}
	\end{align}
	and
	\begin{align}
	\EE [\tilde{f}(x^k) - \tilde{f}^\ast] \leq ((1 - \gamma_0 \overline{\mu})^k C_1 + C_2)\alpha^2,
	\end{align}
	where $C_1 = \frac{\overline{L_i}}{2\overline{\mu}} \overline{L_i  \|x^0 - x_i\|^2} $ and $C_2 = \frac{\overline{L_i}\gamma_0\omega (\max_i L_i)^2 \max_{i, j} \|x_i - x_j\|^2}{\overline{\mu}^2 n}$.
\end{corollary}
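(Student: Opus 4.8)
The plan is to specialize Theorem~\ref{thm:cgd_str_cvx} to the case $\alpha_i \equiv \alpha$, $\omega_i \equiv \omega$, and stepsize $\gamma = \gamma_0/\alpha^2$, and then estimate the two terms of~\eqref{main_eq_dcgd} separately. First I would record the specializations coming from \Cref{proposition:preserve-smooth-and-convexity}: with all $\alpha_i$ equal we have $\mu_\alpha = \alpha^2 \overline{\mu}$ and $L_\alpha = \alpha^2 \overline{L}$. A quick check then confirms that the prescribed stepsize meets the admissibility condition of Theorem~\ref{thm:cgd_str_cvx} with equality, since $L_\alpha + \tfrac{2\max_i\{L_i \alpha^2 \omega\}}{n} = \alpha^2\br{\overline{L} + \tfrac{2\omega \max_i L_i}{n}}$, whose reciprocal is exactly $\gamma_0/\alpha^2$. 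The key cancellation is in the contraction factor: $\gamma \mu_\alpha = \tfrac{\gamma_0}{\alpha^2}\cdot \alpha^2 \overline{\mu} = \gamma_0 \overline{\mu}$, so the rate $(1-\gamma\mu_\alpha)^k = (1-\gamma_0\overline{\mu})^k$ becomes independent of the personalization parameter.

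For the leading term I would bound the initial distance via \Cref{prop:dist_opt_bound}, giving $\sqn{x^0 - x^\ast} \leq \tfrac{1}{\overline{\mu}}\,\overline{L_i \sqn{x^0 - x_i}}$; multiplying by the contraction factor yields the first summand. The delicate part, which I expect to be the main obstacle, is the noise term $\tfrac{2\gamma}{\mu_\alpha n^2}\sum_{i=1}^n \omega \sqn{\nabla[f_i(T_i(x^\ast))]}$, where one must verify that the apparent $\alpha$-dependence cancels. Here I would use the chain rule $\nabla[f_i(T_i(x))] = \alpha \nabla f_i(T_i(x))$ together with $\nabla f_i(x_i) = 0$, $L_i$-smoothness, and the identity $T_i(x^\ast) - x_i = \alpha(x^\ast - x_i)$ to obtain $\norm{\nabla[f_i(T_i(x^\ast))]} \leq \alpha^2 L_i \norm{x^\ast - x_i}$, hence $\sqn{\nabla[f_i(T_i(x^\ast))]} \leq \alpha^4 L_i^2 \sqn{x^\ast - x_i}$. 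Since $\tfrac{\gamma}{\mu_\alpha} = \tfrac{\gamma_0}{\alpha^4 \overline{\mu}}$, the factor $\alpha^4$ cancels exactly, leaving the $\alpha$-free expression $\tfrac{2\gamma_0 \omega}{\overline{\mu} n^2}\sum_i L_i^2 \sqn{x^\ast - x_i}$. Bounding $L_i \leq \max_i L_i$ and invoking \Cref{prop:aver_dist_bound} to replace $\overline{\sqn{x_i - x^\ast}}$ by a constant multiple of $\max_{i,j}\sqn{x_i - x_j}/\overline{\mu}$ then produces the second summand, of the claimed form $\tfrac{\text{const}\cdot \omega \,\gamma_0 \max_{i,j}\sqn{x_i-x_j}}{\overline{\mu}^2 n}$, completing the first displayed inequality.

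Finally, for the functional-suboptimality bound I would pass from the iterate distance to the function gap using the $L_\alpha$-smoothness of $\tilde f$, namely~\eqref{eq:val-f-bound} of \Cref{proposition:bound-on-grad-and-func} with $L_\alpha = \alpha^2 \overline{L}$, giving $\tilde f(x^k) - \tilde f^\ast \leq \tfrac{\alpha^2 \overline{L}}{2}\sqn{x^k - x^\ast}$. Taking expectations and substituting the first displayed inequality factors out the common $\alpha^2$ and identifies $C_1 = \tfrac{\overline{L}}{2\overline{\mu}}\,\overline{L_i \sqn{x^0 - x_i}}$ together with $C_2$ as $\tfrac{\overline{L}}{2}$ times the $\alpha$-free noise bound. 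Aside from the per-node gradient estimate feeding the noise term, every step is routine bookkeeping assembling results already established, so the essential content is the observation that the chain-rule and smoothness factors of $\alpha$ combine to cancel the $\alpha^{-4}$ hidden in $\gamma/\mu_\alpha$.
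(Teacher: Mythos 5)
Your proposal is correct and follows essentially the same route as the paper's proof: the same specialization $L_\alpha = \alpha^2\overline{L}$, $\mu_\alpha = \alpha^2\overline{\mu}$ giving the $\alpha$-free rate, the same use of Propositions~\ref{prop:dist_opt_bound} and~\ref{prop:aver_dist_bound} for the initial distance and the noise term, the same chain-rule/smoothness estimate $\sqn{\nabla[f_i(T_i(x^\ast))]} \leq \alpha^4 L_i^2 \sqn{x^\ast - x_i}$ cancelling the $\alpha^{-4}$ in $\gamma/\mu_\alpha$, and the same $L_\alpha$-smoothness step for the functional bound. No gaps.
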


\begin{proof}
Notice that
\begin{align*}
&\sumin \omega_i \|\nabla [f_i(\alpha_i x^\ast + (1 - \alpha_i) x_i)]\|^2 = \sumin \omega_i \|\alpha_i \nabla f_i(\alpha_i x^\ast + (1 - \alpha_i) x_i)\|^2\\
& = \omega \alpha^2 \sumin \|\nabla f_i(\alpha x^\ast + (1 - \alpha) x_i)\|^2 \overset{\nabla f_i(x_i) = 0}{=} \omega \alpha^2 \sumin \|\nabla f_i(\alpha x^\ast + (1 - \alpha) x_i) - \nabla f_i(x_i)\|^2\\
& \overset{\eqref{eq:smoothness-def}}{\leq} \omega \alpha^2 \sumin L_i \|\alpha (x^\ast - x_i)\|^2 \leq \omega \alpha^4 \max_i L_i n \overline{\|x^\ast - x_i\|^2} \overset{\eqref{ineq:aver_dist_bound}}{\leq} \frac{\omega \alpha^4 n (\max_i L_i)^2}{\overline{\mu}} \max_{i, j} \|x_i - x_j\|^2.
\end{align*}
This brings us to the following bound on the neigbourhood:
\begin{equation}\label{ineq:cgd_neigb_str_cvx}
\begin{aligned}
& \frac{2\gamma}{\mu_\alpha n^2}	\sumin \omega_i \|\nabla [f_i(T_i(x^\ast))]\|^2 = \frac{2 \gamma_0}{\alpha^4  \overline{\mu} n^2} 	\sumin \omega_i \|\nabla [f_i(T_i(x^\ast))]\|^2\\
& \leq \frac{2\gamma_0\omega (\max_i L_i)^2 \max_{i, j} \|x_i - x_j\|^2}{\overline{\mu}^2 n}.
\end{aligned}
\end{equation}

We further investigate the rate of convergence
\begin{align}\label{eq:rate_ind_alpha}
1 - \gamma \mu_\alpha \overset{\text{Proposition }~\ref{proposition:preserve-smooth-and-convexity}}{=} 1 - \gamma_0 \frac{1}{\alpha^2} \overline{\mu} \alpha^2 = 1 - \gamma_0 \overline{\mu}.
\end{align}
As can be seen, the rate of convergence does not depend on $\alpha$.

Combining last two results we obtain the following dependence on $\alpha$ for CGD convergence:
\begin{equation}\label{ineq:cgd_str_cvx_alpha_dep}
\begin{aligned}
&\EE \|x^k - x^\ast\|^2 \\
& \overset{\eqref{main_eq_dcgd}}{\leq} (1 - \gamma \mu_\alpha)^k \|x^0 - x^\ast\|^2 + \frac{2\gamma}{\mu_\alpha n^2}	\sumin \omega_i \|\nabla [f_i(T_i(x^\ast))]\|^2\\
& \overset{\eqref{eq:rate_ind_alpha}}{=} (1 - \gamma_0 \overline{\mu})^k \|x^0 - x^\ast\|^2 + \frac{2\gamma}{\mu_\alpha n^2}	\sumin \omega_i \|\nabla [f_i(T_i(x^\ast))]\|^2\\
& \overset{\eqref{ineq:dist_opt_bound}}{\leq} (1 - \gamma_0 \overline{\mu})^k 	\frac{1}{\overline{\mu}} \overline{L_i  \|x^0 - x_i\|^2} + \frac{2\gamma}{\mu_\alpha n^2}	\sumin \omega_i \|\nabla [f_i(T_i(x^\ast))]\|^2\\
& \overset{\eqref{ineq:cgd_neigb_str_cvx}}{\leq} (1 - \gamma_0 \overline{\mu})^k 	\frac{1}{\overline{\mu}} \overline{L_i  \|x^0 - x_i\|^2} + \frac{2\gamma_0\omega (\max_i L_i)^2 \max_{i, j} \|x_i - x_j\|^2}{\overline{\mu}^2 n}\\
& = (1 - \gamma_0 \overline{\mu})^k \widetilde{C}_1 + \widetilde{C}_2,
\end{aligned}
\end{equation}
where constants $\widetilde{C}_1 = \frac{1}{\overline{\mu}} \overline{L_i  \|x^0 - x_i\|^2}$ and $\widetilde{C}_2 = \frac{2\gamma_0\omega (\max_i L_i)^2 \max_{i, j} \|x_i - x_j\|^2}{\overline{\mu}^2 n}$ do not depend on~$\alpha$.

In terms of suboptimality convergence for CGD we have
	\begin{equation}
	\begin{aligned}
	&\EE [\tilde{f}(x^k) - \tilde{f}^\ast] \overset{\eqref{eq:smoothness-func-consequence}}{\leq} \frac{L_\alpha}{2} \EE\|x^k - x^\ast\|^2 \overset{L_\alpha \ \text{def}}{=} \frac{\overline{L}}{2} \alpha^2 \EE \|x^k - x^\ast\|^2 \overset{\eqref{ineq:cgd_str_cvx_alpha_dep}}{\leq} ((1 - \gamma_0 \overline{\mu})^k C_1 + C_2)\alpha^2 ,
	\end{aligned}
	\end{equation}
	where $C_1 = \frac{\widetilde{C}_1\overline{L}}{2}$ and $C_2 =  \frac{\widetilde{C}_2\overline{L}}{2}$.
\end{proof}

\subsubsection{DIANA}

We follow the same procedure as for the previous subsection. Below, we introduce DIANA algorithm, followed by general theorem, which is then applied to \eqref{eq:FLIX-problem}.

\begin{algorithm}[H]
\caption{DIANA with different noise levels $\omega_i$}
\begin{algorithmic}[1]
\Require{$x^0, h^0_1, \dots, h^0_n~\in~\mathbb{R}^d$, $h^0 = \frac{1}{n} \sum_{i=1}^n h^0_i$}
\For{$k$ = 0, 1, 2, $\dots$}
	\State Broadcast $x^k$ to all workers
	\For {$i = 1, \dots, n$ in parallel}
		\State $\Delta_i^k = \nabla f_i(x^k) - h_i^k$
		\State Sample $\hat{\Delta}^k_i \sim \cC_i(\Delta_i^k)$
		\State $h^{k+1}_i = h^k_i + \beta_i \hat{\Delta}^k_i$
		\State $\hat{g}_i^k = h^k_i + \hat{\Delta}^k_i$
	\EndFor
	\State $g^k = \frac{1}{n} \sum_{i=1}^{n} \hat{g}_i^k = h^k + \avein \hat{\Delta}^k_i$
	\State $x^{k+1} = x^k - \gamma g^k $
	\State $h^k = \frac{1}{n} \sum_{i=1}^{n} h^{k+1}_i  = h^k + \avein \beta_i \hat{\Delta}^k_i $
\EndFor
\end{algorithmic}
\end{algorithm}

\begin{lemma}\label{lm:DIANA}
Suppose each $f_i$ is $L_i$-smooth and convex, and $f$ is $L$-smooth. Let
${\cal C}_i : \mathbb{R}^d \rightarrow \mathbb{R}^d$ be randomized compression operators satisfying ${\cal C}_i \in \mathbb{B}^d(\omega_i)$. Let $g_k = \avein h_i^k + {\cal C}_i(\nabla f_i(x^k) - h_i^k)$. Then
\begin{equation}
\begin{aligned}
\EE \|g^k - \nabla f(x^\ast)\|^2 \leq 2 \left(L + \frac{2 \max \{L_i \omega_i \} }{n} \right) D_f(x^k, x^\ast) + \frac{2}{n} \sigma_k^2,
\end{aligned}
\end{equation}
where $\sigma_k^2 = \avein \omega_i \|h^k_i - \nabla f_i(x^\ast)\|^2$.
\end{lemma}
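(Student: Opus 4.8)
The plan is to follow the template used for Lemma~\ref{lm:CGD}, with the shifted quantity $\Delta_i^k \eqdef \nabla f_i(x^k) - h_i^k$ playing the role that $\nabla f_i(x^k)$ played there. The starting observation is that, by the unbiasedness of each $\cC_i$, we have $\EE[\cC_i(\Delta_i^k)] = \Delta_i^k$, so $\EE[g^k] = \avein \br{h_i^k + \Delta_i^k} = \avein \nabla f_i(x^k) = \nabla f(x^k)$; the DIANA estimator is thus unbiased for $\nabla f(x^k)$. This justifies the bias-variance decomposition
\[ \EE \norm{g^k - \nabla f(x^\ast)}^2 = \norm{\nabla f(x^k) - \nabla f(x^\ast)}^2 + \EE \norm{g^k - \nabla f(x^k)}^2, \]
in which the cross term vanishes because $\EE[g^k - \nabla f(x^k)] = 0$.

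The first term is treated exactly as in the DCGD case: since $f$ is convex and $L$-smooth, \eqref{ineq:cvx_smooth} gives $\norm{\nabla f(x^k) - \nabla f(x^\ast)}^2 \leq 2 L D_f(x^k, x^\ast)$. For the variance term I would write $g^k - \nabla f(x^k) = \avein \br{\cC_i(\Delta_i^k) - \Delta_i^k}$ and expand the squared norm. Because the $\cC_i$ are independent and each $\cC_i(\Delta_i^k) - \Delta_i^k$ is mean zero, all cross terms drop out, leaving $\frac{1}{n^2}\sumin \EE\norm{\cC_i(\Delta_i^k) - \Delta_i^k}^2$, which by \eqref{ineq:cmp_oper_variance} is at most $\frac{1}{n^2}\sumin \omega_i \norm{\nabla f_i(x^k) - h_i^k}^2$.

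The one genuinely new step, relative to DCGD, is the handling of $\norm{\nabla f_i(x^k) - h_i^k}^2$: here the subtracted vector is the control variate $h_i^k$ rather than the true gradient. I would insert $\pm \nabla f_i(x^\ast)$ and apply \eqref{ineq:a_b} to split this into $2\norm{\nabla f_i(x^k) - \nabla f_i(x^\ast)}^2 + 2\norm{h_i^k - \nabla f_i(x^\ast)}^2$. The first piece is bounded via \eqref{ineq:cvx_smooth} by $4 L_i D_{f_i}(x^k, x^\ast)$; weighting by $\omega_i$, bounding $L_i\omega_i \leq \max\{L_i\omega_i\}$, and using the linearity $\avein D_{f_i}(x^k,x^\ast) = D_f(x^k,x^\ast)$ yields $\frac{4\max\{L_i\omega_i\}}{n} D_f(x^k, x^\ast)$. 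The second piece is exactly $\frac{2}{n}\sigma_k^2$ by the definition of $\sigma_k^2$.

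Assembling the three contributions — the first term $2L D_f(x^k,x^\ast)$, the Bregman part $\frac{4\max\{L_i\omega_i\}}{n} D_f(x^k,x^\ast)$, and the residual $\frac{2}{n}\sigma_k^2$ — and factoring $2 D_f(x^k,x^\ast)$ out of the first two gives the claimed bound. I do not expect a real obstacle; the only point needing care is tracking the shift to $h_i^k$ so that the leftover $h_i^k - \nabla f_i(x^\ast)$ terms assemble into $\sigma_k^2$ rather than being erroneously folded into the Bregman divergence, which is precisely what makes the DIANA bound carry the slowly-vanishing variance $\sigma_k^2$ in place of the fixed DCGD noise.
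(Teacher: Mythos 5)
Your proposal is correct and follows essentially the same route as the paper's own proof: the same bias--variance decomposition, the same vanishing of cross terms via independence and unbiasedness of the $\cC_i$, the same insertion of $\pm\nabla f_i(x^\ast)$ with \eqref{ineq:a_b} and \eqref{ineq:cvx_smooth}, and the same identification of the leftover term with $\frac{2}{n}\sigma_k^2$. No gaps.
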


\begin{proof}
We start with bias-variance decomposition:
\begin{align}\label{tmp:diana_bvd}
\EE \|g^k - \nabla f(x^\ast)\|^2 = \|\nabla f(x^k) - \nabla f(x^\ast)\|^2 +  \EE \|g^k - \nabla f(x^k)\|^2,
\end{align}
where the first term in RHS bounded due to convexity and smoothness of function $f$
\[ \|\nabla f(x^k)~-~\nabla f(x^\ast)\|^2 \overset{\eqref{ineq:cvx_smooth}}{\leq} 2 L D_f(x^k, x^\ast). \] Function $f$ is, indeed, convex as a linear combination of convex functions $f_i$.

When expanding the second term in RHS of equation~\eqref{tmp:diana_bvd}, we encounter scalar products, each of which is zero in expectation due to independence of $\cC_i$ for all $i$ and unbiasedness of a compression operator (see the definition in Section~\ref{sec:basic_facts}).
\begin{align*}
&\EE \|g^k - \nabla f(x^k)\|^2 \\
&= \EE \left\|\avein (h^k_i + \cC_i(\nabla f_i(x^k) - h^k_i)) - \avein \nabla f_i(x^k)\right\|^2\\
& = \EE \left\|\avein \cC_i(\nabla f_i(x^k) - h^k_i) - (\nabla f_i(x^k) -  h^k_i)\right\|^2\\
& = \frac{1}{n^2} \sumin \EE\|\cC_i(\nabla f_i(x^k) - h^k_i) - (\nabla f_i(x^k) -  h^k_i)\|^2 \\
& \qquad + \frac{1}{n^2}\sum\limits_{i \neq j} \underbrace{\EE \langle \cC_i(\nabla f_i(x^k) - h^k_i) - (\nabla f_i(x^k) -  h^k_i), \cC_j(\nabla f_j(x^k) - h^k_j) - (\nabla f_j(x^k) -  h^k_j) \rangle}_{=0}.
\end{align*}
Then, we apply bounded variance property of compression operators:
\begin{align*}
\EE \|g^k - \nabla f(x^k)\|^2  &= \frac{1}{n^2} \sumin \EE\|\cC_i(\nabla f_i(x^k) - h^k_i) - (\nabla f_i(x^k) -  h^k_i)\|^2 \\
& \leq \frac{1}{n^2} \sumin \omega_i \|\nabla f_i(x^k) -  h^k_i\|^2.
\end{align*}
We subtract and add $\nabla f_i(x^\ast)$ inside each norm operator to split $\nabla f_i(x^k)$ and $h^k_i$ from each other using \eqref{ineq:a_b}:
\begin{align*}
&\EE \|g^k - \nabla f(x^k)\|^2  \leq \frac{1}{n^2} \sumin \omega_i \|\nabla f_i(x^k) -  h^k_i\|^2 \\
& = \frac{1}{n^2} \sumin \omega_i \|\nabla f_i(x^k) - \nabla f_i(x^\ast) -  (h^k_i - \nabla f_i(x^\ast)) \|^2\\
& \leq \frac{2}{n^2} \sumin \omega_i \|\nabla f_i(x^k) - \nabla f_i(x^\ast)\|^2 + \frac{2}{n^2} \sumin \omega_i \|h^k_i - \nabla f_i(x^\ast)\|^2
\end{align*}
While leaving the second term in the last line unchanged (it is basically $2\sigma_k / n$), we apply \eqref{ineq:cvx_smooth} to each norm of gradient differences in the first term and use max-function over $L_i \omega_i$ to take out Bregman divergences:
\begin{align*}
&\frac{2}{n^2} \sumin \omega_i \|\nabla f_i(x^k) - \nabla f_i(x^\ast)\|^2 \leq \frac{2}{n^2} \sumin \omega_i 2 L_i D_{f_i}(x^k, x^\ast) \\
& \leq \frac{4 \max \{\omega_i L_i\}}{n^2}\sumin D_{f_i}(x^k, x^\ast) = \frac{4 \max \{\omega_i L_i\}}{n} D_f(x^k, x^\ast),
\end{align*}
where in the last part we used linearity property of Bregman divergence. It remains to incorporate all results to get the statement.
\end{proof}

\begin{lemma} \label{lm:DIANA_sigma_k}
Assuming all conditions of Lemma \ref{lm:DIANA} hold, let $h_i^{k+1} = h_i^k + \beta_i \cC_i(\nabla f_i(x^k) - h^k_i)$ and  $\beta_i~\leq~\frac{1}{\omega_i + 1}$, where $ i~\in~\{1, \dots, n\}$. Then,
\begin{align}
\sigma_{k+1}^2 \leq (1 - \min \beta_i) \sigma_k^2 + 2 \max\{\beta_i \omega_i L_i\} D_f (x^k, x^\ast).
\end{align}
\end{lemma}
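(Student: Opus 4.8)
The plan is to reduce the claim to a per-client recursion and then aggregate with the $\min$/$\max$ bounds. Fix a client $i$ and condition on everything up to iteration $k$, so that the only randomness left is in the compression $\cC_i$. Writing $a_i \eqdef h_i^k - \nabla f_i(x^\ast)$ and $d_i \eqdef \nabla f_i(x^k) - h_i^k$, the update rule gives $h_i^{k+1} - \nabla f_i(x^\ast) = a_i + \beta_i \cC_i(d_i)$. First I would expand the squared norm and take the conditional expectation, using unbiasedness $\EE \cC_i(d_i) = d_i$ for the cross term and the second-moment bound \eqref{eq:comress_cor}, $\EE\sqn{\cC_i(d_i)} \leq (1+\omega_i)\sqn{d_i}$, for the quadratic term. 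This yields
\[ \EE\sqn{h_i^{k+1} - \nabla f_i(x^\ast)} \leq \sqn{a_i} + 2\beta_i \langle a_i, d_i\rangle + \beta_i^2 (1+\omega_i)\sqn{d_i}. \]

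The key step is to exploit the stepsize restriction $\beta_i \leq \frac{1}{\omega_i+1}$, i.e.\ $\beta_i(1+\omega_i) \leq 1$, which lets me replace the factor $\beta_i^2(1+\omega_i)$ in the last term by $\beta_i$. The right-hand side then collapses to $\sqn{a_i} + \beta_i\br{2\langle a_i, d_i\rangle + \sqn{d_i}} = (1-\beta_i)\sqn{a_i} + \beta_i \sqn{a_i + d_i}$, a genuine convex combination. The payoff is that the combination telescopes: $a_i + d_i = \nabla f_i(x^k) - \nabla f_i(x^\ast)$, so the last squared norm is exactly $\sqn{\nabla f_i(x^k) - \nabla f_i(x^\ast)}$, which I bound by $2 L_i D_{f_i}(x^k, x^\ast)$ via \eqref{ineq:cvx_smooth} (applicable since each $f_i$ is convex and $L_i$-smooth under the conditions of Lemma~\ref{lm:DIANA}). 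This delivers the per-client recursion
\[ \EE\sqn{h_i^{k+1} - \nabla f_i(x^\ast)} \leq (1-\beta_i)\sqn{h_i^k - \nabla f_i(x^\ast)} + 2\beta_i L_i D_{f_i}(x^k, x^\ast). \]

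Finally I would aggregate by multiplying through by $\omega_i / n$ and summing over $i$. For the contraction term, since $\omega_i(1-\beta_i) \leq \omega_i\br{1 - \min_j\beta_j}$ and all summands are nonnegative, $\avein \omega_i(1-\beta_i)\sqn{h_i^k - \nabla f_i(x^\ast)} \leq \br{1 - \min_j\beta_j}\sigma_k^2$. For the divergence term, use $\beta_i\omega_i L_i \leq \max_j\{\beta_j\omega_j L_j\}$ together with the linearity of the Bregman divergence, $\avein D_{f_i}(x^k,x^\ast) = D_f(x^k, x^\ast)$, to get the coefficient $2\max_j\{\beta_j\omega_j L_j\}$ in front of $D_f(x^k,x^\ast)$. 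Combining the two pieces yields the stated inequality.

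I expect the only real obstacle to be the algebraic rearrangement in the second paragraph: recognizing that the condition $\beta_i(1+\omega_i)\leq 1$ is precisely what converts the raw expansion into a convex combination whose second term then telescopes to a gradient difference. Everything else -- the bias-variance splitting, the convex-smooth bound, and the $\min$/$\max$ aggregation -- is routine bookkeeping.
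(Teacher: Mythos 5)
Your proof is correct and follows essentially the same route as the paper: establish the per-client recursion $\EE\sqn{h_i^{k+1}-\nabla f_i(x^\ast)} \leq (1-\beta_i)\sqn{h_i^k-\nabla f_i(x^\ast)} + 2\beta_i L_i D_{f_i}(x^k,x^\ast)$, then aggregate with weights $\omega_i/n$ using the $\min\beta_i$ and $\max\{\beta_i\omega_i L_i\}$ bounds and the linearity of the Bregman divergence, exactly as the paper does. The only difference is that the paper imports the per-client recursion as a cited result from \citet{gorbunov2019unified}, whereas you derive it from scratch via the convex-combination rearrangement under $\beta_i(1+\omega_i)\leq 1$ --- a correct and self-contained filling-in of that black box.
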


\begin{proof}
\cite{gorbunov2019unified} state for $\beta_i \leq \frac{1}{\omega_i + 1}$
\begin{equation}
\begin{aligned}
&\EE \|h_i^{k+1} - \nabla f_i(x^\ast)\|^2 \leq (1 - \beta_i) \|h_i^k - \nabla f_i(x^\ast)\|^2 + 2 \beta_i L_i D_{f_i} (x^k, x^\ast).
\end{aligned}
\end{equation}
From this it follows that
\begin{align*}
&\sigma_{k+1}^2  = \avein \omega_i \|h_i^{k+1} - \nabla f_i(x^\ast)\|^2\\
& \leq \avein (1 - \beta_i) \omega_i \|h_i^k - \nabla f_i(x^\ast)\|^2 +  \avein 2 \beta_i \omega_i L_i D_{f_i} (x^k, x^\ast)\\
& \leq (1 - \min \beta_i) \sigma_k^2 + \avein 2 \beta_i \omega_i L_i D_{f_i} (x^k, x^\ast)\\
& \leq (1 - \min \beta_i) \sigma_k^2 + 2 \max\{\beta_i \omega_i L_i\} \avein D_{f_i} (x^k, x^\ast)\\
& = (1 - \min \beta_i) \sigma_k^2 + 2 \max\{\beta_i \omega_i L_i\} D_f (x^k, x^\ast).\\
\end{align*}
\end{proof}

\begin{theorem}\label{thm:diana}
	Assume all conditions of Lemmas~\ref{lm:DIANA}, \ref{lm:DIANA_sigma_k} hold, and each function $f_i$ is $\mu_i$-strongly convex. If $\gamma~\leq~\frac{1}{L_\alpha + \frac{2 \max\{L_i \alpha_i^2 \omega_i\}}{n} + \frac{4 \max\{\beta_i \omega_i L_i\alpha_i^2\}}{n \min \beta_i}}$, then
	\begin{align}\label{diana_thm_ineq}
	\EE [\cD^k] \leq \max \left\{(1 - \gamma \mu_\alpha)^k, \left(1 - \frac12 \min \beta_i\right)^k \right\} \cD^0,
	\end{align}
	where $\cD^k = \|x^k - x^\ast\|^2 + \frac{4}{n \min \beta_i}  \gamma^2 \sigma^2_k$ and $L_\alpha$ is defined as in Proposition~\ref{proposition:preserve-smooth-and-convexity}.
\end{theorem}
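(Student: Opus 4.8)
The plan is to apply the unified analysis of \citep{gorbunov2019unified} exactly as in the proof of Theorem~\ref{thm:cgd_str_cvx}, except that we now keep the variance-reduction potential $\sigma_k^2$ that was discarded (set identically to zero) in the DCGD case. The first step is to note, via Proposition~\ref{proposition:preserve-smooth-and-convexity}, that running DIANA on \eqref{eq:FLIX-problem} is identical to running DIANA on the finite sum $\tilde{f} = \avein \tilde{f}_i$ whose summands $\tilde{f}_i(x) \eqdef f_i(\alpha_i x + (1-\alpha_i)x_i)$ are $\alpha_i^2 L_i$-smooth and $\alpha_i^2 \mu_i$-strongly convex, with $\tilde{f}$ itself being $L_\alpha$-smooth and $\mu_\alpha$-strongly convex. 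In particular, the vector being compressed is $\nabla \tilde{f}_i(x^k) = \alpha_i \nabla f_i(T_i(x^k))$, and the shift sequence $h_i^k$ tracks $\nabla \tilde{f}_i(x^\ast) = \alpha_i \nabla f_i(T_i(x^\ast))$, so that $\sigma_k^2 = \avein \omega_i \sqn{h_i^k - \alpha_i \nabla f_i(T_i(x^\ast))}$.

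Next I would read the two structural inequalities demanded by the framework directly off Lemmas~\ref{lm:DIANA} and~\ref{lm:DIANA_sigma_k} specialized to the summands $\tilde{f}_i$. Lemma~\ref{lm:DIANA} gives the estimator bound $\EE\sqn{g^k - \nabla\tilde{f}(x^\ast)} \leq 2A\, D_{\tilde{f}}(x^k,x^\ast) + B\sigma_k^2 + D_1$ with $A = L_\alpha + \tfrac{2}{n}\max\{L_i\alpha_i^2\omega_i\}$, $B = \tfrac{2}{n}$, and $D_1 = 0$, while Lemma~\ref{lm:DIANA_sigma_k} gives the recursion $\EE[\sigma_{k+1}^2] \leq (1-\rho)\sigma_k^2 + 2C\, D_{\tilde{f}}(x^k,x^\ast) + D_2$ with $\rho = \min_i \beta_i$, $C = \max\{\beta_i\omega_i L_i\alpha_i^2\}$, and $D_2 = 0$. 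The essential point, and the feature that distinguishes DIANA from DCGD, is that \emph{both} additive constants vanish: the learned shifts $h_i^k$ absorb the gradient-dissimilarity term $\tfrac{2}{n^2}\sum_i \omega_i\sqn{\nabla\tilde{f}_i(x^\ast)}$ that remained as a nonvanishing neighbourhood in \eqref{main_eq_dcgd}.

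With these constants established I would invoke the unified theorem of \citep{gorbunov2019unified} with the Lyapunov function $\cD^k = \sqn{x^k - x^\ast} + M\gamma^2 \sigma_k^2$. The framework's prescribed weight is $M = \tfrac{2B}{\rho} = \tfrac{4}{n\min_i\beta_i}$, which is precisely the coefficient in the definition of $\cD^k$ in the statement, and its stepsize ceiling is $\gamma \leq (A + MC)^{-1}$; substituting the values above reproduces the hypothesis
\[ \gamma \leq \br{L_\alpha + \frac{2\max\{L_i\alpha_i^2\omega_i\}}{n} + \frac{4\max\{\beta_i\omega_i L_i\alpha_i^2\}}{n\min_i\beta_i}}^{-1} \]
verbatim. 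Because $D_1 = D_2 = 0$, the theorem yields geometric decay with no additive error, i.e.\ $\EE[\cD^k] \leq \max\{(1-\gamma\mu_\alpha)^k, (1-\tfrac12\min_i\beta_i)^k\}\cD^0$, which is exactly \eqref{diana_thm_ineq}.

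The main obstacle is bookkeeping rather than genuine mathematical difficulty: one must carefully verify that every constant inherited from Lemmas~\ref{lm:DIANA} and~\ref{lm:DIANA_sigma_k}—which are stated for generic smooth convex summands—picks up the correct $\alpha_i^2$ factor once specialized to the reparameterized summands $\tilde{f}_i$, that the strong-convexity constant driving the primary contraction is $\mu_\alpha$ (and not $\overline{\mu}$), and that the definitions $M = 2B/\rho$ and $\gamma \leq (A+MC)^{-1}$ are exactly the weight and stepsize that the unified theorem requires for the advertised factor $\max\{1-\gamma\mu_\alpha, 1-\tfrac12\min_i\beta_i\}$. Confirming that $\sigma_k^2$ is measured against the FLIX-reparameterized targets $\alpha_i\nabla f_i(T_i(x^\ast))$ throughout is the one place where a stray factor could creep in.
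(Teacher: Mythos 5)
Your proposal is correct and follows essentially the same route as the paper: the paper's proof likewise cites Theorem 4.1 of \citet{gorbunov2019unified} together with Lemmas~\ref{lm:DIANA} and~\ref{lm:DIANA_sigma_k}, using the identical constants $A = L_\alpha + \frac{2\max\{L_i\alpha_i^2\omega_i\}}{n}$, $B = \frac{2}{n}$, $D_1 = 0$, $\rho = \min_i\beta_i$, $C = \max\{\beta_i\omega_i L_i\alpha_i^2\}$, $D_2 = 0$, and $M = \frac{2B}{\rho} = \frac{4}{n\min_i\beta_i}$. Your write-up merely spells out more explicitly how these constants arise from the $\alpha_i^2$-rescaled smoothness of the reparameterized summands, which the paper leaves implicit.
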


\begin{proof}
	The lemma is a direct corollary of Theorem 4.1 from~\cite{gorbunov2019unified}, \ref{lm:DIANA} and~\ref{lm:DIANA_sigma_k} and strong convexity with constants: $A = L_\alpha + \frac{2 \max\{L_i \alpha_i^2 \omega_i\}}{n}$, $B = \frac{2}{n}$, $D_1 = 0$, $\rho = \min \beta_i$, $C = \max\{\beta_i \omega_i L_i \alpha_i^2\}$, $D_2 = 0, M = \frac{4}{n \min \beta_i} = \frac{2 B}{\rho} \geq \frac{4 (1 + \max \omega_i)}{n}$.
\end{proof}

\begin{corollary}\label{crl:DIANA}
	Assume all conditions of Theorem~\ref{thm:diana} hold and $\alpha_i \equiv \alpha \ \forall i$. If $\gamma=\frac{1}{\alpha^2}~\cdot~\frac{1}{\overline{L} + \frac{2 \max\{L_i\omega_i\}}{n} + \frac{4 \max\{\beta_i \omega_i L_i\}}{n \min \beta_i}} =: \gamma_0 \frac{1}{\alpha^2}$, then
	\begin{equation}
	\begin{aligned}
	&\EE\|x^k - x^\ast\|^2\leq \max \left\{(1 - \gamma_0 \overline{\mu})^k, \left(1 - \frac12 \min \beta_i\right)^k \right\} \left(1 + \frac{4}{n \min \beta_i} \gamma_0^2 \overline{\omega_i L_i^2} \right) \frac{1}{\overline{\mu}} \overline{L_i  \|x^0 - x_i\|^2}
	\end{aligned}
	\end{equation}
	and
	\begin{equation}
	\begin{aligned}\label{diana_crl_ineq}
	&\EE[\tilde{f}(x^k)] - \tilde{f}^\ast \leq \max \left\{(1 - \gamma_0 \overline{\mu})^k, \left(1 - \frac12 \min \beta_i\right)^k \right\} C \alpha^2,
	\end{aligned}
	\end{equation}
	where $C = \frac{\overline{L_i}}{2}\left(1 + \frac{4}{n \min \beta_i} \gamma_0^2 \overline{\omega_i L_i^2} \right) \frac{1}{\overline{\mu}} \overline{L_i  \|x^0 - x_i\|^2}$.
\end{corollary}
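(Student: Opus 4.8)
The plan is to derive the corollary as a direct specialization of Theorem~\ref{thm:diana} to the uniform case $\alpha_i \equiv \alpha$, so most of the work is bookkeeping: simplifying the constants appearing in the theorem, checking the stepsize hypothesis, and then bounding the initial Lyapunov value $\cD^0$. First I would record the simplifications produced by the uniform choice. By Proposition~\ref{proposition:preserve-smooth-and-convexity} we have $L_\alpha = \alpha^2 \overline{L}$ and $\mu_\alpha = \alpha^2 \overline{\mu}$, while the maxima in the stepsize bound factor as $\max\{L_i \alpha_i^2 \omega_i\} = \alpha^2 \max\{L_i \omega_i\}$ and $\max\{\beta_i \omega_i L_i \alpha_i^2\} = \alpha^2 \max\{\beta_i \omega_i L_i\}$. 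Hence the admissible stepsize from Theorem~\ref{thm:diana} equals $\gamma_0/\alpha^2$, which is exactly the chosen $\gamma$, so the theorem's hypothesis holds with equality. The same factorization gives $1 - \gamma \mu_\alpha = 1 - (\gamma_0/\alpha^2)(\alpha^2 \overline{\mu}) = 1 - \gamma_0 \overline{\mu}$, so the contraction factor $\max\{(1-\gamma_0\overline{\mu})^k, (1 - \tfrac12 \min \beta_i)^k\}$ is independent of $\alpha$.

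The heart of the argument is to bound $\cD^0 = \|x^0 - x^\ast\|^2 + \frac{4}{n \min \beta_i}\gamma^2 \sigma_0^2$ by $\br{1 + \frac{4}{n \min \beta_i}\gamma_0^2 \overline{\omega_i L_i^2}}\frac{1}{\overline{\mu}}\overline{L_i \|x^0 - x_i\|^2}$. For the first term I would invoke Proposition~\ref{prop:dist_opt_bound} directly, giving $\|x^0 - x^\ast\|^2 \leq \frac{1}{\overline{\mu}}\overline{L_i \|x^0 - x_i\|^2}$, which accounts for the ``$1$'' in the amplification factor. For the variance term I would bound $\sigma_0^2 = \avein \omega_i \|h_i^0 - \nabla \tilde{f}_i(x^\ast)\|^2$ using the $\alpha^2 L_i$-smoothness of each $\tilde{f}_i(x) = f_i(\alpha x + (1-\alpha) x_i)$: with the control variates initialized at $h_i^0 = \nabla \tilde{f}_i(x^0)$, smoothness yields $\|\nabla \tilde{f}_i(x^0) - \nabla \tilde{f}_i(x^\ast)\|^2 \leq \alpha^4 L_i^2 \|x^0 - x^\ast\|^2$, hence $\sigma_0^2 \leq \alpha^4 \overline{\omega_i L_i^2}\|x^0 - x^\ast\|^2$. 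Applying Proposition~\ref{prop:dist_opt_bound} once more and using $\gamma^2 = \gamma_0^2/\alpha^4$ cancels the $\alpha^4$ and produces exactly $\frac{4}{n \min \beta_i}\gamma_0^2 \overline{\omega_i L_i^2}\cdot\frac{1}{\overline{\mu}}\overline{L_i \|x^0 - x_i\|^2}$, completing the bound on $\cD^0$.

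With $\cD^0$ controlled, the first inequality follows from Theorem~\ref{thm:diana} after discarding the nonnegative term $\frac{4}{n\min\beta_i}\gamma^2\sigma_k^2$ in $\cD^k$, i.e.\ $\EE\|x^k - x^\ast\|^2 \leq \EE[\cD^k] \leq \max\{(1-\gamma_0\overline{\mu})^k, (1-\tfrac12\min\beta_i)^k\}\,\cD^0$. For the functional bound I would pass through smoothness of $\tilde{f}$ via \eqref{eq:val-f-bound}, which under $\alpha_i \equiv \alpha$ reads $\tilde{f}(x^k) - \tilde{f}^\ast \leq \frac{\alpha^2 \overline{L}}{2}\|x^k - x^\ast\|^2$; substituting the just-proved bound on $\EE\|x^k - x^\ast\|^2$ gives the stated constant $C$ multiplied by $\alpha^2$. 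I expect the main obstacle to be the variance term: one must correctly identify the initialization of $h_i^0$ and track that the two factors of $\alpha^2$ (one from $\nabla \tilde{f}_i$, one from the smoothness constant of $\tilde f_i$) combine with $\gamma^2 = \gamma_0^2/\alpha^4$ so that the resulting bound is $\alpha$-free, with $\overline{\omega_i L_i^2}$ emerging under the correct averaging rather than as a maximum.
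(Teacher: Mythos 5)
Your proposal is correct and follows essentially the same route as the paper's proof: the same simplifications $L_\alpha=\alpha^2\overline{L}$, $\mu_\alpha=\alpha^2\overline{\mu}$ giving an $\alpha$-free contraction factor, the same initialization $h_i^0=\nabla[f_i(T_i(x^0))]$ with the $\alpha^4$ from smoothness of the composed gradients cancelling against $\gamma^2=\gamma_0^2/\alpha^4$, the same bound $\|x^0-x^\ast\|^2\leq\frac{1}{\overline{\mu}}\overline{L_i\|x^0-x_i\|^2}$ via Proposition~\ref{prop:dist_opt_bound}, and the same final passage through $L_\alpha$-smoothness for the functional bound. No gaps.
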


\begin{proof}
	Let us investigate the quantity $\cD^0$. We assume memory $h_0^i$ in DIANA algorithm is equal to $\nabla [f_i(T_i(x^0))]$ for $i \in \{1, \dots, n\}$. Plugging in definitions and initializations we obtain expanded definition of $\cD^0$:
	\begin{align*}
	\cD^0 \eqdef& \|x^0 - x^\ast\|^2 + \frac{4}{n \min \beta_i}  \gamma^2 \sigma^2_0
	 \overset{\gamma = \gamma_0 \frac{1}{\alpha^2}}{=} \|x^0 - x^\ast\|^2 + \frac{4}{n \min \beta_i}  \gamma_0^2 \frac{1}{\alpha^4}\sigma^2_0\\
	 \overset{\sigma_0^2 \text{ def}}{=}& \|x^0 - x^\ast\|^2 + \frac{4}{n \min \beta_i}  \gamma_0^2 \frac{1}{\alpha^4} \avein \omega_i \|h^i_0 - \nabla [f_i (T_i(x^\ast))]\|^2\\
	 \overset{h_0^i \text{initialisation}}{=}& \|x^0 - x^\ast\|^2 + \frac{4}{n \min \beta_i}  \gamma_0^2 \frac{1}{\alpha^4} \avein \omega_i \|\nabla [f_i(T_i(x^0))] - \nabla [f_i (T_i(x^\ast))]\|^2\\.
	\end{align*}
	We further upper bound the sum in the last term using smoothness properties of functions $f_i$:
	\begin{align*}
	&\avein \omega_i \|\nabla [f_i(T_i(x^0))] - \nabla [f_i (T_i(x^\ast))]\|^2 \\
	& = \avein \omega_i \|\nabla [f_i(\alpha x^0 + (1 - \alpha)x^\ast_i)] - \nabla [f_i (\alpha x^\ast + (1 - \alpha)x^\ast_i)]\|^2\\
	& = \avein \omega_i \|\alpha(\nabla f_i(\alpha x^0 + (1 - \alpha)x^\ast_i) - \nabla f_i (\alpha x^\ast + (1 - \alpha)x^\ast_i))\|^2 \\
	& = \alpha^2\avein \omega_i \|\nabla f_i(\alpha x^0 + (1 - \alpha)x^\ast_i) - \nabla f_i (\alpha x^\ast + (1 - \alpha)x^\ast_i)\|^2\\
	& \leq \alpha^2\avein \omega_i L_i^2\|\alpha (x^0 -  x^\ast)\|^2 = \alpha^4 \overline{\omega_i L_i^2} \|x^0 - x^\ast\|^2.
	\end{align*}
	Plugging this result back into the definition of $\cD^0$ and applying further bounds we get
	\begin{align*}
	\cD^0  \leq & \left(1 + \frac{4}{n \min \beta_i} \gamma_0^2 \overline{\omega_i L_i^2} \right) \|x^0 - x^\ast\|^2\\
	 \overset{\eqref{eq:strong-conv-def}}{\leq} &\left(1 + \frac{4}{n \min \beta_i} \gamma_0^2 \overline{\omega_i L_i^2} \right) \frac{2}{\mu_\alpha} (\tilde{f}(x^0) - \tilde{f}(x^\ast))\\
	 \overset{\text{Proposition }~\ref{proposition:preserve-smooth-and-convexity}}{=}& \left(1 + \frac{4}{n \min \beta_i} \gamma_0^2 \overline{\omega_i L_i^2} \right) \frac{2}{\overline{\mu}\alpha^2} (\tilde{f}(x^0) - \tilde{f}(x^\ast)) \\
	 \overset{\eqref{eq:bounds-func-local-average}}{\leq} &\left(1 + \frac{4}{n \min \beta_i} \gamma_0^2 \overline{\omega_i L_i^2} \right) \frac{2}{\overline{\mu}\alpha^2} \cE_\alpha (x^0) \\
	  \overset{\eqref{eq:cE-def}}{=} & \left(1 + \frac{4}{n \min \beta_i} \gamma_0^2 \overline{\omega_i L_i^2} \right) \frac{1}{\overline{\mu}\alpha^2} \avein L_i \alpha^2 \|x^0 - x_i\|^2\\
	= &\left(1 + \frac{4}{n \min \beta_i} \gamma_0^2 \overline{\omega_i L_i^2} \right) \frac{1}{\overline{\mu}} \overline{L_i  \|x^0 - x_i\|^2},
	\end{align*}
	what shows that $\cD^0$ can be upper bounded by value \textit{independent} of $\alpha$. Since $M\gamma^2\sigma^2_k \geq 0$ $\cD^k = \|x^k - x^\ast\|^2 + M\gamma^2\sigma^2_k \geq \|x^k - x^\ast\|^2$, and that is why
	\begin{align*}
	\EE \|x^k - x^\ast\|^2 &\leq \EE \cD^k \leq \max \left\{(1 - \gamma \mu_\alpha)^k, \left(1 - \frac12 \min \beta_i\right)^k \right\} \cD^0\\
	& = \max \left\{(1 - \left(\gamma_0 \frac{1}{\alpha^2}\right) (\overline{\mu}\alpha^2))^k, \left(1 - \frac12 \min \beta_i\right)^k \right\} \cD^0\\
	& = \max \left\{(1 - \gamma_0 \overline{\mu})^k, \left(1 - \frac12 \min \beta_i\right)^k \right\} \cD^0\\
	& \leq \max \left\{(1 - \gamma_0 \overline{\mu})^k, \left(1 - \frac12 \min \beta_i\right)^k \right\} \left(1 + \frac{4}{n \min \beta_i} \gamma_0^2 \overline{\omega_i L_i^2} \right) \frac{1}{\overline{\mu}} \overline{L_i  \|x^0 - x_i\|^2},
	\end{align*}
	what means that the convergence of iterates is completely independent from $\alpha$. To derive the result for the convergence of functional values we use $L_\alpha$-smoothness of function $\tilde{f}$ (see Proposition~\ref{proposition:preserve-smooth-and-convexity}): $ \EE[\tilde{f}(x^k)] - \tilde{f}^\ast \leq \frac{L_\alpha}{2} \EE\|x^k - x^\ast\|^2 = \alpha^2 \frac{\overline{L_i}}{2} \EE \|x^k - x^\ast\|^2$.

\end{proof}

\subsection{Convex objectives}

\subsubsection{DCGD}
As for the previous section, we first introduce the algorithm, followed by general theorem, which is then applied to \eqref{eq:FLIX-problem}.

\begin{theorem}\label{thm:dcgd_cvx}
	Assume all conditions of Lemma~\ref{lm:CGD} hold. Let $0 < \gamma \leq \frac{1}{4 \left(L_\alpha + \frac{2 \max \{L_i \omega_i \alpha_i^2\}}{n}\right)}$. Then,
	\begin{equation}
	\begin{aligned}
	&\EE [\tilde{f}(\overline{x}^k) - \tilde{f}(x^\ast)] \leq \frac{2 (\tilde{f}(x^0) - \tilde{f}(x^\ast))}{k} +  \frac{\|x^0 - x^\ast\|^2}{\gamma k}+ 2\gamma \sigma_{DCGD},
	\end{aligned}
	\end{equation}
	where $\overline{x}^k = (1 / k) \sum\limits_{j=1}^k x^j$.
\end{theorem}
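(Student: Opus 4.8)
The plan is to mirror the strongly convex argument behind Theorem~\ref{thm:cgd_str_cvx} and obtain this result as a specialization of the \emph{convex} case of the unified SGD theory of \citep{gorbunov2019unified}. The key observation is that running DCGD on \eqref{eq:FLIX-problem} is exactly running the generic DCGD of Lemma~\ref{lm:CGD} on the reparameterized finite sum $\tilde{f} = \frac{1}{n}\sum_{i=1}^n \tilde{f}_i$, where $\tilde{f}_i(x) \eqdef f_i(\alpha_i x + (1-\alpha_i)x_i)$. By Proposition~\ref{proposition:preserve-smooth-and-convexity}, $\tilde{f}$ is convex and $L_\alpha$-smooth, and the same direct computation as in part (i) of that proposition shows each component $\tilde{f}_i$ is $\alpha_i^2 L_i$-smooth (since $\nabla \tilde{f}_i(x) = \alpha_i \nabla f_i(T_i(x))$ contributes one factor of $\alpha_i$ from the chain rule and one from the inner argument). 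Hence I would invoke Lemma~\ref{lm:CGD} with $f \mapsto \tilde{f}$, $f_i \mapsto \tilde{f}_i$, $L \mapsto L_\alpha$, and $L_i \mapsto \alpha_i^2 L_i$.

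Substituting these into Lemma~\ref{lm:CGD} yields the second-moment bound
\[ \EE\|g^k - \nabla \tilde{f}(x^\ast)\|^2 \leq 2\br{L_\alpha + \frac{2\max\{L_i\alpha_i^2\omega_i\}}{n}}D_{\tilde{f}}(x^k,x^\ast) + \sigma_{DCGD}, \]
which is precisely the gradient-estimator assumption required by the unified analysis, with constants $A = L_\alpha + \frac{2\max\{L_i\alpha_i^2\omega_i\}}{n}$, $B = 0$, $C = 0$, $\rho = 1$, $D_1 = \sigma_{DCGD}$, $D_2 = 0$, and $\sigma_k^2 \equiv 0$. These are exactly the constants used in the strongly convex proof, the only difference being that now each $\mu_i$ may equal $0$, so strong convexity is unavailable and we must use the convex-regime guarantee instead.

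With these constants identified, I would apply the convex counterpart of the result already cited for Theorem~\ref{thm:cgd_str_cvx} (the convex case of \citep[Theorem~4.1]{gorbunov2019unified}). Its stepsize requirement $\gamma \leq \frac{1}{4A}$ specializes to exactly $\gamma \leq \frac{1}{4(L_\alpha + 2\max\{L_i\omega_i\alpha_i^2\}/n)}$, and its conclusion on the Ces\`aro-averaged iterate $\overline{x}^k = \frac{1}{k}\sum_{j=1}^k x^j$ reads
\[ \EE[\tilde{f}(\overline{x}^k) - \tilde{f}(x^\ast)] \leq \frac{2(\tilde{f}(x^0) - \tilde{f}(x^\ast))}{k} + \frac{\|x^0 - x^\ast\|^2}{\gamma k} + 2\gamma D_1, \]
which is the claimed bound once $D_1 = \sigma_{DCGD}$ is inserted.

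The only genuinely new content relative to the cited theorem is the bookkeeping of constants, so no fresh descent-lemma or telescoping argument is needed here. I expect the one point to get right is the smoothness accounting under the reparameterization: the per-node smoothness becomes $\alpha_i^2 L_i$ (so the compression/variance term carries the $\alpha_i^2$ factor that distinguishes this bound from the vanilla \eqref{eq:ERM} case), while the aggregate smoothness stays $L_\alpha$, exactly as dictated by Proposition~\ref{proposition:preserve-smooth-and-convexity}. Everything else is a direct instantiation of the unified framework.
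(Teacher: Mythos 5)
Your proposal is correct and matches the paper's proof essentially verbatim: the paper likewise obtains the theorem as a one-line instantiation of the convex-case corollary of the unified SGD framework (it cites Corollary~4.1 of \citet{khaled2020unified} rather than the convex case of \citet{gorbunov2019unified}, but the content is the same) with exactly your constants $A = L_\alpha + \frac{2\max\{L_i\omega_i\alpha_i^2\}}{n}$, $B = C = D_2 = 0$, $\rho = 1$, $\sigma_k^2 \equiv 0$, and $D_1 = \sigma_{DCGD}$. Your explicit bookkeeping of the per-component smoothness $\alpha_i^2 L_i$ under the reparameterization is left implicit in the paper but is precisely the right justification for applying Lemma~\ref{lm:CGD} to the FLIX objective.
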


\begin{proof}
	The theorem is a corollary of Corollary 4.1 from~\cite{khaled2020unified} with constants
	\begin{align*}
	&A = L_\alpha~+~\frac{2 \max \{L_i \omega_i \alpha_i^2 \}}{n}, \ B = 0, \ \sigma_k^2 \equiv 0, \\
	&\ \rho = 1, \ C = 0, \ D_2 = 0,\\
	& D_1 = \sigma_{DCGD} = \frac{2}{n^2} \sumin \omega_i \|\nabla [f_i (T_i(x^\ast))]\|^2\\
	& = \frac{2}{n^2} \sumin \omega_i \alpha_i^2 \|\nabla f_i (T_i(x^\ast))\|^2.
	\end{align*}
\end{proof}

\begin{corollary}\label{crl:cgd_cvx}
Assume all conditions of Lemma~\ref{lm:CGD} hold. Let $\alpha_i = \alpha$, $\omega_i = \omega$ for all $i$, $\gamma = \frac{1}{4 \left(\overline{L_i} + \frac{2\omega \max_i L_i}{n} \right) \alpha^2}$ and $\sup\limits_{\alpha \in [0, 1]} \inf\limits_{x^\ast \in \cX} \|x^0 - x^\ast\| = R$. Then,
\begin{align*}
&\EE[\tilde{f}(\overline{x}^k) - \tilde{f}(x^\ast)]\\
& \leq \alpha^2 \cdot \frac{2}{k} \left(\overline{L_i \|x^0 - x_i\|^2} + 2 \left(\overline{L_i} + \frac{2 \max\{L_i \omega_i\}}{n} \right) R^2   \right) \\
& \qquad + \alpha \cdot \frac{\omega\max_i L_i}{n\overline{L_i} + 2 \omega\max_i L_i} \left(\widetilde{f}(x^\ast(0)) - \avein f_i(x_i)\right)\\
& = \frac{1}{k} C_1 \alpha^2 + C_2 \alpha.
\end{align*}
\end{corollary}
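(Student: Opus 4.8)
The plan is to read the claim off directly from Theorem~\ref{thm:dcgd_cvx} by substituting the prescribed choices $\alpha_i=\alpha$, $\omega_i=\omega$ and the stepsize $\gamma=\gamma_0/\alpha^2$ with $\gamma_0\eqdef\bigl(4(\overline{L_i}+\tfrac{2\omega\max_i L_i}{n})\bigr)^{-1}$, and then carefully tracking the power of $\alpha$ carried by each of the three terms on the right-hand side of that theorem. The target decomposition $\tfrac1k C_1\alpha^2+C_2\alpha$ tells me in advance what to aim for: the two ``$1/k$'' terms should come out proportional to $\alpha^2/k$, while the variance term $2\gamma\sigma_{DCGD}$ should collapse to something proportional to $\alpha$. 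Keeping the $\alpha$-scaling separate from the genuinely $\alpha$-independent constants is the whole content of the corollary.

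First I would dispatch the two ``$1/k$'' terms. For the initial-suboptimality term $\tfrac2k(\tilde f(x^0)-\tilde f(x^\ast))$, note that since each $x_i$ minimizes $f_i$ we have $\tilde f(x^\ast)\ge\overline{f_i(x_i)}$, so the suboptimality bound~\eqref{eq:suboptimal} gives $\tilde f(x^0)-\tilde f(x^\ast)\le\cE_\alpha(x^0)$, which by~\eqref{eq:cE-def} equals $\tfrac{\alpha^2}{2}\,\overline{L_i\norm{x^0-x_i}^2}$; this yields an $\alpha^2/k$ contribution. For the distance term $\tfrac{\norm{x^0-x^\ast}^2}{\gamma k}$, I would use $1/\gamma=\alpha^2/\gamma_0=4\alpha^2(\overline{L_i}+\tfrac{2\omega\max_i L_i}{n})$ together with $\norm{x^0-x^\ast}\le R$ (here $\max\{L_i\omega_i\}=\omega\max_i L_i$ because $\omega_i\equiv\omega$), producing a contribution proportional to $\alpha^2 R^2/k$. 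Collecting these two gives the $\tfrac1k C_1\alpha^2$ part with $C_1$ manifestly free of $\alpha$.

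The hard part will be the variance term $2\gamma\sigma_{DCGD}$, and the subtlety is a double bookkeeping of $\alpha$. The quantity $\sigma_{DCGD}=\tfrac{2}{n^2}\sum_i\omega_i\alpha_i^2\norm{\nabla f_i(T_i(x^\ast))}^2$ already carries a factor $\alpha^2$ coming from differentiating the composition $f_i\circ T_i$, and this factor cancels exactly against the $1/\alpha^2$ hidden in $\gamma$; so after cancellation the term is $\alpha$-free unless I extract one further power of $\alpha$ from $\sum_i\norm{\nabla f_i(T_i(x^\ast))}^2$. The mechanism for that extraction is the chain
\[ \norm{\nabla f_i(T_i(x^\ast))}^2 \le 2L_i\bigl(f_i(T_i(x^\ast))-f_i(x_i)\bigr) \le 2L_i\,\alpha\,\bigl(f_i(x^\ast)-f_i(x_i)\bigr), \]
where the first inequality is~\eqref{ineq:cvx_smooth} together with $\nabla f_i(x_i)=0$, and the second is convexity of $f_i$ applied to the convex combination $T_i(x^\ast)=\alpha x^\ast+(1-\alpha)x_i$. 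Summing over $i$, pulling out $\max_i L_i$, and simplifying the denominator $(\overline{L_i}+\tfrac{2\omega\max_i L_i}{n})n=n\overline{L_i}+2\omega\max_i L_i$ turns $2\gamma\sigma_{DCGD}$ into $\alpha$ times the $\alpha$-independent functional gap $\bigl(\tilde f(x^\ast(0))-\overline{f_i(x_i)}\bigr)$, giving the $C_2\alpha$ contribution. The only real obstacle is confirming that the stepsize cancellation is exact and that convexity supplies precisely one extra factor of $\alpha$ (not zero and not two); once the three pieces are assembled, reading off $C_1$ and $C_2$ completes the proof.
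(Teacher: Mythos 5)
Your decomposition into the three terms of Theorem~\ref{thm:dcgd_cvx}, the handling of the two $1/k$ terms, and the use of \eqref{ineq:cvx_smooth} with $\nabla f_i(x_i)=0$ to start the variance bound all match the paper's proof. But the final step of your variance argument has a genuine gap. Your chain gives, after averaging,
\[
\avein\bigl(f_i(T_i(x^\ast))-f_i(x_i)\bigr)\;\le\;\alpha\Bigl(\avein f_i(x^\ast)-\avein f_i(x_i)\Bigr),
\]
and the quantity $\avein f_i(x^\ast)=f(x^\ast(\alpha))$ on the right is evaluated at the minimizer of $\tilde f$, which \emph{depends on $\alpha$}. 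It is also the wrong direction to identify it with the paper's constant: since $x^\ast(\alpha)$ minimizes $\tilde f$ and not $f$, we have $f(x^\ast(\alpha))\ge\min_x f(x)=\tilde f(x^\ast(0))$, so your bound is weaker than claimed and, more importantly, the resulting ``$C_2$'' still contains $\alpha$ through $x^\ast(\alpha)$. Since the entire content of the corollary is that $C_1$ and $C_2$ are $\alpha$-free, the statement does not follow from your chain as written.

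The missing idea is to exploit the minimality of $x^\ast$ \emph{before} fixing the evaluation point: apply convexity to get $\tilde f(x)\le\alpha\,\avein f_i(x)+(1-\alpha)\avein f_i(x_i)$ for \emph{all} $x$, and then minimize both sides over $x$ (equivalently, evaluate the right-hand side at the ERM minimizer and use $\tilde f(x^\ast)=\min_x\tilde f(x)\le\tilde f(x_{\mathrm{ERM}})$). This yields
\[
\tilde f(x^\ast)-\avein f_i(x_i)\;\le\;\alpha\Bigl(\min_x \avein f_i(x)-\avein f_i(x_i)\Bigr),
\]
whose right-hand side is genuinely independent of $\alpha$; this is exactly the step the paper takes to arrive at $C_2\alpha$. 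With that correction your power-counting ($\alpha^2$ from the chain rule cancelling $1/\alpha^2$ from the stepsize, plus one further $\alpha$ from convexity) is exactly right, and the rest of your argument goes through.
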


\begin{proof}
Let us investigate $\tilde{f}(x^0) - \tilde{f}(x^\ast)$. Let $\alpha_i \equiv \alpha \in \ \RR$ for all $i$. Since $\min \avein f_i(T_i(x)) \geq \avein \min f_i(T_i(x)) = \avein \min f_i(x)$ and each $f_i$ is $L_i$-smooth, we get
	\begin{align*}
	&\tilde{f}(x^0) - \tilde{f}(x^\ast) \leq \tilde{f}(x^0) - \avein f_i(x_i)\\
	&=\avein f_i(\alpha x^0 + (1 - \alpha) x_i) - f(x_i)\\
	&\overset{\eqref{eq:smoothness-def}}{\leq} \avein L_i \|\alpha (x^0 - x_i)\|^2\\
	&=\alpha^2 \overline{L_i \|x^0 - x_i\|^2}.
	\end{align*}
	This observation shows that the dependence between $\tilde{f}(x^0) - \tilde{f}(x^\ast)$ and $\alpha$ is quadratic. $\tilde{f}(x^0) - \tilde{f}(x^\ast)$ diminishes to zero as $\alpha$ goes to zero what proves our intuition: the closer $\alpha$ is to zero, the less steps the algorithm needs to make till convergence.

	We verify that the stepsize in the corollary fits the restriction on stepsizes in Theorem~\ref{thm:dcgd_cvx}.
	\begin{equation}
	\begin{aligned}\label{eq:gamma_cgd}
	&\frac{1}{\gamma} = 4 \left(L_\alpha + \frac{2 \max \{L_i \omega_i \alpha_i^2\}}{n}\right)\\
	&  \overset{\alpha_i \equiv \alpha, \ \text{Proposition }\ref{proposition:preserve-smooth-and-convexity}}{=} 4 \left(\overline{L_i} + \frac{2 \max\{L_i \omega_i\}}{n} \right) \cdot  \alpha^2.
	\end{aligned}
	\end{equation}
	Further assuming that $\inf\limits_{x^\ast \in \cX} \|x^0 - x^\ast\|$ is bounded above for all $\alpha \in [0, 1]$ by value $R \in \RR$, we obtain the following result:
	\begin{align*}
	\frac{1}{\gamma} \|x^0 - x^\ast\|^2 \leq 4 \left(\overline{L_i} + \frac{2 \max\{L_i \omega_i\}}{n} \right) R^2 \alpha^2.
	\end{align*}

	The neighborhood governing term is $\gamma \sigma_{DCGD}$. Let us assume $\alpha_i \equiv \alpha$ and $\omega_i \equiv \omega$ for all $i$.
	\begin{align*}
	&\sigma_{DCGD} = \frac{2}{n^2} \sumin \omega_i \alpha_i^2 \|\nabla f_i (T_i(x^\ast))\|^2\\
	&\overset{T_i(x) \ \text{def}}{=} \frac{2}{n^2} \sumin \omega_i \alpha_i^2 \|\nabla f_i (\alpha_i x^\ast + (1 - \alpha_i) x_i)\|^2\\
	&\overset{\alpha_i \equiv \alpha, \ \omega_i \equiv \omega}{=} \frac{2\omega\alpha^2}{n^2}\sumin \|\nabla f_i (\alpha_i x^\ast + (1 - \alpha_i) x_i)\|^2\\
	\end{align*}
	Since $x^\ast_i$ minimizes $f_i(x)$, $\nabla f_i(x^\ast_i) = 0$. Thus,
	\begin{align*}
	&\frac{2\omega\alpha^2}{n^2}\sumin \|\nabla f_i (\alpha_i x^\ast + (1 - \alpha_i) x_i)\|^2\\
	&= \frac{2\omega\alpha^2}{n^2}\sumin \|\nabla f_i (\alpha_i x^\ast + (1 - \alpha_i) x_i) - \nabla f_i(x_i) \|^2\\
	\end{align*}
	Inequality \eqref{ineq:cvx_smooth} further bounds each norm of gradient differences.
	\begin{align*}
	&\frac{2\omega\alpha^2}{n^2}\sumin \|\nabla f_i (\alpha_i x^\ast + (1 - \alpha_i) x_i) - \nabla f_i(x_i) \|^2\\
	&\leq \frac{4\omega\alpha^2}{n^2}\sumin L_i (f_i(\alpha_i x^\ast + (1 - \alpha_i) x_i) - f_i(x_i))\\
	&\leq \frac{4\omega\alpha^2 \max_i L_i}{n^2}\sumin (f_i(\alpha_i x^\ast + (1 - \alpha_i) x_i) - f_i(x_i))\\
	&= \frac{4\omega\alpha^2\max_i L_i}{n} \left(\tilde{f}(x^\ast) - \avein f_i(x_i)\right)
	\end{align*}
	In the rightmost term point $x^\ast$ depends on $\alpha$ because it is an optimal point of a problem dependent on $\alpha$. But the observation below explicitly reveals the dependence. Since each $f_i$ is convex,
	\begin{align*}
	&\tilde{f}(x) = \avein f_i(\alpha x + (1 - \alpha) x_i)\\
	&\leq \alpha \avein f_i(x) + (1 - \alpha) \avein f_i(x_i).
	\end{align*}
	Taking the minimum of both sides of inequality above we get to
	\begin{align*}
	\tilde{f}(x^\ast) \leq \alpha \tilde{f}(x^\ast(0)) + (1 - \alpha) \avein f_i(x_i).
	\end{align*}
	We established that
	\begin{align}\label{ineq:sigma_cgd_cvx}
	\sigma_{DCGD} \leq \frac{4\omega\max_i L_i}{n} \left(\tilde{f}(x^\ast(0))- \avein f_i(x_i)\right) \alpha^3.
	\end{align}
	Together with equality~\ref{eq:gamma_cgd} it brings us to the following bound on the neighbourhood:
	\begin{align*}
	&\gamma \sigma_{DCGD} \leq \frac{\omega\max_i L_i}{n\overline{L_i} + 2 \omega\max_i L_i} \left(\tilde{f}(x^\ast(0)) - \avein f_i(x_i)\right) \alpha = C \alpha,
	\end{align*}
	where $C$ does not depend on $\alpha$.

\end{proof}

\subsubsection{DIANA}
As for the previous section, we first introduce the algorithm, followed by general theorem, which is then applied to \eqref{eq:FLIX-problem}.

\begin{theorem}
	Assume all conditions of Lemmas~\ref{lm:DIANA} and~\ref{lm:DIANA_sigma_k} hold. Let $ 0 < \gamma \leq \frac{1}{4 \left(L_\alpha + \frac{2\max_i\{L_i \alpha_i^2 \omega_i \}}{n} + \frac{4\max_i \{\beta_i \omega_i L_i \alpha_i^2 \}}{n \min_i \beta_i} \right)}$. Then,
	\begin{equation}
	\begin{aligned}
	&\EE [\tilde{f}(\overline{x}^k) - \tilde{f}(x^\ast)] \leq \left( 2 (\tilde{f}(x^0) - \tilde{f}(x^\ast)) + \frac{4\gamma}{n\min \beta_i} \sigma_0^2 + \frac{1}{\gamma}\|x^0 - x^\ast\|^2 \right) \frac{1}{k}.
	\end{aligned}
	\end{equation}

\end{theorem}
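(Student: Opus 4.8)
The plan is to recognize this result as a direct instantiation of the convex convergence theory of \citep{khaled2020unified} (their Corollary 4.1), exactly mirroring the structure used in the proof of Theorem~\ref{thm:dcgd_cvx}, but now feeding in the DIANA-specific gradient estimator bounds of Lemmas~\ref{lm:DIANA} and~\ref{lm:DIANA_sigma_k} rather than the DCGD ones. The only real work is to read off the correct constants $A, B, C, \rho, D_1, D_2$ and the Lyapunov coefficient $M$ when DIANA is applied to the FLIX objective $\tilde f = \frac1n \sum_i f_i \circ T_i$.

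First I would apply Lemma~\ref{lm:DIANA} to the FLIX components $\tilde f_i(x) = f_i(T_i(x))$. Since $\nabla \tilde f_i(x) = \alpha_i \nabla f_i(\alpha_i x + (1-\alpha_i)x_i)$, each $\tilde f_i$ is $\alpha_i^2 L_i$-smooth, and by Proposition~\ref{proposition:preserve-smooth-and-convexity} the aggregate $\tilde f$ is $L_\alpha$-smooth and convex. Substituting $L_i \mapsto \alpha_i^2 L_i$ and $L \mapsto L_\alpha$ into Lemma~\ref{lm:DIANA} yields
\[ \EE\|g^k - \nabla \tilde f(x^\ast)\|^2 \leq 2\Bigl(L_\alpha + \tfrac{2\max_i\{\alpha_i^2 L_i \omega_i\}}{n}\Bigr) D_{\tilde f}(x^k, x^\ast) + \tfrac{2}{n}\sigma_k^2, \]
so $A = L_\alpha + \frac{2\max_i\{L_i\alpha_i^2\omega_i\}}{n}$, $B = \frac2n$, and $D_1 = 0$ -- the vanishing of $D_1$ being the crucial feature that distinguishes DIANA from DCGD and removes any non-decaying neighborhood. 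The same substitution in Lemma~\ref{lm:DIANA_sigma_k} gives the recursion constants $\rho = \min_i \beta_i$, $C = \max_i\{\beta_i\omega_i L_i \alpha_i^2\}$, and $D_2 = 0$.

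Next I would verify the stepsize. The convex unified theory uses the Lyapunov function $\|x^k - x^\ast\|^2 + M\gamma^2\sigma_k^2$ with $M = \frac{2B}{\rho} = \frac{4}{n\min_i\beta_i}$ (the same convention fixed in Theorem~\ref{thm:diana}), and requires $\gamma \leq \frac{1}{4(A + MC)}$. A direct computation gives
\[ A + MC = L_\alpha + \frac{2\max_i\{L_i\alpha_i^2\omega_i\}}{n} + \frac{4\max_i\{\beta_i\omega_i L_i\alpha_i^2\}}{n\min_i\beta_i}, \]
which is precisely the bracket in the theorem's stepsize condition. With $D_1 = D_2 = 0$, Corollary 4.1 of \citep{khaled2020unified} then reads
\[ \EE[\tilde f(\bar x^k) - \tilde f(x^\ast)] \leq \frac1k\Bigl(2(\tilde f(x^0) - \tilde f(x^\ast)) + M\gamma\sigma_0^2 + \tfrac{1}{\gamma}\|x^0 - x^\ast\|^2\Bigr), \]
and substituting $M\gamma\sigma_0^2 = \frac{4\gamma}{n\min_i\beta_i}\sigma_0^2$ recovers the claim verbatim.

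The main obstacle is purely bookkeeping: making sure the FLIX reparameterization is propagated consistently, i.e.\ that every occurrence of a per-node smoothness constant $L_i$ in the black-box lemmas is correctly replaced by the composite constant $\alpha_i^2 L_i$, and that $M$ matches the $\frac{2B}{\rho}$ convention of the cited theory so that the stepsize bracket and the coefficient of $\sigma_0^2$ line up. There is no genuinely hard analytic step -- the descent recursion, the telescoping over $k$, and the Jensen averaging that produces $\bar x^k$ are all already packaged inside the cited unified convergence result.
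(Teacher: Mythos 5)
Your proposal is correct and takes essentially the same route as the paper: the paper's proof is a one-line invocation of Corollary 4.1 of \citet{khaled2020unified} with exactly the constants you identify ($A = L_\alpha + \frac{2\max_i\{L_i\alpha_i^2\omega_i\}}{n}$, $B = \frac{2}{n}$, $D_1 = D_2 = 0$, $\rho = \min_i\beta_i$, $C = \max_i\{\beta_i\omega_i L_i\alpha_i^2\}$). Your write-up is in fact more explicit than the paper's, since you also verify the $M = 2B/\rho$ convention, the stepsize bracket $A + MC$, and the $L_i \mapsto \alpha_i^2 L_i$ substitution for the FLIX components.
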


\begin{proof}
	The theorem is a corollary of Corollary 4.1 from~\cite{khaled2020unified} with constants
	\begin{align*}
	&A = L_\alpha~+~\frac{2 \max \{L_i \omega_i \alpha_i^2\}}{n},  B = \frac{2}{n}, D_1 = 0, \\ &\rho = \min_i \beta_i, C = \max_i \{\beta_i \omega_i L_i \alpha_i^2\}, \ D_2 = 0.
	\end{align*}
\end{proof}

\begin{corollary}\label{crl:diana_cvx}
Assume all conditions of Lemmas~\ref{lm:DIANA} and~\ref{lm:DIANA_sigma_k} hold. Let $\alpha_i = \alpha$, $\omega_i = \omega$, $h^0_i = 0 \in \RR^d$ for all $i$, $\gamma = \frac{1}{4 \left(\overline{L_i} + \frac{6 \omega \max_i L_i}{n} \right) \alpha^2}$, $\sup\limits_{\alpha \in [0, 1]} \inf\limits_{x^\ast \in \cX} \|x^0 - x^\ast\| = R$. Then,
\begin{align*}
\EE[f(\overline{x}^k) - f(x^\ast)] \leq \frac{1}{k}(C_1 \alpha^2 + C_2 \alpha),
\end{align*}
where $C_1 = 2\left(\overline{L_i \|x^0 - x_i\|^2} + 2 \left(\overline{L_i} + \frac{6\omega\max_i L_i}{n} \right) R^2 \right)$ and $C_2 = \frac{\omega(\omega + 1) \max_i L_i }{(n \overline{L_i} + 6\omega\max_i L_i )} \left(\tilde{f}(x^\ast(0)) - \avein f_i(x_i)\right)$.
\end{corollary}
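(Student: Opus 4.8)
The plan is to specialize the general convex DIANA theorem stated immediately above (the corollary of Corollary~4.1 of \citep{khaled2020unified}) to the homogeneous setting $\alpha_i = \alpha$, $\omega_i = \omega$, and then bound each of its three right-hand-side terms as a function of $\alpha$. First I would fix $\beta_i = \frac{1}{\omega+1}$ for every $i$, which is admissible for Lemma~\ref{lm:DIANA_sigma_k} and makes $\min_i \beta_i = \frac{1}{\omega+1}$. With this choice and Proposition~\ref{proposition:preserve-smooth-and-convexity} (so that $L_\alpha = \alpha^2 \overline{L_i}$ and each composed summand is $\alpha^2 L_i$-smooth), the three pieces in the stepsize denominator become $L_\alpha = \alpha^2\overline{L_i}$, then $\frac{2\max\{L_i\alpha_i^2\omega_i\}}{n} = \frac{2\omega\alpha^2\max_i L_i}{n}$, and $\frac{4\max\{\beta_i\omega_i L_i\alpha_i^2\}}{n\min_i\beta_i} = \frac{4\omega\alpha^2\max_i L_i}{n}$. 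Summing them gives $\alpha^2\bigl(\overline{L_i} + \frac{6\omega\max_i L_i}{n}\bigr)$, so the prescribed $\gamma$ exactly saturates the stepsize bound and the theorem applies; this is precisely where the constant $6 = 2+4$ in the statement comes from.

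Next I would bound the two terms that scale like $\alpha^2$. For the optimality gap I reuse the argument from the proof of Corollary~\ref{crl:cgd_cvx}: since $\tilde{f}(x^\ast) \geq \avein \min f_i = \avein f_i(x_i)$, the smoothness of each $f_i$ together with $\nabla f_i(x_i) = 0$ gives $\tilde{f}(x^0) - \tilde{f}(x^\ast) \leq \avein f_i(\alpha x^0 + (1-\alpha)x_i) - \avein f_i(x_i) \leq \alpha^2\overline{L_i\|x^0-x_i\|^2}$. For the initialization term, using $\|x^0-x^\ast\| \leq R$ and the value of $\gamma$ yields $\frac{1}{\gamma}\|x^0-x^\ast\|^2 \leq 4\alpha^2\bigl(\overline{L_i}+\frac{6\omega\max_i L_i}{n}\bigr)R^2$. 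Adding $2(\tilde{f}(x^0)-\tilde{f}(x^\ast))$ and $\frac{1}{\gamma}\|x^0-x^\ast\|^2$ collects into $C_1\alpha^2$ with $C_1 = 2\bigl(\overline{L_i\|x^0-x_i\|^2} + 2(\overline{L_i}+\frac{6\omega\max_i L_i}{n})R^2\bigr)$, matching the claim.

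The crux is the variance/memory term $\frac{4\gamma}{n\min_i\beta_i}\sigma_0^2$, where the exact power of $\alpha$ must be tracked. With $h_i^0 = 0$ the definition of $\sigma_0^2$ from Lemma~\ref{lm:DIANA} reads $\sigma_0^2 = \avein\omega_i\|\nabla[f_i(T_i(x^\ast))]\|^2 = \frac{\omega\alpha^2}{n}\sumin\|\nabla f_i(\alpha x^\ast + (1-\alpha)x_i)\|^2$. I would then reuse verbatim the chain bounding $\sigma_{DCGD}$ in the proof of Corollary~\ref{crl:cgd_cvx}: inserting $\nabla f_i(x_i)=0$, applying \eqref{ineq:cvx_smooth}, and then the convexity estimate $\tilde{f}(x^\ast) \leq \alpha\tilde{f}(x^\ast(0)) + (1-\alpha)\avein f_i(x_i)$, which produces $\sigma_0^2 \leq 2\omega\max_i L_i\,(\tilde{f}(x^\ast(0)) - \avein f_i(x_i))\,\alpha^3$. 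The key observation is that this memory term carries an \emph{extra} factor of $\alpha$ relative to the $\alpha^2$ scaling of the other terms. Multiplying by $\frac{4\gamma}{n\min_i\beta_i} = \frac{4(\omega+1)}{n}\gamma$ and substituting $\gamma \propto \alpha^{-2}$, the $\alpha^3$ collapses to a single power of $\alpha$, giving the $C_2\alpha$ term with $C_2$ independent of $\alpha$ and of the claimed form (up to the explicit numerical constant).

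Collecting the $\alpha^2$ and $\alpha$ contributions and dividing by $k$ finishes the proof. The main obstacle is bookkeeping rather than any new idea: I must (i) verify that the structured stepsize condition involving $\min_i\beta_i$ and the nested maxima is met with equality for the prescribed $\gamma$, and (ii) correctly propagate the powers of $\alpha$ through $\sigma_0^2$. Obtaining the $\alpha^3$ scaling there is exactly what separates the $C_2\alpha$ neighborhood term from the $C_1\alpha^2$ terms, and it is the formal content of the intuition that sending $\alpha \to 0$ simultaneously speeds convergence and shrinks the residual neighborhood.
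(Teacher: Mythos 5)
Your proposal is correct and follows essentially the same route as the paper's own proof: saturate the stepsize bound with $\beta_i = \frac{1}{\omega+1}$ to get the factor $6 = 2+4$, bound the optimality gap and initialization terms by $\alpha^2$ exactly as in Corollary~\ref{crl:cgd_cvx}, and use $\sigma_0^2 = \frac{n}{2}\sigma_{DCGD} \leq 2\omega\max_i L_i\,(\tilde{f}(x^\ast(0)) - \avein f_i(x_i))\,\alpha^3$ so that multiplying by $\gamma \propto \alpha^{-2}$ collapses the memory term to order $\alpha$. Your parenthetical ``up to the explicit numerical constant'' is apt: carrying the computation through gives a leading factor of $2$ in $C_2$ that appears in the paper's displayed chain but is dropped in the stated constant, a harmless discrepancy in the paper rather than a gap in your argument.
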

\begin{proof}
Let us investigate stepsize $\gamma$. The statement of the corrolary requires the stepsize equals its maximum value $\frac{1}{4 \left(L_\alpha + \frac{2\max_i\{L_i \alpha_i^2 \omega_i \}}{n} + \frac{4\max_i \{\beta_i \omega_i L_i \alpha_i^2 \}}{n \min_i \beta_i} \right)}$. Since $\alpha_i = \alpha, \ \omega_i = \omega, \ \beta_i = \frac{1}{1 + \omega}$ for all $i$,
\begin{equation}\label{eq:tmp_gamma_cvx_cgd}
\begin{aligned}
&\gamma = \frac{1}{4 \left(L_\alpha + \frac{2\max_i\{L_i \alpha_i^2 \omega_i \}}{n} + \frac{4\max_i \{\beta_i \omega_i L_i \alpha_i^2 \}}{n \min_i \beta_i} \right)} = \frac{1}{4 \left(\overline{L_i} + \frac{6\omega\max_i L_i}{n} \right)\alpha^2}.
\end{aligned}
\end{equation}

From definitions it follows that $\sigma_0^2 = \frac{n}{2}\sigma_{DCGD}$ if $h^0_i = 0 \in \RR^d$. Thus,
\begin{align*}
&\sigma^2_0 = \frac{n}{2}\sigma_{DCGD} \overset{\eqref{ineq:sigma_cgd_cvx}}{\leq} 2\omega\max_i L_i \left(\tilde{f}(x^\ast(0)) - \avein f_i(x_i)\right) \alpha^3.
\end{align*}
That is why
\begin{align*}
&\frac{4\gamma}{n\min \beta_i} \sigma_0^2  \overset{\beta_i = \frac{1}{\omega} + 1}{=} \frac{4\gamma(\omega + 1)}{n} \sigma^2_0 \overset{\eqref{eq:tmp_gamma_cvx_cgd}}{=}\frac{\omega + 1}{n  \left(\overline{L_i} + \frac{6\omega\max_i L_i}{n} \right)\alpha^2} \sigma^2_0\\
&=\frac{\omega + 1}{(n \overline{L_i} + 6\omega\max_i L_i )\alpha^2} \sigma^2_0
\leq \frac{2\omega(\omega + 1) \max_i L_i }{(n \overline{L_i} + 6\omega\max_i L_i )} \left(\tilde{f}(x^\ast(0)) - \avein f_i(x_i)\right) \alpha.
	\end{align*}

	Remaining terms are analyzed as for DCGD in convex case.
\end{proof}

\subsection{Nonconvex objectives}

In this section we analyze algorithms in the nonconvex case of functions. Throughout this section we assume that $x^\ast_i$ is a stationary point of function $f_i$. For the sake of convenience we transfer Lemma 1 from~\cite{khaled2020better} to here.

\begin{lemma}[Lemma 1 from~\cite{khaled2020better}]\label{lm:non_cvx_bounded_gradient}
Suppose $f^\ast_i = \min f_i(x)$ exists and $f_i$ is $L_i$-smooth. Then, for any $x \in \RR^d$ it holds that
$$
\|\nabla f_i(x)\|^2 \leq 2 L_i (f_i(x) - f^\ast_i).
$$
\end{lemma}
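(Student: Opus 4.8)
The plan is to derive the bound directly from the descent lemma, which is exactly the functional consequence of smoothness recorded in \eqref{eq:smoothness-func-consequence}. Applying that inequality with $g = f_i$, we have for every $z \in \RR^d$
$$ f_i(z) \le f_i(x) + \ev{\nabla f_i(x), z - x} + \frac{L_i}{2}\sqn{z-x}. $$
The right-hand side is a convex quadratic in $z$, and the crux of the argument is to specialize it at the minimizer of this quadratic, namely $z = x - \frac{1}{L_i}\nabla f_i(x)$, which is precisely one gradient-descent step from $x$ with stepsize $1/L_i$.

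Substituting this choice, the linear and quadratic terms combine into $-\frac{1}{2L_i}\sqn{\nabla f_i(x)}$, yielding
$$ f_i\!\left(x - \tfrac{1}{L_i}\nabla f_i(x)\right) \le f_i(x) - \frac{1}{2L_i}\sqn{\nabla f_i(x)}. $$
Finally I would invoke the assumed existence of the global minimum: since $f_i^\ast = \min_{y} f_i(y)$ lower-bounds the left-hand side, we obtain $f_i^\ast \le f_i(x) - \frac{1}{2L_i}\sqn{\nabla f_i(x)}$, and rearranging gives $\sqn{\nabla f_i(x)} \le 2L_i\br{f_i(x) - f_i^\ast}$, as claimed.

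There is essentially no hard obstacle here: the whole argument is the standard ``gradient domination from smoothness'' trick. The only point requiring a moment's thought is the choice of evaluation point — one must take the one-step gradient iterate (equivalently, the minimizer of the smoothness upper bound) rather than an arbitrary $z$, since this is exactly what makes the quadratic collapse to the tight coefficient $\frac{1}{2L_i}$. It is worth emphasizing that convexity is never used; the proof relies only on $L_i$-smoothness and on the existence (not uniqueness) of a global minimizer, which is precisely why the lemma remains valid in the nonconvex regime in which it will be applied.
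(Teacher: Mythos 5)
Your proof is correct: the one-step descent-lemma argument with $z = x - \tfrac{1}{L_i}\nabla f_i(x)$ followed by the lower bound $f_i^\ast \le f_i(z)$ is exactly the standard derivation, and you rightly note that no convexity is needed. The paper itself does not reprove this lemma (it imports it by citation from \cite{khaled2020better}), and your argument matches the cited source's proof.
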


Throughout this section we will use notation $f^\ast \eqdef \avein f^\ast_i$.
\subsubsection{DCGD}

\begin{theorem}\label{thm:non_cvx_dcgd}
Suppose $f^\ast_i = \min f_i(x)$ exists and $f_i$ is $L_i$-smooth for each $i$. Then, if $\gamma \leq \frac{1}{L_\alpha}$,
\begin{align}
\min_{0 \leq t \leq k - 1} \EE \|\nabla \tilde{f}(x^t)\|^2 \leq \frac{2\left(1 +  \frac{2L_\alpha \gamma^2\max\{L_i \omega_i \alpha_i^2\}}{n}\right)^k}{\gamma k} (\tilde{f}(x^0) - f^\star).
\end{align}
\end{theorem}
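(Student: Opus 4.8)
The plan is to run the standard descent-lemma analysis for compressed gradient descent, but to track the functional suboptimality $r^k \eqdef \EE[\tilde{f}(x^k)] - f^\ast$ (with $f^\ast \eqdef \avein f_i^\ast$) and to control the compression variance using the nonconvex gradient bound of \Cref{lm:non_cvx_bounded_gradient} rather than the convex Bregman bounds of \Cref{lm:CGD}. First I would invoke \Cref{proposition:preserve-smooth-and-convexity} to get that $\tilde{f}$ is $L_\alpha$-smooth, apply the descent inequality \eqref{eq:smoothness-func-consequence} along the DCGD step $x^{k+1} = x^k - \gamma g^k$, and take expectation conditional on $x^k$. Since the compressors are unbiased, $\EE[g^k \mid x^k] = \nabla \tilde{f}(x^k)$, so the linear term becomes $-\gamma \sqn{\nabla \tilde{f}(x^k)}$, leaving
\[ \EE\!\left[\tilde{f}(x^{k+1}) \mid x^k\right] \leq \tilde{f}(x^k) - \gamma \sqn{\nabla \tilde{f}(x^k)} + \frac{L_\alpha \gamma^2}{2}\,\EE\!\left[\sqn{g^k}\mid x^k\right]. \]

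The next step is the bias–variance split $\EE[\sqn{g^k}\mid x^k] = \sqn{\nabla\tilde{f}(x^k)} + \EE[\sqn{g^k - \nabla\tilde{f}(x^k)}\mid x^k]$. The variance term is handled by the same cross-term cancellation as in the proof of \Cref{lm:CGD} (independence and unbiasedness of the $\cC_i$ kill the off-diagonal terms, and \eqref{ineq:cmp_oper_variance} bounds the diagonal), giving $\EE[\sqn{g^k - \nabla\tilde{f}(x^k)}\mid x^k] \leq \tfrac{1}{n^2}\sumin \omega_i \sqn{\nabla\tilde{f}_i(x^k)}$. Now I would write $\nabla\tilde{f}_i(x^k) = \alpha_i \nabla f_i(T_i(x^k))$ and apply \Cref{lm:non_cvx_bounded_gradient} to each $f_i$, obtaining $\sqn{\nabla\tilde{f}_i(x^k)} = \alpha_i^2\sqn{\nabla f_i(T_i(x^k))} \leq 2\alpha_i^2 L_i\,(f_i(T_i(x^k)) - f_i^\ast)$. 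Factoring out $\max_i\{L_i\omega_i\alpha_i^2\}$ and using $\avein f_i(T_i(x^k)) = \tilde{f}(x^k)$ collapses the sum to
\[ \EE\!\left[\sqn{g^k - \nabla\tilde{f}(x^k)}\mid x^k\right] \leq \frac{2\max_i\{L_i\omega_i\alpha_i^2\}}{n}\big(\tilde{f}(x^k) - f^\ast\big). \]

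Combining the two displays and using $\gamma \leq 1/L_\alpha$ (so $\frac{L_\alpha\gamma^2}{2}\leq\frac{\gamma}{2}$, hence $-\gamma + \frac{L_\alpha\gamma^2}{2}\leq -\frac{\gamma}{2}$), then subtracting $f^\ast$ and taking total expectation, yields the one-step recursion $r^{k+1} \leq (1+c)\,r^k - \frac{\gamma}{2}\,\EE\sqn{\nabla\tilde{f}(x^k)}$, where $c \eqdef \frac{L_\alpha\gamma^2\max_i\{L_i\omega_i\alpha_i^2\}}{n}$. The main obstacle is that this recursion carries a \emph{growing} factor $(1+c)$ instead of a contraction, so naive telescoping fails. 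I would resolve this by dividing through by $(1+c)^{t+1}$, which makes the inequality telescope: summing
\[ \frac{r^{t+1}}{(1+c)^{t+1}} \leq \frac{r^t}{(1+c)^t} - \frac{\gamma}{2(1+c)^{t+1}}\,\EE\sqn{\nabla\tilde{f}(x^t)} \]
over $t=0,\dots,k-1$ and dropping the nonnegative term $r^k/(1+c)^k$ (nonnegative because $f_i(T_i(x^k)) \geq f_i^\ast$ gives $r^k \geq 0$) leaves $\sum_{t=0}^{k-1}\frac{\gamma}{2(1+c)^{t+1}}\EE\sqn{\nabla\tilde{f}(x^t)} \leq r^0$. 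Lower bounding each weight $(1+c)^{-(t+1)}$ by $(1+c)^{-k}$, then using $\min_{0\leq t\leq k-1}\EE\sqn{\nabla\tilde{f}(x^t)} \leq \frac{1}{k}\sum_t \EE\sqn{\nabla\tilde{f}(x^t)}$ and the crude relaxation $(1+c)^k \leq (1+2c)^k$ (valid since $c\geq0$, and $2c = \frac{2L_\alpha\gamma^2\max\{L_i\omega_i\alpha_i^2\}}{n}$ is exactly the stated base), gives the claimed bound with $r^0 = \tilde{f}(x^0) - f^\ast$.
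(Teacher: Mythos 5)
Your proposal is correct, and its mathematical core coincides with the paper's: the paper likewise performs the bias--variance split of $\EE\|g^k\|^2$, kills the cross terms by independence and unbiasedness of the $\cC_i$, and applies \Cref{lm:non_cvx_bounded_gradient} to each $\|\nabla f_i(T_i(x))\|^2$ to arrive at exactly your second-moment bound $\EE\|g^k\|^2 \leq \frac{2\max\{L_i\omega_i\alpha_i^2\}}{n}(\tilde{f}(x^k)-f^\ast) + \|\nabla\tilde{f}(x^k)\|^2$. Where you diverge is in how this bound is converted into a rate: the paper stops there, observes that this is Assumption~2 of \citet{khaled2020better} with $A = \frac{2\max\{L_i\omega_i\alpha_i^2\}}{n}$, $B=1$, $C=0$, and cites their Theorem~2 as a black box, whereas you reprove that step from scratch via the descent lemma, the expansive recursion $r^{k+1}\leq(1+c)r^k - \frac{\gamma}{2}\EE\sqn{\nabla\tilde{f}(x^k)}$, and the weighted telescoping obtained by dividing by $(1+c)^{t+1}$. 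Your self-contained route is sound (the nonnegativity $r^k\geq 0$ follows from $\tilde{f}(x^k)=\avein f_i(T_i(x^k))\geq\avein f_i^\ast=f^\ast$, and the tower property handles the conditional expectations), and it in fact delivers the slightly sharper base $\bigl(1+\frac{L_\alpha\gamma^2\max\{L_i\omega_i\alpha_i^2\}}{n}\bigr)^k$, which you correctly relax to the stated $\bigl(1+\frac{2L_\alpha\gamma^2\max\{L_i\omega_i\alpha_i^2\}}{n}\bigr)^k$; the price is a longer argument for a result the paper imports in one line.
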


\begin{proof}
The proof is a direct application of Theorem 2 in~\citep{khaled2020better} to our setting.
First, let us analyze $\EE \|g(x)\|^2$.
\begin{align*}
&\EE \|g(x)\|^2 = \EE \left\|\avein \cC_i(\nabla [f_i(T_i(x))])\right\|^2\\
& = \EE \left\|\avein \cC_i(\nabla [f_i(T_i(x))]) - \EE \left[ \avein \cC_i(\nabla [f_i(T_i(x))]) \right]\right\|^2 + \left\| \EE \left[ \avein \cC_i(\nabla [f_i(T_i(x))]) \right]\right\|^2\\
& = \EE \left\|\avein \cC_i(\nabla [f_i(T_i(x))]) -  \avein \nabla [f_i(T_i(x))] \right\|^2 + \left\|\avein \nabla f_i(T_i(x))\right\|^2\\
& =  \EE \left\|\avein \cC_i(\nabla [f_i(T_i(x))]) -  \avein \nabla [f_i(T_i(x))] \right\|^2 + \left\|\nabla \tilde{f}(x)\right\|^2\\
& = \frac{1}{n^2} \sumin \|\cC_i(\nabla [f_i(T_i(x))]) - \nabla [f_i(T_i(x))] \|^2 + \left\|\nabla \tilde{f}(x)\right\|^2\\
& \leq \frac{1}{n^2} \sumin \omega_i \|\nabla [f_i(T_i(x))] \|^2 +\left\|\nabla \tilde{f}(x)\right\|^2.
\end{align*}
To analyze the first term we refer ourselves to Lemma~\ref{lm:non_cvx_bounded_gradient}:
\begin{align*}
&\|\nabla [f_i(T_i(x))] \|^2 \overset{T_i\text{ def}}{=} \|\nabla [f_i(\alpha_i x + (1 - \alpha_i)x_i)] \|^2 = \alpha_i^2\|\nabla f_i(\alpha_i x + (1 - \alpha_i)x_i) \|^2\\
&\overset{\text{Lemma }\ref{lm:non_cvx_bounded_gradient}}{\leq} \alpha_i^2 2 L_i (f_i(\alpha_i x + (1 - \alpha_i)x_i) - f^\ast_i)
\end{align*}
Plugging this in the previous inequality we get
\begin{align*}
&\EE \|g(x)\|^2 \leq \frac{1}{n^2} \sumin \omega_i \|\nabla [f_i(T_i(x))] \|^2 +\left\|\nabla \tilde{f}(x)\right\|^2\\
& \leq \frac{1}{n^2} \sumin \omega_i \alpha_i^2 2 L_i (f_i(\alpha_i x + (1 - \alpha_i)x_i) - f^\ast_i) +\left\|\nabla \tilde{f}(x)\right\|^2\\
&\leq \frac{2\max\{L_i \omega_i \alpha_i^2\}}{n} \left(\avein f_i(\alpha_i x + (1 - \alpha_i)x_i) - \avein f^\ast_i \right) + \|\nabla \tilde{f}(x)\|^2\\
& = \frac{2\max\{L_i \omega_i \alpha_i^2\}}{n} (\tilde{f}(x) - f^\ast) + \|\nabla \tilde{f}(x)\|^2,
\end{align*}
what means that Assumption 2 from~\cite{khaled2020better} holds with $A = \frac{2\max\{L_i \omega_i \alpha_i^2\}}{n}, \ B = 1, \ C = 0$. Then, as Theorem 2 from the same paper states, for stepsize $ \gamma \leq \frac{1}{L_\alpha}$ it holds that
\begin{align*}
\min_{0 \leq t \leq k - 1} \EE \|\nabla \tilde{f}(x^t)\|^2 \leq \frac{2\left(1 +  \frac{2L_\alpha \gamma^2\max\{L_i \omega_i \alpha_i^2\}}{n}\right)^k}{\gamma k} (\tilde{f}(x^0) - f^\star).
\end{align*}
\end{proof}

\begin{corollary}\label{crl:dcgd_non_convex}
Suppose assumptions of Theorem~\ref{thm:non_cvx_dcgd} hold, $\alpha_i \equiv \alpha$, $\omega_i \equiv \omega$. Let $\gamma = \gamma_0 \frac{1}{\alpha^2} \leq \frac{1}{\overline{L_i} \alpha^2}$ and $\Delta_0 = \sup\limits_\alpha \tilde{f}(x^0) - f^\ast$.
\begin{align}
\min_{0 \leq t \leq k - 1} \EE \|\nabla \tilde{f}(x^t)\|^2 \leq \alpha^2 \frac{2\left(1 +  \frac{2\overline{L} \omega \gamma_0^2\max_i L_i}{n}\right)^k}{\gamma_0 k} \Delta_0.
\end{align}
\end{corollary}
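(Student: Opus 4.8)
The plan is to obtain this corollary as a direct specialization of \Cref{thm:non_cvx_dcgd}, substituting the uniform choices $\alpha_i = \alpha$ and $\omega_i = \omega$ and carefully tracking how every factor of $\alpha$ enters each term. First I would verify that the prescribed stepsize is admissible. By \Cref{proposition:preserve-smooth-and-convexity}, the choice $\alpha_i = \alpha$ gives $L_\alpha = \frac{1}{n}\sum_{i=1}^{n} \alpha_i^2 L_i = \alpha^2 \overline{L_i}$, so the hypothesis $\gamma = \gamma_0/\alpha^2 \leq 1/(\overline{L_i}\alpha^2)$ is exactly the requirement $\gamma \leq 1/L_\alpha$ that \Cref{thm:non_cvx_dcgd} demands. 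Hence the theorem applies verbatim and I may plug its conclusion in directly.

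Next I would simplify the exponential base $1 + \frac{2 L_\alpha \gamma^2 \max\{L_i \omega_i \alpha_i^2\}}{n}$. Under the uniform choices the maximum becomes $\max_i\{L_i \omega \alpha^2\} = \omega \alpha^2 \max_i L_i$, while $L_\alpha = \alpha^2 \overline{L_i}$ and $\gamma^2 = \gamma_0^2/\alpha^4$. The key observation is that these powers of $\alpha$ collectively cancel: the product carries $\alpha^2 \cdot \alpha^{-4} \cdot \alpha^2 = 1$, so the base collapses to the $\alpha$-independent quantity $1 + \frac{2 \overline{L}\,\omega\,\gamma_0^2 \max_i L_i}{n}$. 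This cancellation is the heart of the argument and also the step most prone to bookkeeping errors, so it is the one I would check most carefully; there is no genuine analytical difficulty beyond getting the $\alpha$-exponents to balance.

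Finally I would assemble the prefactor. The theorem's leading factor $\frac{2}{\gamma k}$ equals $\frac{2\alpha^2}{\gamma_0 k}$ once $\gamma = \gamma_0/\alpha^2$ is substituted, producing the explicit $\alpha^2$ out front. Replacing $\tilde{f}(x^0) - f^\ast$ by its $\alpha$-independent upper bound $\Delta_0 = \sup_\alpha \tilde{f}(x^0) - f^\ast$ and combining the three pieces yields
\[ \min_{0 \leq t \leq k-1} \EE \|\nabla \tilde{f}(x^t)\|^2 \leq \alpha^2 \frac{2\left(1 + \frac{2\overline{L}\,\omega\,\gamma_0^2 \max_i L_i}{n}\right)^k}{\gamma_0 k}\,\Delta_0, \]
which is precisely the claimed bound. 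Since the derivation is pure substitution into \Cref{thm:non_cvx_dcgd}, the only care required is confirming stepsize admissibility and tracking the $\alpha$-powers so that the exponential base is genuinely independent of $\alpha$.
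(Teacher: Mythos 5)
Your proposal is correct and follows exactly the same route as the paper: substitute $L_\alpha = \overline{L}\alpha^2$, $\gamma = \gamma_0/\alpha^2$, and the uniform $\alpha_i,\omega_i$ into Theorem~\ref{thm:non_cvx_dcgd}, observe that the $\alpha$-powers cancel in the exponential base, and bound $\tilde{f}(x^0) - f^\ast$ by $\Delta_0$. Your version is in fact slightly more careful than the paper's, since it explicitly checks that the prescribed stepsize satisfies $\gamma \leq 1/L_\alpha$.
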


\begin{proof}
We note that when $\alpha_i \equiv \alpha$, $L_\alpha = \overline{L}\alpha^2$, which follows from the definition of $L_\alpha$. Plugging equations $L_\alpha = \overline{L}\alpha^2$, $\gamma = \gamma_0 \frac{1}{\alpha^2}$, $\alpha_i \equiv \alpha$ and $\omega_i \equiv \omega$ into the main result of Theorem~\ref{thm:non_cvx_dcgd} we get
\begin{align*}
\min_{0 \leq t \leq k - 1} \EE \|\nabla \tilde{f}(x^t)\|^2 \leq \alpha^2 \frac{2\left(1 +  \frac{2\overline{L}\omega \gamma_0^2\max_i L_i }{n}\right)^k}{\gamma_0 k} (\tilde{f}(x^0) - f^\star).
\end{align*}
Noting that $\tilde{f}(x^0) - f^\ast \leq \Delta_0$ we finish the proof.
\end{proof}

\subsubsection{DIANA}

\begin{theorem}\label{thm:diana_nncvx}
Suppose $f^\ast_i = \min f_i(x)$ exists and $f_i$ is $L_i$-smooth for each $i$. Let $\beta_k \equiv \beta$ and $\omega_k \equiv \omega$ for all $k \in \{1, \dots, n\}$. Suppose $\beta \in \frac{1}{1 + \omega} \left[1, \min \left\{1 + \frac{1}{2(1 + \omega)}, \sqrt{\frac32 + \omega + \frac{1}{2\omega}} \right\} \right]$ and $\gamma~\leq~\frac{1}{L_\alpha} \cdot \min \left\{\frac{1}{2\sqrt{\eta_0}} , \frac{2}{\sqrt{1 + 8\eta_0} + 1}  \right\} $, $\eta_0 = \frac{(1+\omega)\omega (1 + 2(1+\omega))}{n}$, and $h^0_i = \nabla [f_i(T_i(x^0))]$. Let $\hat{x}$ be a point chosen uniformly at random among iterates $x^0, x^1, \dots, x^{k-1}$ generated by DIANA. Then,
\begin{align}
\EE \|\nabla \tilde{f}(\hat{x})\|^2 \leq \frac{2}{\gamma k} (\tilde{f}(x^0) - f^\ast).
\end{align}
\end{theorem}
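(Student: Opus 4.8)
The plan is to mirror the structure of the nonconvex DCGD argument in Theorem~\ref{thm:non_cvx_dcgd}, replacing the plain compressed estimator by the variance-reduced DIANA estimator and tracking the control variates through a Lyapunov function. Throughout I write $\tilde{f}_i(x) \eqdef f_i(T_i(x))$, so that $\nabla \tilde{f}_i(x) = \alpha_i \nabla f_i(\alpha_i x + \br{1-\alpha_i} x_i)$ and, by the $L_i$-smoothness of $f_i$, each $\tilde{f}_i$ is $\alpha_i^2 L_i$-smooth; by Proposition~\ref{proposition:preserve-smooth-and-convexity} the full objective $\tilde{f}$ is $L_\alpha$-smooth. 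These are the smoothness constants that feed into the general theorem.

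First I would bound the second moment of the DIANA estimator $g^k = h^k + \avein \hat{\Delta}^k_i$. Using the bias--variance decomposition together with the independence and unbiasedness of the compressors exactly as in Lemma~\ref{lm:DIANA}, but \emph{without} invoking convexity, I would obtain $\EE\sqn{g^k} \leq \sqn{\nabla \tilde{f}(x^k)} + \frac{1}{n^2}\sumin \omega_i \sqn{\nabla \tilde{f}_i(x^k) - h^k_i}$. Introducing the variance potential $\sigma_k^2 \eqdef \avein \omega_i \sqn{h^k_i - \nabla \tilde{f}_i(x^k)}$, this reads $\EE\sqn{g^k} \leq \sqn{\nabla \tilde{f}(x^k)} + \tfrac{1}{n}\sigma_k^2$. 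In contrast with the strongly convex analysis, the anchor of $\sigma_k^2$ is the \emph{moving} gradient $\nabla \tilde{f}_i(x^k)$ rather than a fixed $\nabla \tilde{f}_i(x^\ast)$, since no global minimizer is available.

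Next I would establish a recursion for $\sigma_k^2$ analogous to Lemma~\ref{lm:DIANA_sigma_k}. The update $h^{k+1}_i = h^k_i + \beta \hat{\Delta}^k_i$ contracts the distance of $h^k_i$ to $\nabla \tilde{f}_i(x^k)$ for $\beta \leq \frac{1}{1+\omega}$, but because the anchor also moves from $\nabla \tilde{f}_i(x^k)$ to $\nabla \tilde{f}_i(x^{k+1})$, I must pay an extra term of order $\sqn{\nabla \tilde{f}_i(x^{k+1}) - \nabla \tilde{f}_i(x^k)} \leq (\alpha_i^2 L_i)^2 \sqn{x^{k+1}-x^k}$, which I would control via the smoothness of $\tilde{f}_i$ and the descent lemma for $\tilde{f}$; where a bare gradient-norm term cannot be absorbed directly I would invoke Lemma~\ref{lm:non_cvx_bounded_gradient} to trade it for the suboptimality $\tilde{f}_i - f_i^\ast$. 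The prescribed range of $\beta$ in the hypotheses is precisely what keeps this coupling term from overwhelming the contraction.

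Finally, with the estimator bound and the variance recursion in hand, the result is a direct application of the general nonconvex variance-reduced theorem from~\citep{khaled2020better}: the effective smoothness $L_\alpha$ and the compression level $\omega$ assemble into the dimensionless quantity $\eta_0 = \frac{(1+\omega)\omega(1+2(1+\omega))}{n}$, which fixes the admissible stepsize $\gamma \leq \frac{1}{L_\alpha}\min\{\frac{1}{2\sqrt{\eta_0}}, \frac{2}{\sqrt{1+8\eta_0}+1}\}$. The initialization $h^0_i = \nabla \tilde{f}_i(x^0) = \nabla[f_i(T_i(x^0))]$ makes $\sigma_0^2 = 0$, so the Lyapunov function collapses at $k=0$ to $\tilde{f}(x^0)-f^\ast$; telescoping it over $k$ steps and averaging over the uniformly chosen $\hat{x} \in \{x^0,\ldots,x^{k-1}\}$ yields $\EE\sqn{\nabla \tilde{f}(\hat{x})} \leq \frac{2}{\gamma k}\br{\tilde{f}(x^0)-f^\ast}$, with no additive neighborhood term. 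The main obstacle is the third step: in the nonconvex regime the control variate chases a moving target, so the $\sigma_k^2$ recursion acquires the cross term above, and balancing it against the descent of $\tilde{f}$---which is exactly what forces the joint constraints on $\beta$ and $\gamma$ through $\eta_0$---is the delicate part of the bookkeeping.
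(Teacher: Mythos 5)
Your proposal follows essentially the same route as the paper's proof: the same second-moment bound $\EE\sqn{g^k}\leq \sqn{\nabla\tilde f(x^k)}+\tfrac{1+\omega}{n}\sigma_k^2$, the same moving-anchor recursion for the control-variate variance paying an $L_\alpha^2\gamma^2\sqn{\nabla\tilde f(x^k)}$ cross term, the same Lyapunov function $\Delta^k=\tilde f(x^k)-f^\ast+\xi\sigma_k^2$ collapsing to $\tilde f(x^0)-f^\ast$ under the initialization $h_i^0=\nabla[f_i(T_i(x^0))]$, and the same balancing of $\beta$ and $\gamma$ through $\eta_0$ followed by telescoping. The only cosmetic discrepancies are that the paper imports the two key inequalities from Lemma 9 of \citet{li2020unified} rather than from the ABC-condition theorem of \citet{khaled2020better} (which on its own would not accommodate the $\sigma_k^2$ recursion), and it never needs Lemma~\ref{lm:non_cvx_bounded_gradient} here because the cross term is absorbed directly into the descent.
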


\begin{proof}
The proof is following the proof of Theorem 1 in~\cite{li2020unified} with small modifications: we rewrite the main proposition in the different way and give slightly different bound on the stepsize. First, Lemma 9 in~\cite{li2020unified} says that
\begin{align}
\label{ineq:g_k_diana_nncvx}&\EE \|g^k\|^2 \leq \|\nabla f(x^k)\|^2 + \frac{1 + \omega}{n}\sigma_k^2,\\
\label{ineq:sigma_k_diana_nncvx} &\EE\sigma^2_{k+1} \leq (1 - \rho) \sigma^2_k + \frac{\omega (1 + r) L_\alpha^2 \gamma^2}{1 + \omega}\|\nabla f(x)\|^2,
\end{align}
where
\begin{align}
&\sigma_k^2 = \frac{\omega}{(1 + \omega) n}\sumin \|\nabla [f_i(T_i(x^k))] - h_i^k \|^2\\
\label{eq:rho_def_diana_nncvx}&\rho = \theta - \frac{\omega (1 + r)}{n} L_\alpha^2 \gamma^2\\
&\theta = \min\{1 - \beta^2 \omega, 2 \beta - \frac{1 - \beta}{r} - \beta^2 - \beta^2\omega\}
\end{align}
 and $r$ is an arbitrary positive number. We choose $r = 2(1 + w) > 0$. For the sake of readability, we define function $\zeta$ in the following way
\begin{align*}
&2 \beta - \frac{1 - \beta}{2 ( 1 + \omega)} - \beta^2 - \beta^2\omega = - (\omega + 1) \beta^2 + \left(2 + \frac{1}{2(1 + \omega)}\right)\beta - \frac{1}{2(1 + \omega)} \eqqcolon \zeta(\beta)\\
\end{align*}
That is why we can write that $\theta \in \min \{1 - \beta^2 \omega, \zeta(\beta) \}$.

Second, let us analyze the lower bound for $\theta$. Note that for $\beta$ lying in the range $\frac{1}{1 + \omega} \left[1, \min \left\{1 + \frac{1}{2(1 + \omega)}, \sqrt{\frac32 + \omega + \frac{1}{2\omega}} \right\} \right]$ we have
\begin{align*}
&1 - \beta^2 \omega \geq 1 - \frac{\omega}{(1 + \omega)^2} \left(\frac32 + \omega + \frac{1}{2\omega} \right) = 1 - \frac{\omega}{(1 + \omega)^2} \frac{(1 + 2\omega)(1 + \omega)}{2\omega} \\
&= 1 - \frac{1 + 2\omega}{2(1 + \omega)} = \frac{1}{2(1 + \omega)},
\end{align*}
and
\begin{align*}
&\zeta\left(\frac{1}{1 + \omega}\right) = - \frac{1}{1 + \omega} + \left(2 + \frac{1}{2(1 + \omega)}\right) \frac{1}{1 + \omega} - \frac{1}{2(1 + \omega)} \\
& = \frac{1}{2(1 + \omega)} + \frac{1}{2(1 + \omega)^2} \geq \frac{1}{2(1 + \omega)},\\
\end{align*}
and
\begin{align*}
&\zeta\left(\frac{1}{1 + \omega} + \frac{1}{2(1+\omega)^2}\right) = - \frac{1}{1 + \omega} - \frac{1}{(1+\omega)^2} - \frac{1}{4(1+\omega)^3} \\
& \qquad + \left(2 + \frac{1}{2(1 + \omega)}\right) \left(\frac{1}{1 + \omega} + \frac{1}{2(1+\omega)^2}\right) - \frac{1}{2(1 + \omega)}\\
& = \frac{1}{2(1 + \omega)} + \frac{1}{2(1 + \omega)^2} \geq \frac{1}{2(1 + \omega)}.\\
\end{align*}
Since $\zeta$ is a quadratic function with the ends of the parabola pointed downwards, for all $\beta$ in the range we have $\zeta(\beta) \geq \frac{1}{2(1 + \omega)}$, which altogether means that $\theta \geq \frac{1}{2(1 + \omega)} > 0$.

If we enforce
\begin{align*}
\frac{\omega (1 + r)}{n} L_\alpha^2 \gamma^2 \leq \frac{1}{4(1 + \omega)}
\end{align*}
or
\begin{align*}
\gamma \leq \frac{1}{2L_\alpha} \sqrt{\frac{n}{(1+\omega)\omega (1 + r)}} = \frac{1}{2L_\alpha} \sqrt{\frac{n}{(1+\omega)\omega (1 + 2(1+\omega))}},
\end{align*}
then $\rho = \theta - \frac{\omega (1 + r)}{n} L_\alpha^2 \gamma^2 \geq \frac{1}{4 ( 1 + \omega )}$.

Now let us switch to the proof of the convergence. We first note that due to smoothness of $f_i$s  and inequality~\ref{ineq:g_k_diana_nncvx} we get
\begin{align*}
\EE f(x^{k+1}) &\leq f(x^k) + \EE\langle \nabla f(x^k), x^{k+1} - x^k\rangle + \frac{L_\alpha}{2} \EE\|x^{k+1} - x^k\|^2\\
& = f(x^k) - \gamma \| \nabla f(x^k)\|^2 + \frac{L_\alpha \gamma^2}{2} \EE\|g^k\|^2\\
& \leq f(x^k) - \left(\gamma - \frac{L_\alpha \gamma^2}{2}\right) \| \nabla f(x^k)\|^2 +  \frac{L_\alpha \gamma^2}{2}\frac{1 + \omega}{n}\sigma_k^2.
\end{align*}
Let us fix $\xi > 0$. Then according to~\ref{ineq:sigma_k_diana_nncvx} we have
\begin{align*}
&\EE [f(x^{k+1}) - f^\ast + \xi \sigma_{k+1}^2] \\
& \leq f(x^k) - f^\ast - \left(\gamma - \frac{L_\alpha \gamma^2}{2} - \xi\frac{\omega (1 + r) L_\alpha^2 \gamma^2}{1 + \omega}\right) \| \nabla f(x^k)\|^2 +  \left(\xi(1 - \rho) + \frac{L_\alpha \gamma^2}{2}\frac{1 + \omega}{n}\right)\sigma_k^2.
\end{align*}
Let us notate $\Delta^k = f(x^k) - f^\ast + \xi \sigma_k^2$ and set $\xi = \frac{L_\alpha \gamma^2}{2\rho}\frac{1 + \omega}{n}$. Then from previous inequality it follows that
\begin{align}\label{ineq:tmp_delta_k}
\EE \Delta^{k+1} \leq \Delta^k - \left(\gamma - \frac{L_\alpha \gamma^2}{2} - \xi\frac{\omega (1 + r) L_\alpha^2 \gamma^2}{1 + \omega}\right) \| \nabla f(x^k)\|^2.
\end{align}
Let us define $\gamma'$ the coefficient in front of $\|\nabla f(x^k)\|^2$ in the last inequality. When is $\gamma'$ larger than $\frac{\gamma}{2}$?
\begin{align*}
&\gamma - \frac{L_\alpha \gamma^2}{2} - \xi\frac{\omega (1 + r) L_\alpha^2 \gamma^2}{1 + \omega} = \gamma - \frac{L_\alpha \gamma^2}{2} - \frac{L_\alpha \gamma^2}{2\rho}\frac{1 + \omega}{n}\frac{\omega (1 + r) L_\alpha^2 \gamma^2}{1 + \omega}\\
& = \gamma - \frac{L_\alpha \gamma^2}{2} - \frac{L_\alpha \gamma^2}{2\rho}\frac{\omega (1 + r) L_\alpha^2 \gamma^2}{n} = \gamma - \frac{\gamma}{2} \left(L_\alpha \gamma + \frac{L_\alpha^3 \gamma^3 \omega (1 + r)}{\rho n} \right) \geq \frac{\gamma}{2},
\end{align*}
if
\begin{align*}
L_\alpha \gamma + \frac{L_\alpha^3 \gamma^3 \omega (1 + r)}{\rho n} \leq 1.
\end{align*}
Let us define $y \coloneqq L_\alpha \gamma$ and $a \coloneqq \frac{\omega (1 + r)}{n}$. Then according to~\eqref{eq:rho_def_diana_nncvx} $\rho = \theta - a y^2$. Plugging this back to the last inequality we get
\begin{align*}
&y + \frac{ay^3}{\theta - ay^2} \leq 1\\
& \Leftrightarrow \frac{y\theta - ay^3 + ay^3}{\theta - ay^2} \leq 1\\
& \Leftrightarrow \frac{y\theta}{\theta - ay^2} \leq 1\\
& \overset{\theta - ay^2 = \rho > 0}{\Leftrightarrow} y \theta \leq \theta - ay^2\\
& \overset{\theta > 0}{\Leftrightarrow} \frac{a}{\theta} y^2 + y - 1 \leq 0.
\end{align*}
The last inequality holds for all $y$ lying between zero and the positive root of the quadratic equation. The positive root is $\frac{\sqrt{1 + \frac{4a}{\theta}} - 1}{\frac{2a}{\theta}} = \frac{\frac{4a}{\theta}}{\frac{2a}{\theta} \left(\sqrt{1 + \frac{4a}{\theta}} + 1\right)} = \frac{2}{\sqrt{1 + \frac{4a}{\theta}} + 1}$. That means that the last inequality holds if
\begin{align*}
y \leq \frac{2}{\sqrt{1 + \frac{4a}{\theta}} + 1}
\end{align*}
or the tightest bound for $\gamma$ achieved on smallest $\theta$ gives
\begin{align*}
\gamma \leq \frac{1}{L_\alpha}\frac{2}{\sqrt{1 + \frac{4\omega (1 + 2 ( 1 + \omega))}{n\theta_{\text{min}}}} + 1} = \frac{1}{L_\alpha}\frac{2}{\sqrt{1 + \frac{8 (1 + \omega) \omega (1 + 2 ( 1 + \omega))}{n}} + 1}.
\end{align*}
Going back to~\ref{ineq:tmp_delta_k} we finally get
\begin{align*}
&\EE \Delta^{k+1} \leq \Delta^k - \frac{\gamma}{2}\|\nabla f(x^k)\|^2\\
&\EE \|\nabla f(x^k)\|^2 \leq \frac{2}{\gamma} (\EE \Delta^{k} - \EE\Delta^{k+1}).
\end{align*}
That is why
\begin{align*}
\EE \|\nabla f(\hat{x})\|^2 = \frac{1}{k} \sum_{t = 0}^{k - 1} \EE \|\nabla f(x^t)\|^2 \leq \frac{2}{\gamma k} (\Delta^0 - \EE \Delta^k) \leq \frac{2}{\gamma k} \Delta^0.
\end{align*}

The proof is written for general function $f(x) = \avein f_i(x)$. To apply the theorem for our setting, we replace $f$ by $\tilde{f}$.
\end{proof}

\begin{corollary}\label{crl:diana_nncvx}
Suppose all conditions of Theorem~\ref{thm:diana_nncvx} hold and $\alpha_i \equiv \alpha$ for all $i$ from $\{1, \dots, n\}$. Let $\gamma = \frac{1}{\alpha^2} \gamma_0$, where $\gamma_0 = \frac{1}{\overline{L}}\min \left\{\frac{1}{2\sqrt{\eta_0}} , \frac{2}{\sqrt{1 + 8\eta_0} + 1}  \right\}$. Let $\Delta_0 = \sup\limits_\alpha \tilde{f}(x^0) - f^\ast$ Then,
\begin{align}
\EE \|\nabla \tilde{f}(\hat{x})\|^2 \leq \alpha^2 \frac{2}{\gamma_0 k} \Delta_0.
\end{align}
\end{corollary}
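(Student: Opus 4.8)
The plan is to specialize Theorem~\ref{thm:diana_nncvx} to the homogeneous case $\alpha_i \equiv \alpha$, exactly mirroring the derivation of Corollary~\ref{crl:dcgd_non_convex}. The key algebraic fact I would record first is that, by the definition of $L_\alpha$ in Proposition~\ref{proposition:preserve-smooth-and-convexity}, setting all $\alpha_i = \alpha$ gives $L_\alpha = \overline{L}\,\alpha^2$. Since $\eta_0 = \frac{(1+\omega)\omega (1 + 2(1+\omega))}{n}$ depends only on $\omega$ and $n$, and $\overline{L}$ is also independent of $\alpha$, the quantity $\gamma_0 = \frac{1}{\overline{L}}\min\{\frac{1}{2\sqrt{\eta_0}}, \frac{2}{\sqrt{1+8\eta_0}+1}\}$ is free of $\alpha$.

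Next I would verify that the prescribed stepsize $\gamma = \gamma_0/\alpha^2$ satisfies the admissibility constraint of the theorem. Substituting $L_\alpha = \overline{L}\,\alpha^2$ into the bound $\gamma \leq \frac{1}{L_\alpha}\min\{\frac{1}{2\sqrt{\eta_0}}, \frac{2}{\sqrt{1+8\eta_0}+1}\}$ yields precisely $\gamma \leq \frac{1}{\overline{L}\,\alpha^2}\min\{\frac{1}{2\sqrt{\eta_0}}, \frac{2}{\sqrt{1+8\eta_0}+1}\} = \gamma_0/\alpha^2$, so the choice $\gamma = \gamma_0/\alpha^2$ is exactly the largest admissible stepsize and is therefore valid. (I would also note the hypotheses on $\beta$ and $\omega_k$ from Theorem~\ref{thm:diana_nncvx} carry over unchanged, as they do not involve $\alpha$.)

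Then I would plug this stepsize into the conclusion of Theorem~\ref{thm:diana_nncvx}, namely $\EE\|\nabla \tilde{f}(\hat{x})\|^2 \leq \frac{2}{\gamma k}(\tilde{f}(x^0) - f^\ast)$. Writing $\frac{1}{\gamma} = \frac{\alpha^2}{\gamma_0}$ factors an $\alpha^2$ out of the right-hand side, giving $\EE\|\nabla \tilde{f}(\hat{x})\|^2 \leq \alpha^2 \frac{2}{\gamma_0 k}(\tilde{f}(x^0) - f^\ast)$. Finally, invoking the definition $\Delta_0 = \sup_\alpha (\tilde{f}(x^0) - f^\ast) \geq \tilde{f}(x^0) - f^\ast$, I would replace $\tilde{f}(x^0) - f^\ast$ by $\Delta_0$ to obtain the claimed bound $\EE\|\nabla \tilde{f}(\hat{x})\|^2 \leq \alpha^2 \frac{2}{\gamma_0 k}\Delta_0$.

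This argument is a direct substitution, so I anticipate no genuine obstacle. The only point requiring minor care is confirming that $\eta_0$, and hence $\gamma_0$, is truly independent of $\alpha$; this is what guarantees that the factored $\alpha^2$ in the final rate captures the \emph{complete} dependence on the personalization parameter, which is the qualitative conclusion the corollary is meant to convey.
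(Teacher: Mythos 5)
Your proposal is correct and follows exactly the paper's own argument: observe that $L_\alpha = \overline{L}\alpha^2$ for equal $\alpha_i$, check that $\gamma = \gamma_0/\alpha^2$ meets the stepsize condition of Theorem~\ref{thm:diana_nncvx}, substitute into its conclusion, and bound $\tilde{f}(x^0) - f^\ast$ by $\Delta_0$. Your version is simply a more explicit write-up of the same direct substitution.
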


\begin{proof}
According to the definition of $L_\alpha = \avein L_i \alpha_i^2$, in the case of equal $\alpha_i$s we get $L_\alpha = \overline{L}\alpha^2$. It remains to notice that the stepsize in the corollary condition still satisfies the condition on the stepsize $\gamma$ in Theorem~\ref{thm:diana_nncvx} and $\Delta_0 \geq \tilde{f}(x) - f^\ast$.
\end{proof}

\clearpage
\section{DISCUSSION OF OTHER MODEL MIXTURE METHODS}
\label{sec:rel-work-discussion}

\citet{expl_mixture_Deng2020} propose that each client solves the local problem $$\min_{v \in \R^d} f_i (\alpha_i v + (1 - \alpha_i) w^\ast),$$ where $w^\ast$ is the minimizer of~\eqref{eq:ERM}. This does not result in any personalization since the ``personalized'' solution on each node is a reparameterization of each local solution $x_i$. Furthermore, the convergence theory that \citet{expl_mixture_Deng2020} develop does not recover the linear convergence of gradient descent at $\alpha_i =1$.

\citet{mansour2020approaches} introduce a similar method, MAPPER, where they propose to solve $$\min_{z, \alpha_i, y_i} \frac{1}{n} \sum \limits_{i=1}^{n} f_{i} (\alpha_i y_i + (1 - \alpha_i) z).$$ Again, this objective is trivially minimized by setting $\alpha_i = 1$, $y_i = \min_{x \in \R^d} f_i (x)$, and $z = 0$ (i.e.\ with no personalization at all).

\citet{zec2021specialized} also introduce a similar formulation based on the mixture of experts framework, where they propose to first learn the minimizer $x_\ast$ of \eqref{eq:ERM}, learn the optimal local models $x_1, x_2, \ldots, x_n$, and then learn a mixture of both the global and local models (i.e. the $\alpha_i$) on each client. Unfortunately, this is also ill-defined, as $\alpha_i = 1$ will always perform best on the local training set, and hence if there is no additional data the optimization process cannot improve over the local minimizers $x_1, \ldots, x_n$.

\clearpage
\section{EXPERIMENTAL DETAILS}
\label{sec:exp_details}

{\bfseries Generalization experiment 1: Fitting Sine Functions.} We train each local model $i$ until the gradient norm $\|\nabla f_i(x)\| $ is below $10^{-2}$. Stopping criteria for global models is the same but with respect to FLIX formulation:  $\|\avein \nabla f_i(T_i(x))\| < 10^{-2}$. Gradient descent with line search at each iteration was used as a local and a meta optimizer.

{\bfseries Generalization experiment 2: Comparison to FOMAML and Reptile.} We preprocess the raw data in the same way as in~\cite{reddi2021adaptive} for Stackoverflow logistic regression task, i.e., each feature vector is a bag-of-words representation of a user's sentence, each label vector is a binary vector showing if a sentence relates to a particular question tag or not. Word vocabulary for the feature dataset is restricted to the 10000 most common words. We restrict the task to the 500 most used tags. For preprocessing, we used Tensorflow computational procedures from~\cite{adaptivefedopt_github}.

To select 50 clients for the Figure~\ref{fig:alphai-matters}, we map first 5000 clients from train dataset to vector space with BERT~\cite{bert} and run k-means with 10 clusters. The first 50 clients from the first cluster have been selected for the experiment.

The hold-out validation dataset is of size 100, for the first experiment, and of size 110, for the second one. To compute pure local models we run gradient descent until the norm of the gradient is less than $[10^{-1}, 10^{-2}, 10^{-3}, 10^{-4}, 10^{-5}]$, respectively. We train pure local models until the gradient norm is less than $10^{-4}$, for the first experiments, and $10^{-5}$, for the second, as this tolerance level was observed to have the lowest generalization error on the validation dataset in our experiments.

Then, we use a grid-search to find optimal stepsizes for gradient descent used for training FLIX. Our grid for step-sizes are $[5 \cdot 10^{-2}, 5 \cdot 10^{-1}, \dots , 5 \cdot 10^{6}]$. The results are presented in Figures~\ref{fig:grid_search} and~\ref{fig:grid_search2}.
\begin{figure*}[t]
	\includegraphics[width=0.87\linewidth]{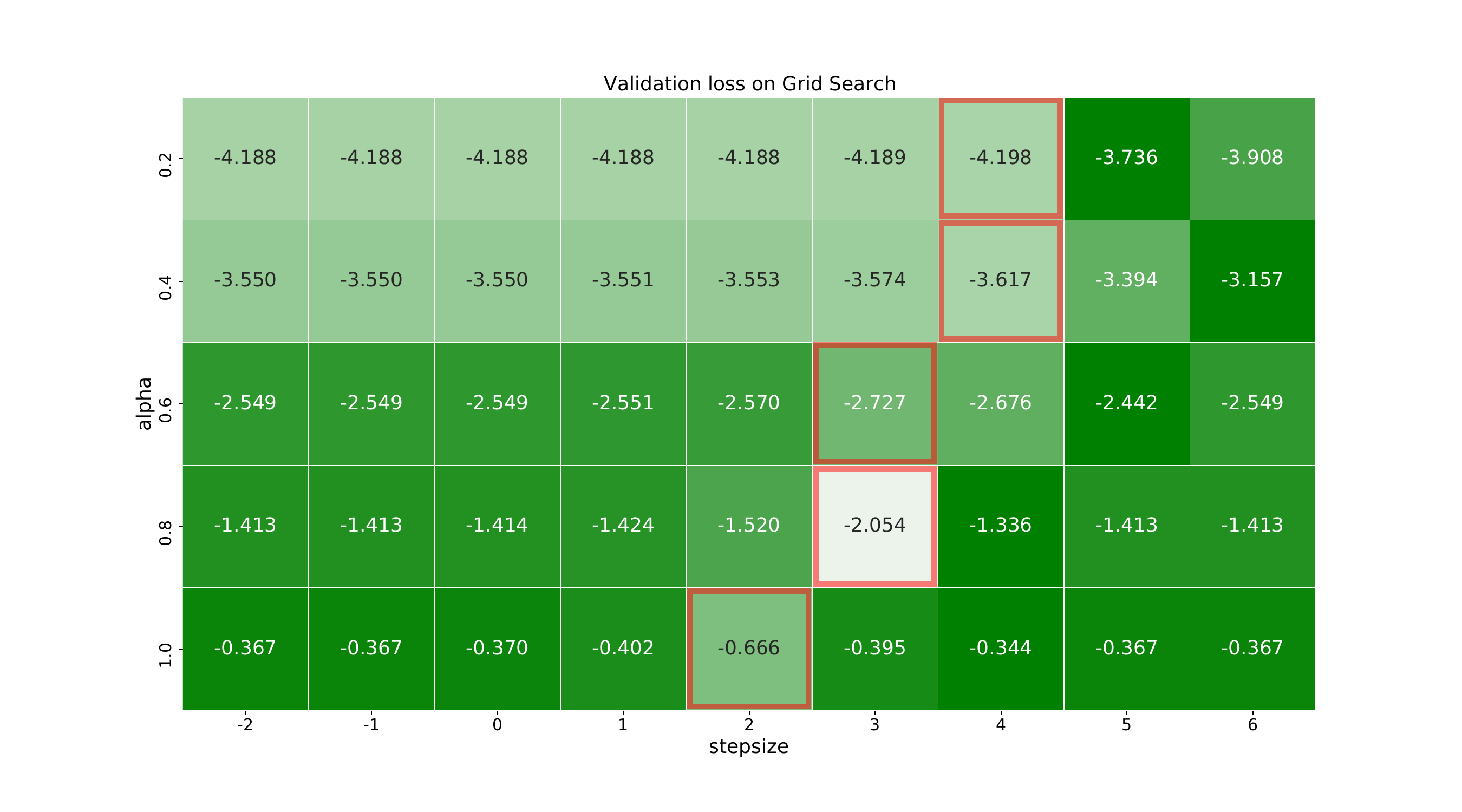}
	\caption{Grid-Search of stepsizes for FLIX for the first Stack Overflow experiment. Values in cells are logarithms of validation losses. Horizontal axis corresponds to logarithm of scaled step-sizes. Red frames correspond to minimum value in a row. Colors are scaled row-wise.}
	\label{fig:grid_search}
\end{figure*}

\begin{figure*}[t]
	\includegraphics[width=0.87\linewidth]{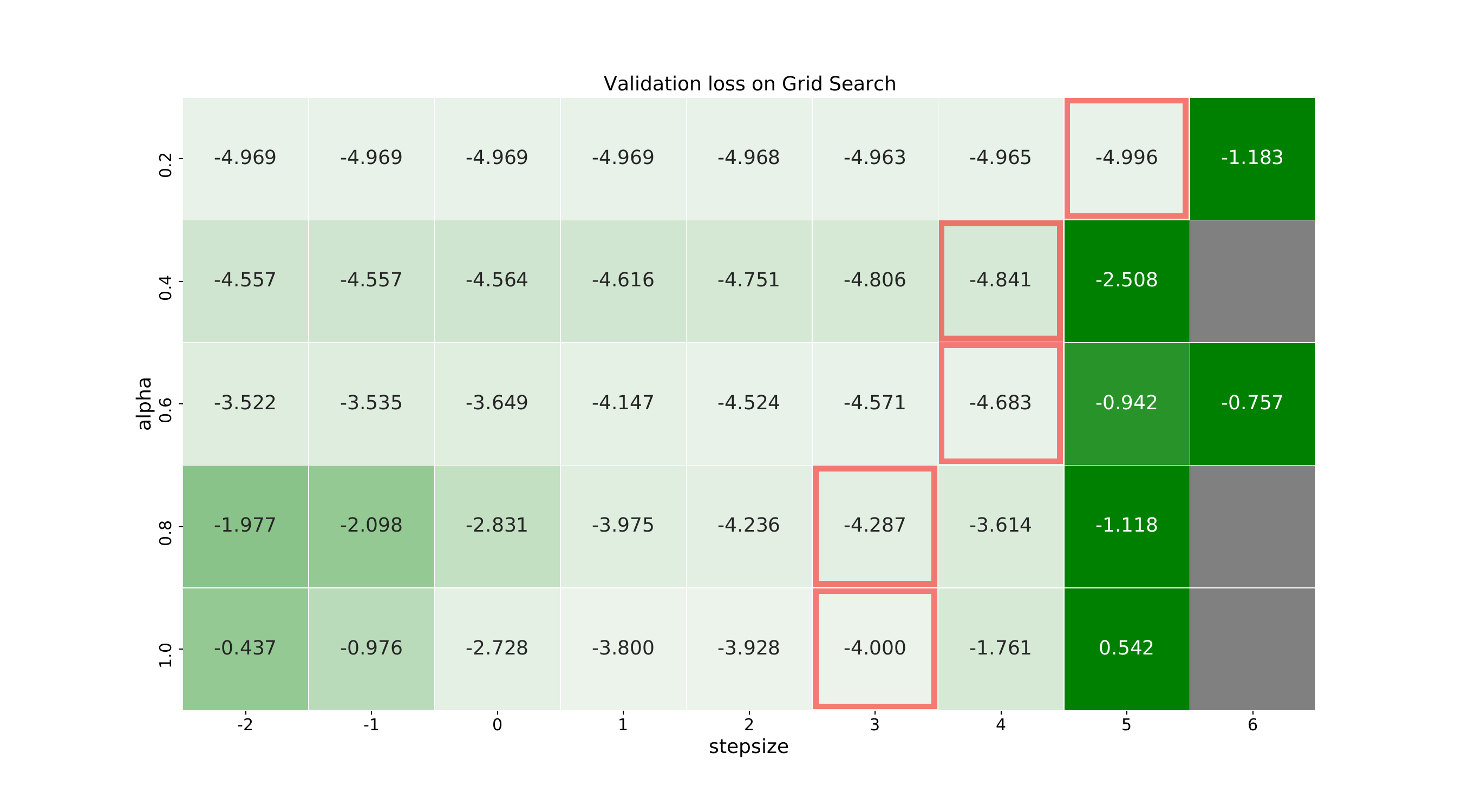}
	\caption{Grid-Search of stepsizes for FLIX for the second Stack Overflow experiment. Values in cells are logarithms of validation losses. Horizontal axis corresponds to logarithm of scaled step-sizes. Red frames correspond to minimum value in a row. Colors are scaled row-wise. Grey cells correspond to nan values.}
	\label{fig:grid_search2}
\end{figure*}
As table shows, the smaller $\alpha$ is, the higher step-size the task needs to achieve the best generalization, which is in line with our convergence results, see Theorem~\ref{theorem:dgd-convergence} and Table~\ref{table:results}.

To train FOMAML and Reptile, we set the number of inner steps to five and grid search outer and inner loop step-sizes. The explored outer step-sizes are the same as for FLIX. Inner step-sizes iterate over the set $[10^{-3}, 10^{-2}, \dots, 10]$.

After grid search, we run gradient descent for each value of alpha of FLIX, FOMAML, and Reptile for 10 000 and 50000 iterations in the first and seconds experiments accordigly and report obtained test accuracy.

{\bfseries Generalization experiment 3: Comparison to FedAvg with CNNs and LSTMs on large-scale datasets.} In~\Cref{table:high_scale_best_par} we indicate the best parameters for the tasks discussed in~\Cref{sec:experiments}.

\begin{table}[t]
\caption{Combinations of the parameters that achieve the highest accuracy. See~\Cref{sec:experiments} for the description of the grids.}
\label{table:high_scale_best_par}
\centering
\begin{tabular}{lll}
\textbf{PARAMETER} & \textbf{EMNIST} & \textbf{SHAKESPEARE} \\
\hline\\
Personalization parameter $\alpha$ & 0.3 & 0.5 \\
PLM batch size & 10 & 4\\
PLM learning rate & 0.001& 0.1 \\
FLIX batch size & 20 & 1 \\
FLIX learning rate & 0.01 & $10^{0.5}$\\
\end{tabular}
\end{table}

\clearpage
\section{EXTRA EXPERIMENTS}
\label{sec:extra-experiments}

{\bfseries Logistic regression with $l_2$ regularizer.} On figures~\ref{diana_log_reg_omega_all}, ~\ref{diana_log_reg_omega_comm_cost_all},~\ref{cgd_log_reg_omega_all}, and~\ref{cgd_log_reg_omega_comm_cost_all} we present the plots for regularized logistic regression loss for all datasets mentioned in the previous section and for all 3 algorithms: GD, CGD and DIANA. Number of machines $n$ is set to 50 for DIANA and GD, and is set to 8 for CGD. Plots showing the dependence between loss and communication cost show that although in terms of communication rounds DIANA and CGD lose to classical GD, they are better with respect to communication cost, which is of more practical importance. We run all the algorithms with their best theoretical step-sizes.

\begin{figure}[h!]
\includegraphics[width=\linewidth]{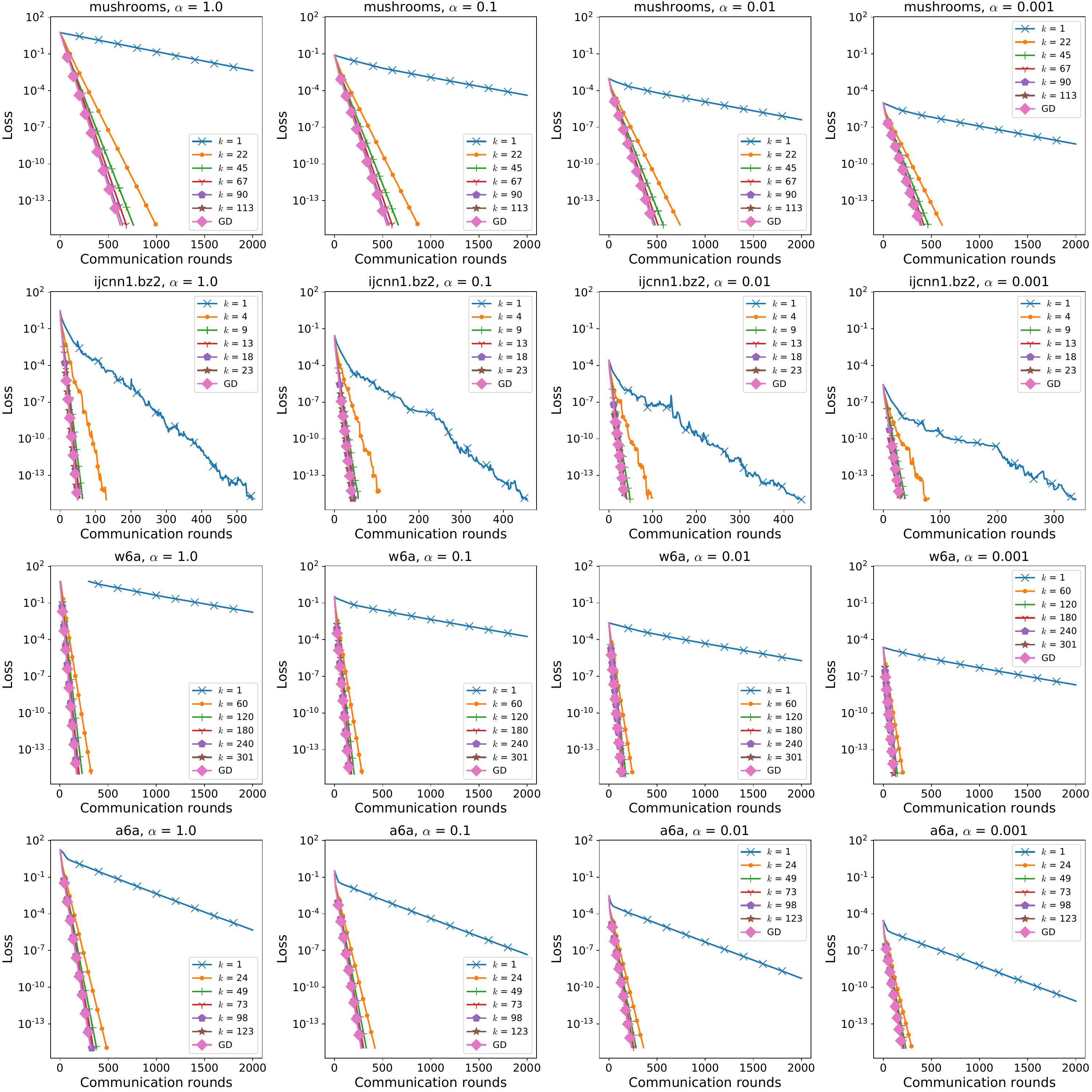}
\caption{Loss $f(x) - f^\ast$ vs. \# of communication rounds of DIANA and GD for logistic regression problem $l_2$ regularizer for four datasets, $k$  is a sparsification parameter of Random-$k$ compressor.}
\label{diana_log_reg_omega_all}
\end{figure}

\begin{figure}[h!]
\includegraphics[width=\linewidth]{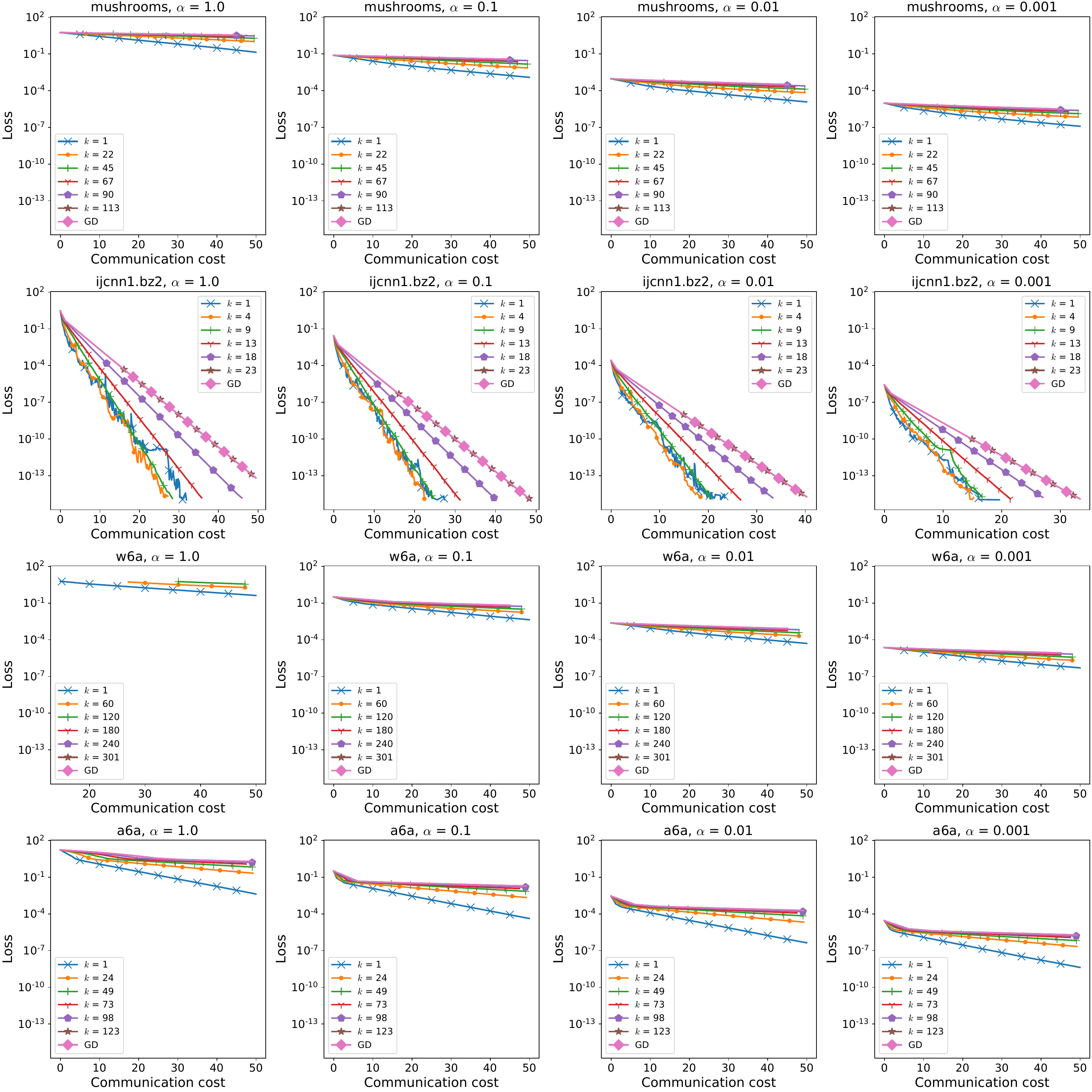}
\caption{Loss $f(x) - f^\ast$ vs. communication cost in unit of thousands of float numbers of DIANA and GD for logistic regression problem $l_2$ regularizer for four datasets, $k$  is a sparsification parameter of Random-$k$ compressor. }
\label{diana_log_reg_omega_comm_cost_all}
\end{figure}

\begin{figure}[h!]
\includegraphics[width=\linewidth]{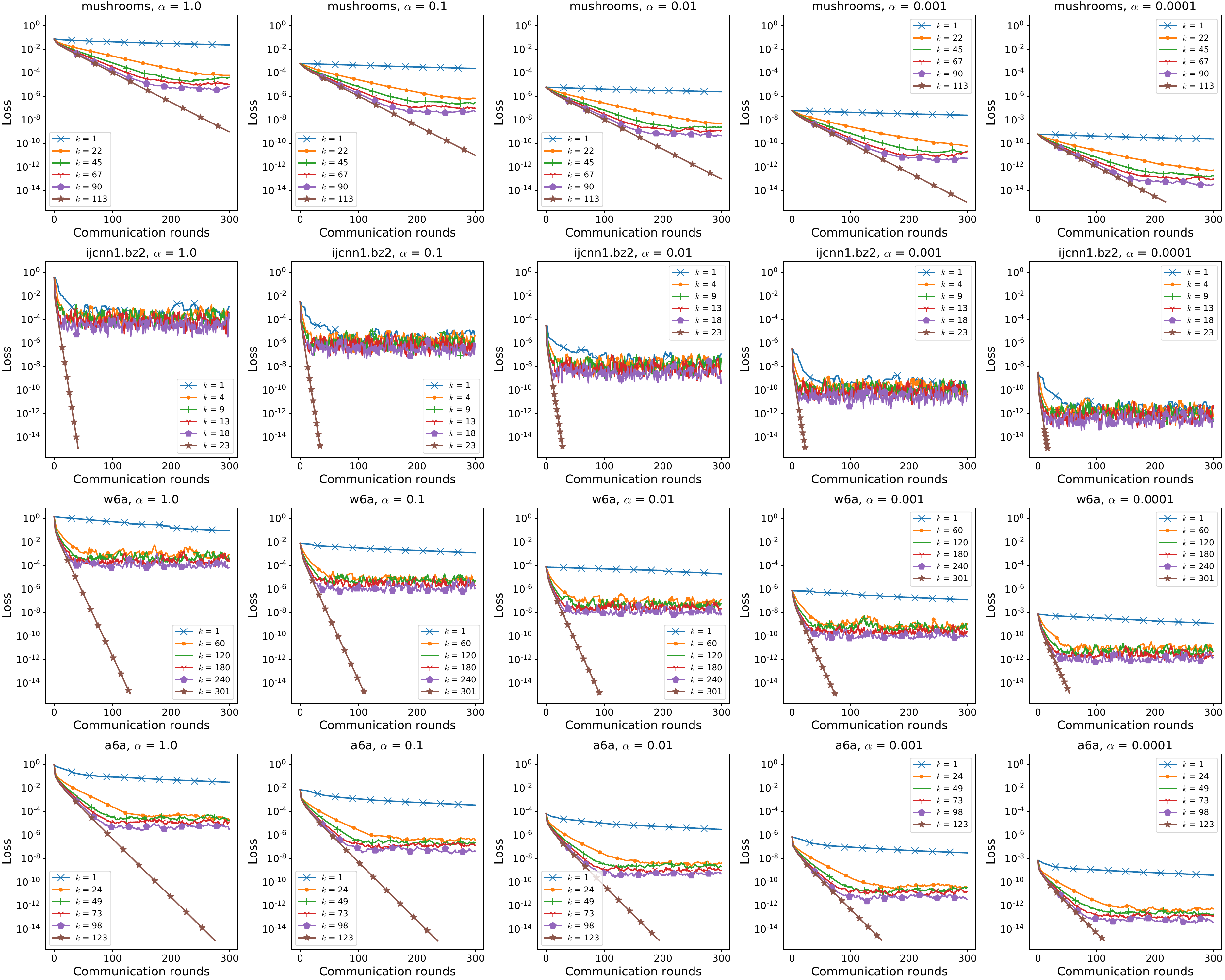}
\caption{Loss $f(x) - f^\ast$ vs. \# of communication rounds of CGD for logistic regression problem $l_2$ regularizer for four datasets, $k$  is a sparsification parameter of Random-$k$ compressor.}
\label{cgd_log_reg_omega_all}
\end{figure}

\begin{figure}[h!]
\includegraphics[width=\linewidth]{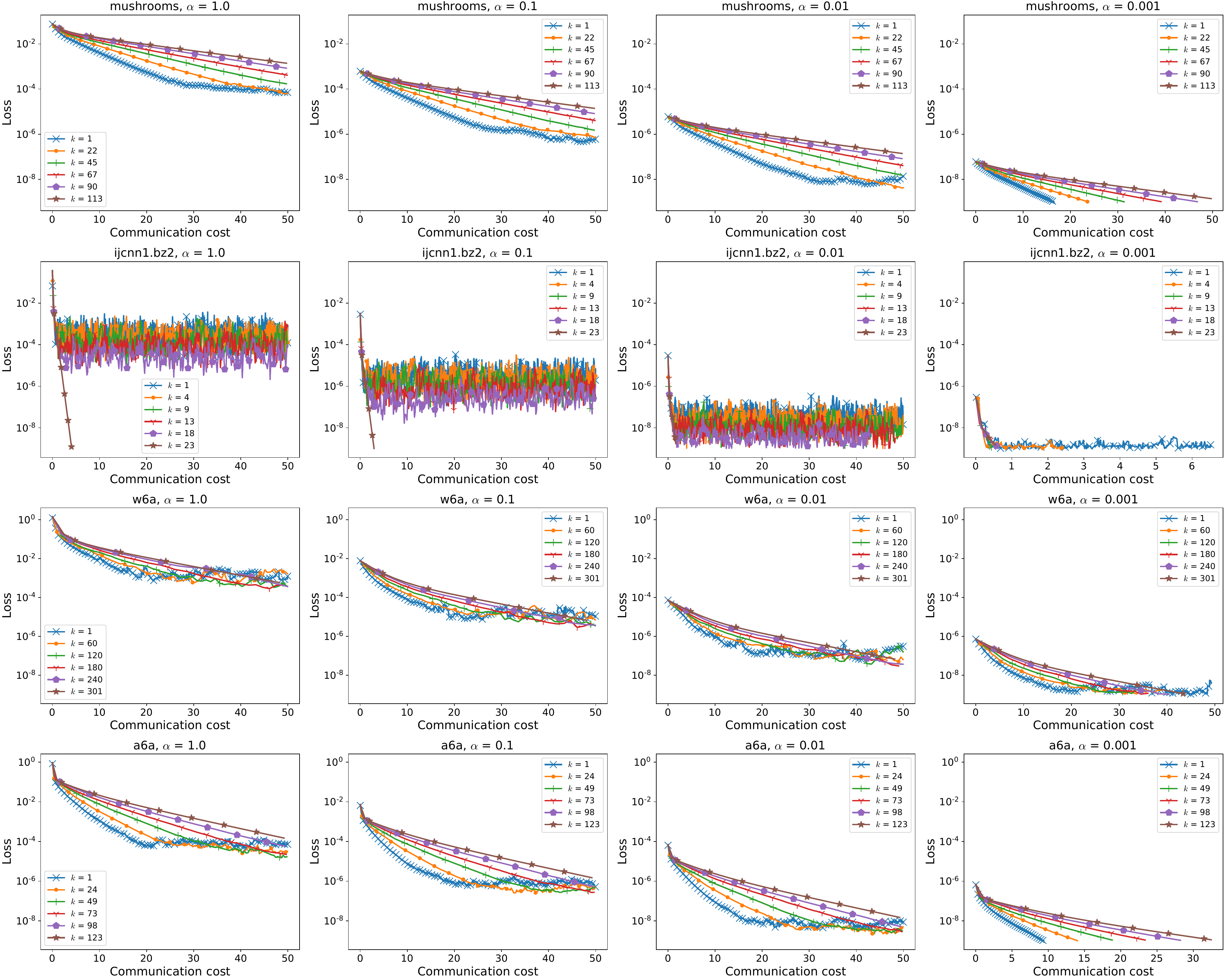}
\caption{Loss $f(x) - f^\ast$ vs. communication cost in unit of thousands of float numbers of CGD for logistic regression problem $l_2$ regularizer for four datasets, $k$  is a sparsification parameter of Random-$k$ compressor.}
\label{cgd_log_reg_omega_comm_cost_all}
\end{figure}

{\bfseries Unregularized logistic regression.} Setting $\lambda = 0$ in problem~\ref{fnct:logreg_l2}, we obtain unregularized logistic regression task, which is well-known to be a convex problem. Similarly to regularized case, we present four plots (~\ref{diana_log_reg_omega_all_cvx}, ~\ref{diana_log_reg_omega_comm_cost_all_cvx},~\ref{cgd_log_reg_omega_all_cvx}, and~\ref{cgd_log_reg_omega_comm_cost_all_cvx}) exhibiting convergence of three algorithms in terms of communication rounds and overall communication cost for four LIBSVM datasets. Number of machines $n$ is set to 100 for DIANA and GD, and to 8 for CGD. We run all the algorithms with their best theoretical step-sizes.

\begin{figure}[h!]
\includegraphics[width=\linewidth]{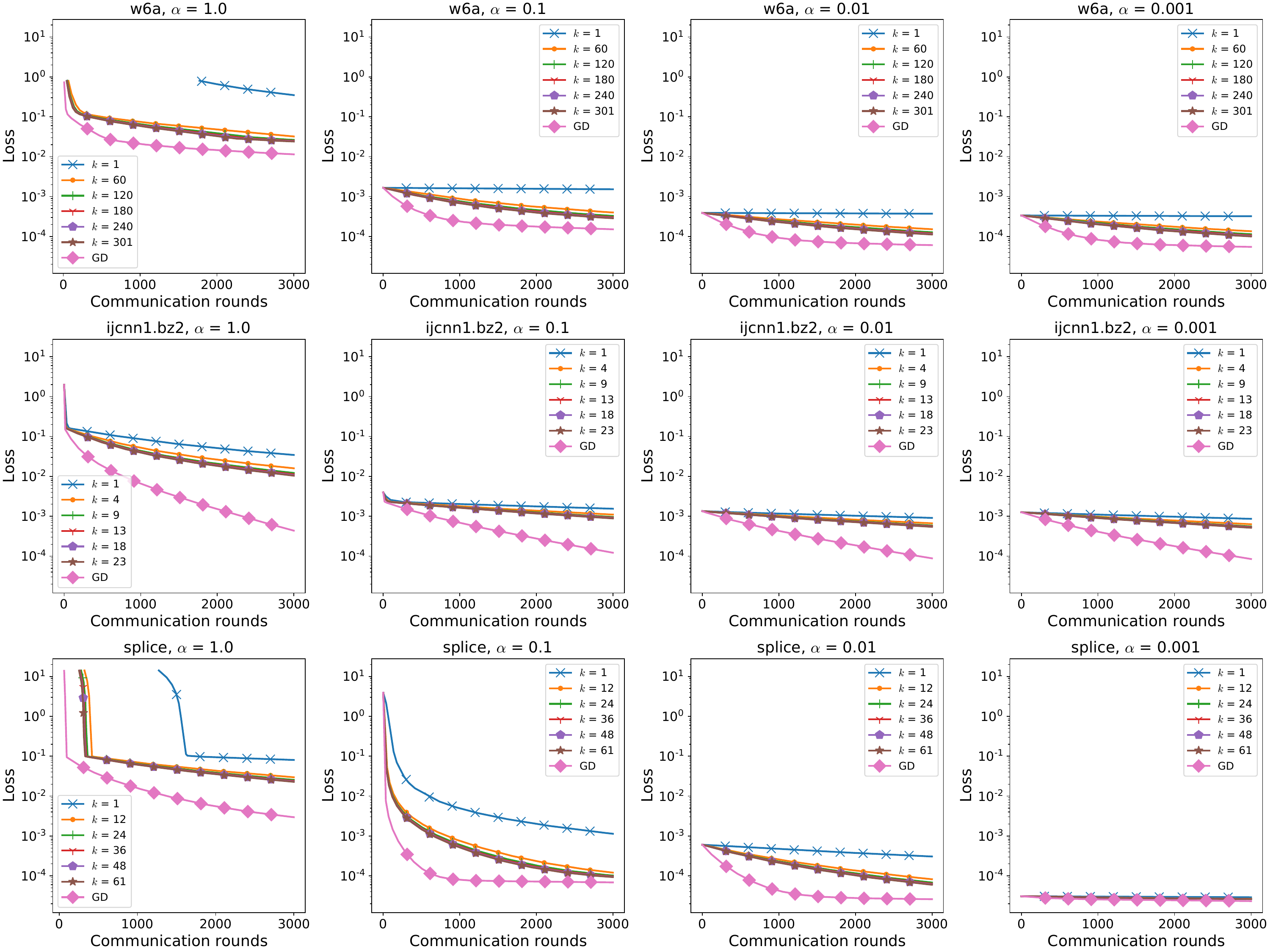}
\caption{Loss $f(x) - f^\ast$ vs. \# of communication rounds of DIANA and GD for unregularized logistic regression problem for four datasets, $k$  is a sparsification parameter of Random-$k$ compressor.}
\label{diana_log_reg_omega_all_cvx}
\end{figure}

\begin{figure}[h!]
\includegraphics[width=\linewidth]{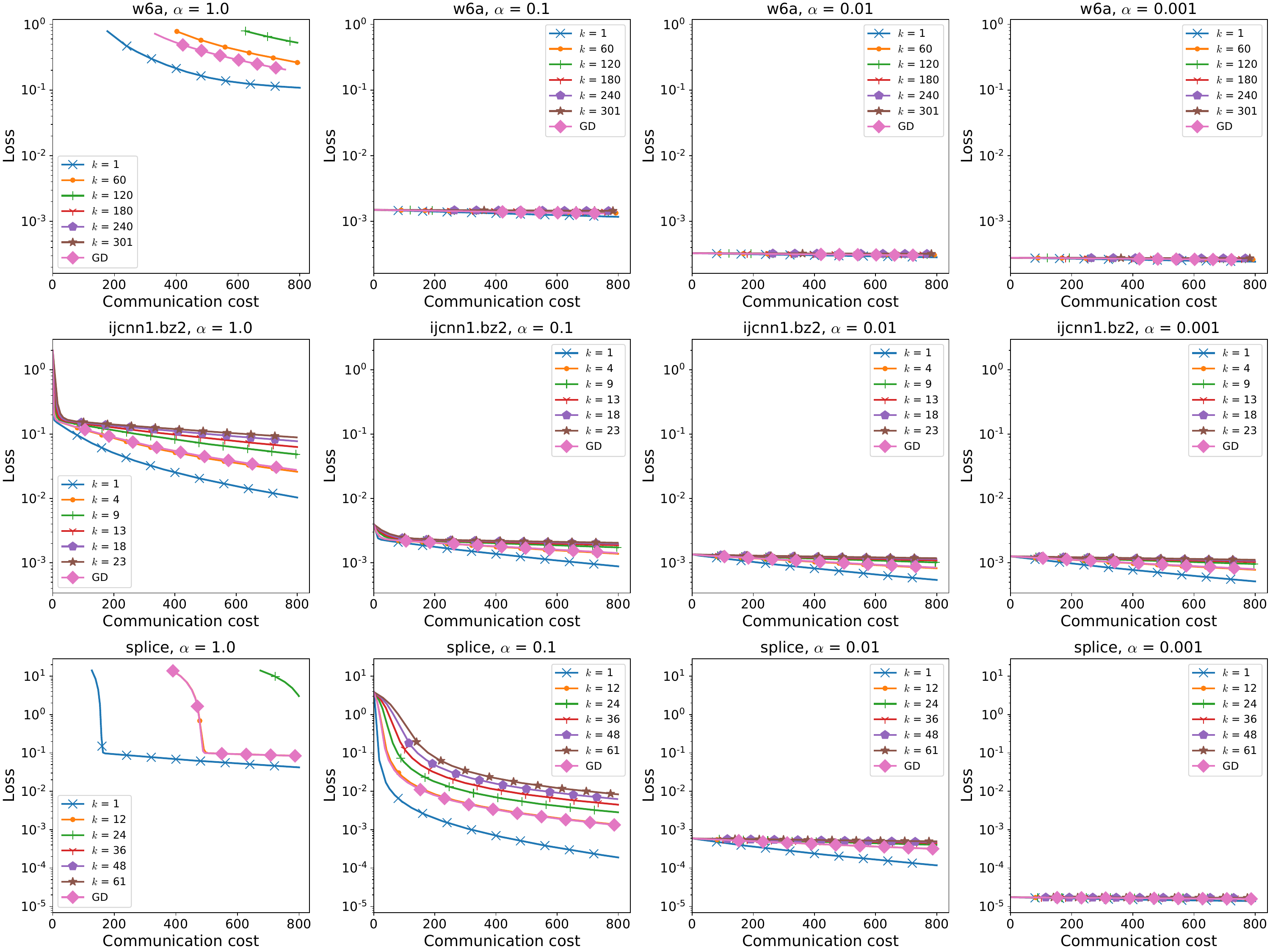}
\caption{Loss $f(x) - f^\ast$ vs. communication cost in unit of thousands of float numbers of DIANA and GD for unregularized logistic regression problem for four datasets, $k$  is a sparsification parameter of Random-$k$ compressor. }
\label{diana_log_reg_omega_comm_cost_all_cvx}
\end{figure}

\begin{figure}[h!]
\includegraphics[width=\linewidth]{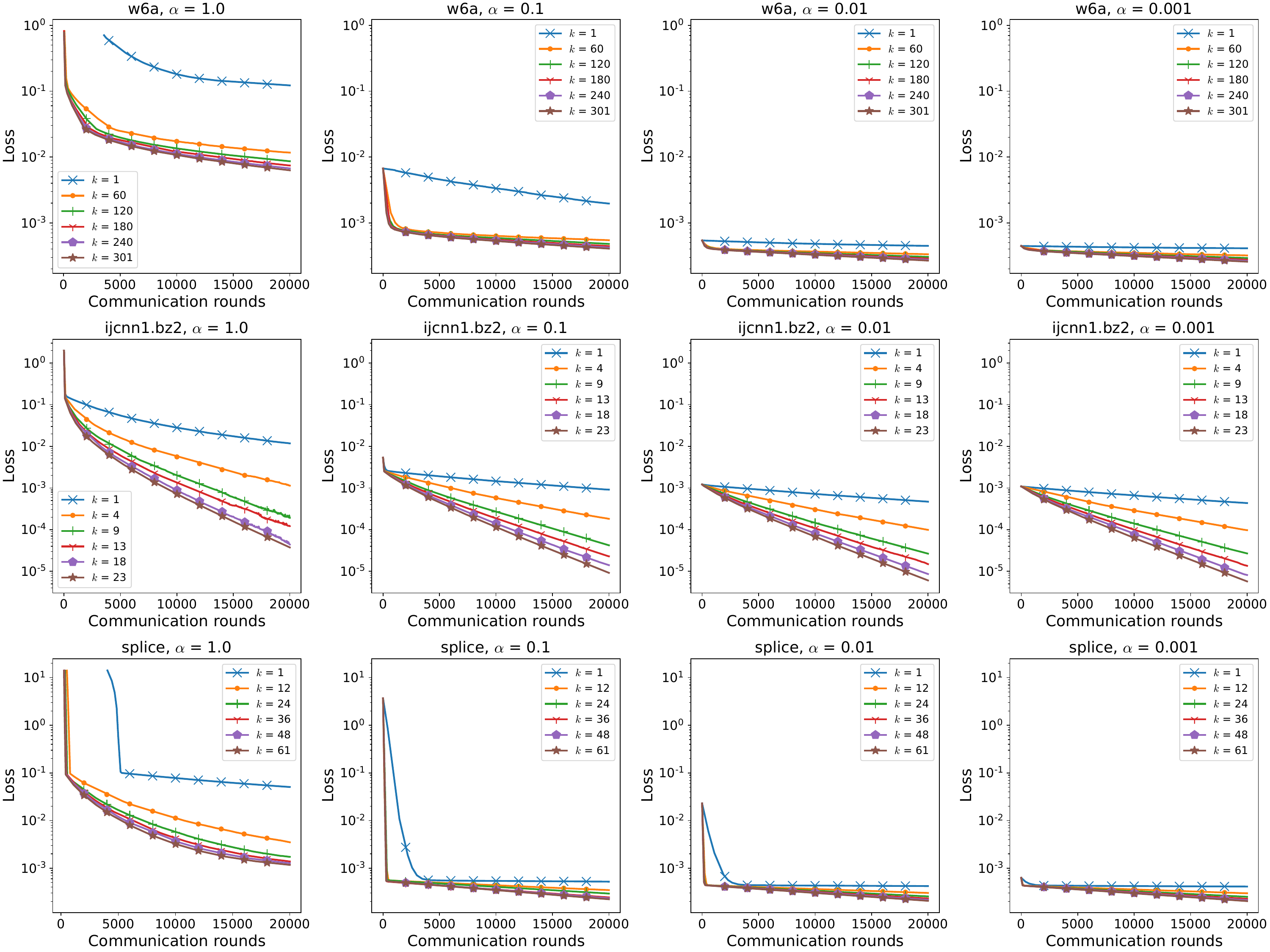}
\caption{Loss $f(x) - f^\ast$ vs. \# of communication rounds of CGD for unregularized logistic regression problem for four datasets, $k$  is a sparsification parameter of Random-$k$ compressor.}
\label{cgd_log_reg_omega_all_cvx}
\end{figure}

\begin{figure}[h!]
\includegraphics[width=\linewidth]{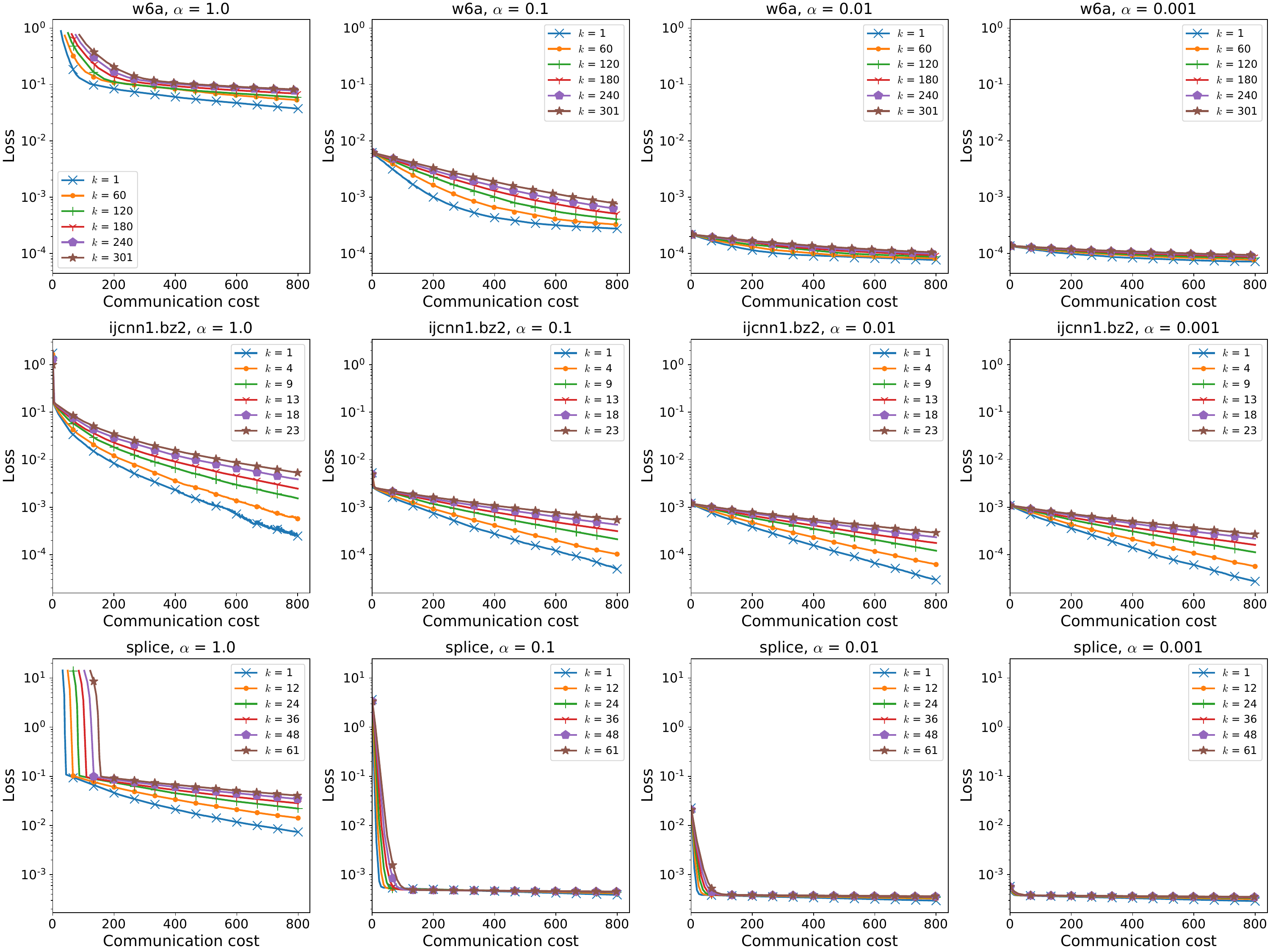}
\caption{Loss $f(x) - f^\ast$ vs. communication cost in unit of thousands of float numbers of CGD for unregularized logistic regression problem for four datasets, $k$  is a sparsification parameter of Random-$k$ compressor.}
\label{cgd_log_reg_omega_comm_cost_all_cvx}
\end{figure}

\end{document}